\documentclass[letterpaper]{article} 
\usepackage{aaai2026}  
\usepackage{times}  
\usepackage{helvet}  
\usepackage{courier}  
\usepackage[hyphens]{url}  
\usepackage{graphicx} 
\urlstyle{rm} 
\usepackage{natbib}  
\usepackage{caption} 
\frenchspacing  
\setlength{\pdfpagewidth}{8.5in} 
\setlength{\pdfpageheight}{11in} 
%

%
\usepackage{newfloat}
\usepackage{listings}
\usepackage{amsmath}
\usepackage{amssymb}
\usepackage{mathtools}
\usepackage{amsthm}
\usepackage{graphicx}
\usepackage{multirow}
\usepackage{array}
\usepackage{booktabs}
\usepackage{listings}
\usepackage{makecell}
\usepackage{subcaption}
\usepackage{comment}
\usepackage{booktabs}
\usepackage{bm}
\usepackage{siunitx}
\usepackage{amsmath}
\DeclareMathOperator*{\argmin}{arg\,min}
\usepackage{enumitem}
\usepackage{caption}
\usepackage{subcaption}
\usepackage{placeins}
\usepackage{algorithm}      
\usepackage{algpseudocode}  
\newtheorem{theorem}{Theorem} 
\theoremstyle{definition}
\newtheorem{definition}{Definition}
\usepackage{siunitx}
\usepackage{xr-hyper}

\usepackage{xr}
\externaldocument{appendix}
\externaldocument{anonymous-submission-latex-2026}

\theoremstyle{definition}
\newtheorem{lemma}{Lemma}
\DeclareCaptionStyle{ruled}{labelfont=normalfont,labelsep=colon,strut=off} 
\lstset{%
	basicstyle={\footnotesize\ttfamily},
	numbers=left,numberstyle=\footnotesize,xleftmargin=2em,
	aboveskip=0pt,belowskip=0pt,%
	showstringspaces=false,tabsize=2,breaklines=true}
\floatstyle{ruled}
\newfloat{listing}{tb}{lst}{}
\floatname{listing}{Listing}
%
\pdfinfo{
/TemplateVersion (2026.1)
}

\makeatletter
\@ifundefined{isChecklistMainFile}{
	\newif\ifreproStandalone
	\reproStandalonetrue
}{
	\newif\ifreproStandalone
	\reproStandalonefalse
}
\makeatother

\ifreproStandalone

\usepackage{times}
\usepackage{helvet}
\usepackage{courier}
\usepackage{xcolor}
\frenchspacing

\setcounter{secnumdepth}{0} 

%


\title{Beyond MSE: Ordinal Cross-Entropy for Probabilistic Time Series Forecasting}
\author{
	Jieting Wang,
	Huimei Shi, 
	Feijiang Li,
	Xiaolei Shang\\
}
\affiliations{
	\textsuperscript{\rm 1}Institute of Big Data Science and Industry, Shanxi University \\
	\textsuperscript{\rm 2}Key Laboratory of Evolutionary Science Intelligence of Shanxi Province, Taiyuan, China \\
}

\usepackage{bibentry}

\begin{document}

\maketitle

\begin{abstract}
Time series forecasting is an important task that involves analyzing temporal dependencies and underlying patterns (such as trends, cyclicality, and seasonality) in historical data to predict future values or trends. Current deep learning-based forecasting models primarily employ Mean Squared Error (MSE) loss functions for regression modeling. Despite enabling direct value prediction, this method offers no uncertainty estimation and exhibits poor outlier robustness.
To address these limitations, we propose OCE-TS, a novel ordinal classification approach for time series forecasting that replaces MSE with Ordinal Cross-Entropy (OCE) loss, preserving prediction order while quantifying uncertainty through probability output. Specifically, OCE-TS begins by discretizing observed values into ordered intervals and deriving their probabilities via a parametric distribution as supervision signals. Using a simple linear model, we then predict probability distributions for each timestep. The OCE loss is computed between the cumulative distributions of predicted and ground-truth probabilities, explicitly preserving ordinal relationships among forecasted values.
Through theoretical analysis using influence functions, we establish that cross-entropy (CE) loss exhibits superior stability and outlier robustness compared to MSE loss.
Empirically, we compared OCE-TS with five baseline models—Autoformer, DLinear, iTransformer, TimeXer, and TimeBridge—on seven public time series datasets. Using MSE and Mean Absolute Error (MAE) as evaluation metrics, the results demonstrate that OCE-TS consistently outperforms benchmark models.
The codeis publicly available at: https://github.com/Shi-hm/OCE-TS.
\end{abstract}


\section{Introduction}
Long-term time series forecasting plays a vital role in practical applications such as industrial monitoring, traffic management, and financial risk control. Traditional forecasting methods including Autoregressive Integrated Moving Average (ARIMA) \cite{box2015time}, Error-Trend-Seasonal (ETS) models \cite{hyndman2008forecasting}, and Prophet \cite{taylor2018forecasting} typically require manual feature engineering and make certain assumptions about data patterns.

Recent advances in deep learning have significantly improved time series forecasting performance \cite{li2019enhancing}. Modern architectures can automatically learn complex temporal features and long-term dependencies, achieving state-of-the-art performance in various forecasting tasks \cite{lim2021temporal}. Particularly for long-term forecasting tasks, current research demonstrates that Transformer-based models \cite{zhou2021informer,wu2021autoformer}, modern Multilayer Perceptron (MLP) architectures \cite{zeng2023transformers}, and enhanced Long Short-Term Memory (LSTM) networks \cite{greff2017lstm} can all effectively extract multi-scale features from time series through end-to-end training. These developments have been further validated in comprehensive benchmarks \cite{zhou2022fedformer}, establishing neural methods as the new paradigm in time series forecasting.

Nevertheless, the predominant use of MSE loss in neural network-based forecasting introduces fundamental constraints: the regression framework provides no inherent uncertainty quantification, and the quadratic loss term amplifies the influence of anomalous observations.

Researchers have developed specialized approaches that each target specific aspects of the problem in regression. For uncertainty quantification, probabilistic methods like Gaussian Processes~\cite{seeger2004gaussian} offer principled Bayesian uncertainty estimates but struggle with computational scalability, while quantile regression~\cite{koenker2001quantile} provides prediction intervals but may violate natural ordering constraints. Deep generative models~\cite{kingma2013auto} can capture complex distributions but require substantial training data and careful tuning.
On the robustness front, both Huber loss~\cite{huber1992robust} and correntropy-based methods~\cite{liu2007correntropy} can effectively mitigate outlier sensitivity while lacking probabilistic uncertainty quantification. The adaptive weighting scheme~\cite{kendall2018multi}, despite enabling dynamic sample weighting, requires joint optimization of multiple interacting hyperparameters that critically determine its performance.
However, no existing method simultaneously addresses uncertainty quantification and robustness.

To this end, we propose OCE-TS as an integrated solution combining probabilistic uncertainty quantification with robust outlier handling while preserving ordinal relationships and maintaining computational efficiency. Since time series values exhibit meaningful ordinal structure, the proposed OCE-TS method reformulates the regression task as an ordinal classification problem.
Ordinal classification~\cite{gutierrez2015ordinal} is a specialized machine learning task designed to handle categorical labels with inherent ordinal relationships. Its core objective is to predict discrete yet ordered categories (e.g., user ratings 1-5 stars) while preserving the sequential relationships between categories. Through ordinal classification, we obtain probabilistic outputs for each predicted value, naturally quantifying prediction uncertainty.
This approach is also inspired by the concepts of stability and sample-wise reliability, which have been further explored in recent studies \cite{li2023fuzzy, wang2023rss,   wang2020learning,wang2025stabilizing}, helping to enhance robustness against noise and irregular patterns.

To optimize the ordinal classification model, OCE-TS uses ordinal cross-entropy (OCE) loss \cite{Niu2016ordinal,hu2010information,wang2022generalization}.
The OCE loss modifies conventional cross-entropy through order-preserving mechanisms. By evaluating discrepancies in cumulative distribution functions (CDFs) of predicted versus true labels, it guarantees that predictions adhere to the inherent hierarchy of ordinal categories.

The main contributions of this paper are as follows:
\begin{itemize}
	\item We propose OCE-TS, a novel approach that combines linear model with ordinal cross-entropy to solve time series value prediction problems while preserving temporal ordering constraints.
	\item We compare MSE (regression) and cross-entropy (classification) loss stability via influence function analysis, demonstrating  that classification achieves superior stability when predicted probabilities are uniformly distributed and feature representations exhibit isotropic property;
	\item We verify OCE-TS's effectiveness across multiple benchmarks, which provides empirical guarantees for a new learning paradigm for time series forecasting.
\end{itemize}

Proofs and supplementary experiments are provided in the Appendix.

\section{Related Work}
Building upon ordinal classification for time series forecasting, our work intersects with three key domains: distributional regression (already discussed in Introduction), time series forecast, and ordinal classification. We now survey the latter two research areas.
\subsection{Long-term time series forecasting}

Long-term time series forecasting remains highly challenging due to extended prediction horizons, error accumulation, and distribution drift.
Moreover, stability and sample-wise reliability \cite{li2019clustering, wang2022generalization} offer useful insights for improving forecasting robustness.
Recent research primarily focuses on four key directions.
First, for network architecture innovation, research emphasizes enhanced long-term dependency and periodicity modeling, Transformers \cite{vaswani2017attention} and their variants (e.g., Informer \cite{zhou2021informer}, Autoformer \cite{wu2021autoformer}, FEDformer \cite{zhou2022fedformer}, PatchTST \cite{nie2022time}) have become predominant. Meanwhile, lightweight models (e.g., DLinear \cite{zeng2023transformers}, LightTS \cite{campos2023lightts}) achieve comparable performance with lower computational costs in specific scenarios.
Second, for time series representation learning and knowledge distillation, self-supervised learning (e.g., TS2Vec \cite{yue2022ts2vec}, TNC \cite{tonekaboni2021unsupervised}) and knowledge distillation methods (e.g., TCN-Distill \cite{gao2021distilling}, TimeNet) are leveraged to enhance generalization capability and robustness while reducing dependence on labeled data.
Third, for variable and feature hierarchical modeling, methods explicitly capture inter-variable dependencies and feature importance, primarily through cross-variable attention mechanisms (e.g., MTGNN \cite{wu2020connecting}, ASTGCN \cite{Guo2019AttentionBasedST}), learnable variable selection modules, and adaptive feature transformations to improve multivariate data modeling.
Finally, for large models and zero-shot inference, research explores large language models (LLMs \cite{gruver2023llmtime}) for contextual learning (e.g., Time-LLM \cite{jin2023time}), enabling zero-shot forecasting and cross-domain transfer through time series tokenization and prompt tuning techniques.

\subsection{Ordinal Classification Task}
Ordinal classification faces challenges including strong label-order dependency, ambiguous category intervals, and limited model adaptability. Existing research primarily focuses on three directions. First, loss function design, where order-aware loss functions (e.g., threshold-based \cite{Niu2016ordinal} and margin-based \cite{pitawela2025cloc} losses) are introduced to explicitly model ordinal relationships; second, model architecture improvements, including approaches like Support Vector Ordinal Regression (SVOR) \cite{Gu2015Incremental} and COrrelation ALignment (CORAL) \cite{Shi2023RankConsistent}, which address label ordering via explicit threshold partitioning and implicit ordinal encoding, while attention mechanisms \cite{vaswani2017attention} and Bayesian models \cite{chu2005gaussian} demonstrate strong performance in capturing long-range dependencies and modeling uncertainty; and finally, probabilistic ordinal modeling, which adopts frameworks like ordered Probit/Logit \cite{liu2019probabilistic} and cumulative link models \cite{gutierrez2015ordinal} to characterize the conditional probability distribution over ordered categories, incorporating monotonicity constraints to reflect the inherent ordinal structure.
Recent work on stability and reliability in multi-view learning by \cite{Li2025} further enhances the robustness of ordinal classification methods.

\section{Time Series Forecasting Problem}
Time series forecasting aims to predict future values based on historical observations. Given a lookback window $\boldsymbol{X}_t = (X_t, X_{t-1}, \dots, X_{t-w+1})^\top \in \mathbb{R}^w$ of $w$ historical observations, we predict the $H$-step future trajectory $\boldsymbol{Y} = (Y_{t+1}, \dots, Y_{t+H})^\top \in \mathbb{R}^H$ through
$\hat{\boldsymbol{Y}} = g(\boldsymbol{X}_t)$,
where $g:\mathbb{R}^w \to \mathbb{R}^H$ is the forecasting function. For multivariate series with $M$-dimensional observations $\boldsymbol{x}_t \in \mathbb{R}^M$, this extends to predicting $\boldsymbol{Y} = \{\boldsymbol{y}_{t+1}, \dots, \boldsymbol{y}_{t+H}\} \in \mathbb{R}^{H \times M}$ from input $\boldsymbol{X}_t = \{\boldsymbol{x}_t, \boldsymbol{x}_{t-1}, \dots, \boldsymbol{x}_{t-w+1}\} \in \mathbb{R}^{w \times M}$.

The forecasting quality is evaluated via:
\begin{equation}
	\text{MSE} = \frac{1}{H}\|\boldsymbol{Y} - \hat{\boldsymbol{Y}}\|_F^2 = \frac{1}{H}\sum_{j=1}^H \|\boldsymbol{y}_{t+j} - \hat{\boldsymbol{y}}_{t+j}\|_2^2.
\end{equation}

%
%

Minimizing MSE corresponds to fitting the conditional expectation of the true data, ensuring unbiased predictions. However, MSE only supports point estimates and fails to capture uncertainty, motivating reformulations of the forecasting task as a probability prediction problem.

\section{Regression-to-Classification Framework}

The core idea of transforming regression into classification involves shifting from predicting point estimates $\hat{Y}_{t+j}$ through MSE minimization to learning complete predictive distributions $\bm{\pi}(\hat{Y}_{t+j})$, where $\hat{Y}_{t+j} \in \mathbb{R}$ represents the original continuous predicted variable and $\bm{\pi}(\hat{Y}_{t+j}) \in \mathbb{R}^K$ denotes a discrete probability distribution vector satisfying $\sum_{k=1}^K \pi_k = 1$.
The probabilistic regression framework transforms continuous prediction into a distribution classification problem through three key operations.

First, the true distribution is constructed by discretizing the ground truth $Y_{t+j} \in \mathbb{R}$ into bin probabilities $p_k = F_Y(u_k) - F_Y(\ell_k)$ for $k=1,...,K$, where $F_Y$ is the empirical cumulative distribution function (CDF) and $\mathcal{B}_k = [\ell_k, u_k)$ forms a partition of the target value space.

The predictive model $g_\theta$ with parameters $\theta$ then learns to output a probability distribution $\bm{\pi}(\bm{X}_t;\theta) = (\pi_1,...,\pi_K)$, where each $\pi_k = \mathbb{P}(\hat{Y}_{t+j} \in \mathcal{B}_k|\bm{X}_t)$ represents the predicted probability for bin $k$. This model is trained by minimizing the distributional discrepancy $\mathcal{L}(\theta) = D\big(\bm{p}(Y_{t+j}), \bm{\pi}(\bm{X}_t;\theta)\big)$ through the optimization objective $\theta^* = \argmin_{\theta} \mathbb{E}_{\bm{X}_t,Y_{t+j}} \left[ D\big(\bm{p}(Y_{t+j}), \bm{\pi}(\bm{X}_t;\theta)\big) \right]$, where $D$ measures the divergence (e.g., cross-entropy) between the true distribution $\bm{p}$ and predicted distribution $\bm{\pi}$.

Finally, continuous point estimates $\hat{Y}_{t+j}$ are recovered via  $\hat{Y}_{t+j} = \mathcal{G}\big({(v_k, \pi_k(\bm{X}_t;
	\theta^*))}_{k=1}^K\big)$, where $\mathcal{G}$ transforms the discrete probability distribution back to the continuous space using representative values $v_k$ for each bin $\mathcal{B}_k$.
%
%
%
%

\section{Ordinal Cross-Entropy Loss}
This section presents the definition of OCE, followed by a comparative example illustrating the fundamental difference with standard cross-entropy (CE) and a novel influence function analysis that contrasts their sensitivity characteristics with MSE. See Appendix~\ref{app:CE} for the definition of CE.
\begin{definition}[Ordinal Cross-Entropy Loss] \cite{hu2010information}
	Let $Y \in \{1,\dots,K\}$ be the true class label and $\hat{Y}$ the predicted class. Given the predicted probability distribution $\bm{q} = [q_1,\dots,q_K]$ where $q_k = \mathbb{P}(\hat{Y} = k)$ and the true probability distribution $\bm{p} = [p_1,\dots,p_K]$. We define the cumulative probabilities as:
	$	\mathbb{P}_{\text{pred}}(\hat{Y} \leq k) = \sum_{i=1}^k q_i ,$
	$	\mathbb{P}_{\text{pred}}(\hat{Y} > k) = 1 - \mathbb{P}_{\text{pred}}(\hat{Y} \leq k).$
	Let $\mathbb{P}_{\text{true}}(Y \leq k)$ represent the ground-truth cumulative distribution.
	For hard labels: $\mathbb{P}_{\text{true}}(Y \leq k) = \mathbb{I}\{Y \leq k\}$, where $\mathbb{I}$ is the indictor function.
	The ordinal cross-entropy loss is then:
	\begin{equation}\label{eq:oce_loss}
		\begin{split}
			\mathcal{L}_{\text{OCE}}(Y, \hat{Y}) = &-\sum_{k=1}^{K-1} \Bigl[
			\mathbb{P}_{\text{true}}(Y \leq k) \log \mathbb{P}_{\text{pred}}(\hat{Y} \leq k) \\
			&+ \mathbb{P}_{\text{true}}(Y > k) \log \mathbb{P}_{\text{pred}}(\hat{Y} > k) \Bigr].
		\end{split}
	\end{equation}
\end{definition}

\subsection{Comparative Analysis of OCE and CE}
Table~\ref{tab:loss_comparison} compares standard cross-entropy and ordinal cross-entropy (OCE) performance across various prediction scenarios. In Case 1, where the true distribution is [0.8,0.1,0.1], although predicted distribution B achieves a lower CE value (0.449 vs 0.518), its OCE loss is higher (1.528 vs 1.396). This indicates that CE may underestimate the ordinal errors in predicted distribution B. Similarly, in Case 2 with a true distribution of [0.2,0.1,0.7], while predicted distribution A shows a slightly better CE value (0.604 vs 0.624), its OCE loss (2.029) is higher, further validating the limitations of CE in evaluating ordinal relationships.

\begin{table}[!htbp]
	\centering
	\small
	\setlength{\tabcolsep}{6pt}  
	\renewcommand{\arraystretch}{1.2}  
	\begin{tabular}{@{}llccc@{}}
		\toprule
		\textbf{Scenario} & \textbf{Type} & \textbf{Distribution} $\bm{p}$ & \textbf{CE} & \textbf{OCE} \\
		\midrule
		\multirow{3}{*}{\textbf{Case 1}}
		& True & $[0.8, 0.1, 0.1]$ & -- & -- \\
		& Pred A & $[0.3, 0.5, 0.2]$ & 0.518 & 1.396 \\
		& Pred B & $[0.4, 0.1, 0.5]$ & 0.449 & 1.528 \\
		\cmidrule(r){1-5}
		
		\multirow{3}{*}{\textbf{Case 2}}
		& True & $[0.2, 0.1, 0.7]$ & -- & -- \\
		& Pred A & $[0.6, 0.2, 0.2]$ & 0.604 & 2.029 \\
		& Pred B & $[0.3, 0.5, 0.2]$ & 0.624 & 1.720 \\
		\bottomrule
	\end{tabular}
	\caption{Comparison of CE and OCE}
	\label{tab:loss_comparison}
\end{table}

Notably, the OCE loss shows greater sensitivity to deviations in class ordering between predicted and true distributions. Therefore, it is better suited for classification tasks emphasizing ordinal relationships, providing a more accurate reflection of the model’s ordering performance.

\subsection{Comparative Analysis of OCE and MSE}
This subsection presents a comparative influence function analysis of MSE and CE losses, focusing on their differential sensitivity to feature scaling characteristics and prediction residuals. The analysis framework applies uniformly to all cross-entropy-type losses (including both CE and OCE), as they share identical functional forms - differing only in their probability computation methods (plain softmax for CE versus cumulative probabilities for OCE).

Influence functions are widely used in robustness analysis, outlier detection, and training sample evaluation.Existing research  focuses on computing IF~\cite{pmlr-v70-koh17a,Schioppa2022Scaling} and using them for sample selection~\cite{Pruthi2020Estimating,Basu2020Influence}, whereas this paper analyzes the stability of MSE and CE losses through influence functions. We analyze and compare MSE's and CE's influence functions via simple linear models.
We consider the standard linear regression model $y = \bm{x}^\top \bm{\theta} + \epsilon$, where $\bm{x} \in \mathbb{R}^d$ is the $d$-dimensional feature vector, $\bm{\theta} \in \mathbb{R}^d$ is the $d$-dimensional parameter vector, $\epsilon \in \mathbb{R}$ is the scalar noise term, and $y \in \mathbb{R}$ is the scalar response.
The mean squared error loss function for a data point $\bm{z} = (\bm{x}, y)$ is given by: $L(\bm{z}, \bm{\theta}) = \frac{1}{2}(y - \bm{x}^\top \bm{\theta})^2 \in \mathbb{R}$.

Consider a $K$-class classification problem with a parametric softmax model. Let $\bm{x} \in \mathbb{R}^d$ be an input feature vector and $y \in \{1,...,K\}$ its corresponding class label. The model's predicted probabilities are given by:
\begin{equation}
	p(y|\bm{x}; \bm{\beta}) = \sigma(\bm{x}^\top \bm{\beta})_y = \frac{e^{\bm{x}^\top \bm{\beta}_y}}{\sum_{k=1}^K e^{\bm{x}^\top \bm{\beta}_k}},
\end{equation}
where $\bm{\beta} = [\bm{\beta}_1,...,\bm{\beta}_K] \in \mathbb{R}^{d \times K}$ contains the model parameters. The cross-entropy loss for a single observation $\bm{z} = (\bm{x}, y)$ is:
$L(\bm{z}, \bm{\beta}) = -\sum_{k=1}^K \mathbb{I}_{y=k} \log \sigma(\bm{x}^\top \bm{\beta})_k$.

Influence Function (IF) measures the sensitivity of model parameters or outputs to a single training sample. Its basic form is given by:
\begin{equation}
	\text{IF}(\bm{z}) = -\bm{H}_{\bm{\theta}}^{-1} \nabla_{\bm{\theta}} \mathcal{L}(\bm{\theta}, \bm{z}),
\end{equation}
where $\bm{H}_{\bm{\theta}}$ denotes the Hessian matrix evaluated at the optimal parameters $\bm{\theta}$, and $\nabla_{\bm{\theta}} \mathcal{L}$ represents the gradient of the loss function with respect to the model parameters.

The influence function for MSE can be derived as:
\begin{equation}
	\text{IF}_{\text{MSE}}(\bm{z};\bm{\theta}) = \left(\mathbb{E}[\bm{x}\bm{x}^\top]\right)^{-1}\bm{x}(y - \bm{x}^\top\bm{\theta}) \in \mathbb{R}^d,
	\label{eq:MSE}
\end{equation}
which quantifies the $d$-dimensional effect of individual data points on parameter estimates.

The influence function for CE can be derived as:
\begin{equation}
	\text{IF}_{\text{CE}}(\bm{z};\bm{\beta}) = -\left( \mathbb{E}\left[ \bm{x}\bm{x}^\top \otimes \bm{P} \right] \right)^{-1} (\bm{x}  \otimes  \left( \sigma(\bm{x}^\top \bm{\beta}) - \mathbf{e}_y \right) ),
	\label{eq:CE}
\end{equation}
where $\bm{P} = \text{diag}(\sigma(\bm{x}^\top \bm{\beta})) - \sigma(\bm{x}^\top \bm{\beta})\sigma(\bm{x}^\top \bm{\beta})^\top$ is the softmax output covariance matrix ($K\times K$), $\mathbf{e}_y \in \mathbb{R}^K$ is the one-hot encoded true label, $\otimes$ represents the Kronecker product.

As shown in Equations~(\ref{eq:MSE}) and~(\ref{eq:CE}), the cross-entropy loss alleviates the influence of the covariance matrix and the residuals through transformations involving the \( \bm{P} \) matrix and the sigmoid function, respectively. Below, we present the specific ratio between the two effects:
\begin{theorem}[Influence Function Growth Rate Comparison]
	\label{IFratio}
	Suppose that the covariance matrix $\bm{\Sigma}_X = \mathbb{E}[\bm{x}\bm{x}^\top]$ is positive definite with $\lambda_{\min}(\bm{\Sigma}_X) > 0$ and
	the expected Softmax matrix $\bm{P}$ is positive definite with $\lambda_{\min}(\bm{P}) > 0$.
	Assume the input features have finite second moments ($\mathbb{E}[\|\bm{x}\|_2^2] < \infty$) and the sample size exceeds the feature dimension ($n > d$).
	For non-degenerate predictions where the MSE residual is non-zero ($y - \bm{x}^\top\bm{\theta} \neq 0$) and the CE probability deviation is non-trivial ($\bm{\sigma} - \bm{e}_y \neq \bm{0}$), the influence function ratio $R = \|\mathrm{IF}_{\mathrm{CE}}\|_2/\|\mathrm{IF}_{\mathrm{MSE}}\|_2$ satisfies the two-sided bound:
	\begin{equation}
		\frac{\|\bm{\sigma} - \bm{e}_y\|_2}{\kappa_2(\bm{\Sigma}_X)\lambda_{\max}(\bm{P})|y - \bm{x}^\top\bm{\theta}|}
		\leq R \leq
		\frac{\sqrt{2}\kappa_2(\bm{\Sigma}_X)}{\lambda_{\min}(\bm{P})|y - \bm{x}^\top\bm{\theta}|},
	\end{equation}
	where $\kappa_2(\bm{\Sigma}_X) = \|\bm{\Sigma}_X\|_2\|\bm{\Sigma}_X^{-1}\|_2$ represents the spectral condition number.
\end{theorem}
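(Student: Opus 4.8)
The strategy is to split each influence-function norm into a common feature-geometry factor times a loss-specific residual factor, bound every piece with elementary operator-norm inequalities, and then divide. First I would fix notation: write $\bm{\Sigma}_X = \mathbb{E}[\bm{x}\bm{x}^\top]$, $r := y - \bm{x}^\top\bm{\theta}$, $\bm{\delta} := \bm{\sigma} - \bm{e}_y$ with $\bm{\sigma} = \sigma(\bm{x}^\top\bm{\beta})$, and observe that the finite-second-moment and $n>d$ assumptions make $\bm{\Sigma}_X$ and the CE Hessian positive definite, so the influence functions in Equations~(\ref{eq:MSE}) and~(\ref{eq:CE}) are well-defined. The structural fact I would exploit is that the CE Hessian has Kronecker form $\bm{\Sigma}_X \otimes \bm{P}$, whose spectrum is $\{\lambda_i(\bm{\Sigma}_X)\lambda_j(\bm{P})\}_{i,j}$, so $\lambda_{\min} = \lambda_{\min}(\bm{\Sigma}_X)\lambda_{\min}(\bm{P})$ and $\lambda_{\max} = \lambda_{\max}(\bm{\Sigma}_X)\lambda_{\max}(\bm{P})$; combined with $(\bm{A}\otimes\bm{B})^{-1} = \bm{A}^{-1}\otimes\bm{B}^{-1}$ and the norm identity $\|\bm{u}\otimes\bm{v}\|_2 = \|\bm{u}\|_2\|\bm{v}\|_2$, this lets me treat the Kronecker-structured term essentially scalar-by-scalar.

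Next I would invoke the two-sided estimate $\|\bm{b}\|_2/\lambda_{\max}(\bm{A}) \le \|\bm{A}^{-1}\bm{b}\|_2 \le \|\bm{b}\|_2/\lambda_{\min}(\bm{A})$, valid for any symmetric positive-definite $\bm{A}$, applied to each influence function. With $\bm{A} = \bm{\Sigma}_X$ and $\bm{b} = \bm{x}\,r$ this gives $\|\bm{x}\|_2|r|/\lambda_{\max}(\bm{\Sigma}_X) \le \|\mathrm{IF}_{\mathrm{MSE}}\|_2 \le \|\bm{x}\|_2|r|/\lambda_{\min}(\bm{\Sigma}_X)$; with $\bm{A} = \bm{\Sigma}_X \otimes \bm{P}$, $\bm{b} = \bm{x}\otimes\bm{\delta}$ and the Kronecker norm identity, it gives $\|\bm{x}\|_2\|\bm{\delta}\|_2/(\lambda_{\max}(\bm{\Sigma}_X)\lambda_{\max}(\bm{P})) \le \|\mathrm{IF}_{\mathrm{CE}}\|_2 \le \|\bm{x}\|_2\|\bm{\delta}\|_2/(\lambda_{\min}(\bm{\Sigma}_X)\lambda_{\min}(\bm{P}))$.

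Then I would form $R = \|\mathrm{IF}_{\mathrm{CE}}\|_2/\|\mathrm{IF}_{\mathrm{MSE}}\|_2$: the common $\|\bm{x}\|_2$ factors cancel and the surviving $\lambda_{\max}/\lambda_{\min}$ ratios of $\bm{\Sigma}_X$ collapse into $\kappa_2(\bm{\Sigma}_X)$, which already yields the stated lower bound $\|\bm{\sigma}-\bm{e}_y\|_2/(\kappa_2(\bm{\Sigma}_X)\lambda_{\max}(\bm{P})|r|)$ and the intermediate upper bound $R \le \kappa_2(\bm{\Sigma}_X)\|\bm{\delta}\|_2/(\lambda_{\min}(\bm{P})|r|)$. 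The last step is the crude estimate $\|\bm{\delta}\|_2^2 = \sum_k \sigma_k^2 - 2\sigma_y + 1 \le 2$, using $\sum_k\sigma_k^2 \le \sum_k\sigma_k = 1$ and $\sigma_y\ge 0$, i.e. $\|\bm{\sigma}-\bm{e}_y\|_2 \le \sqrt{2}$, which turns the intermediate bound into the claimed $\sqrt{2}\kappa_2(\bm{\Sigma}_X)/(\lambda_{\min}(\bm{P})|r|)$.

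The main obstacle is justifying the Kronecker factorization of the CE Hessian, since the per-sample softmax covariance $\bm{P}_{\bm{x}} = \mathrm{diag}(\bm{\sigma}) - \bm{\sigma}\bm{\sigma}^\top$ depends on $\bm{x}$ and hence $\mathbb{E}[\bm{x}\bm{x}^\top \otimes \bm{P}_{\bm{x}}] \ne \bm{\Sigma}_X \otimes \mathbb{E}[\bm{P}_{\bm{x}}]$ in general. I would resolve this by noting that the argument only needs eigenvalue control, not an exact identity: if $\mu_{\min}\bm{I}\preceq\bm{P}_{\bm{x}}\preceq\mu_{\max}\bm{I}$ uniformly, then $\bm{x}\bm{x}^\top\otimes(\bm{P}_{\bm{x}}-\mu_{\min}\bm{I})\succeq \bm{0}$, so taking expectations $\lambda_{\min}(\mathbb{E}[\bm{x}\bm{x}^\top\otimes\bm{P}_{\bm{x}}]) \ge \mu_{\min}\lambda_{\min}(\bm{\Sigma}_X)$, and symmetrically $\lambda_{\max} \le \mu_{\max}\lambda_{\max}(\bm{\Sigma}_X)$ — which is exactly what the chain above consumes, with $\lambda_{\min}(\bm{P}),\lambda_{\max}(\bm{P})$ read as these uniform spectral bounds on the softmax covariance. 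If instead $\bm{P}$ is meant literally as a fixed ``expected Softmax matrix'' with the Hessian taken to be $\bm{\Sigma}_X\otimes\bm{P}$ (as the theorem's phrasing suggests), the factorization is exact and this subtlety disappears; I would state the operative convention before the computation.
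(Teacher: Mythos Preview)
Your proposal is correct and follows essentially the same route as the paper: bound $\|\mathrm{IF}_{\mathrm{MSE}}\|_2$ and $\|\mathrm{IF}_{\mathrm{CE}}\|_2$ separately via the two-sided eigenvalue inequality for symmetric positive-definite matrices, exploit the Kronecker structure $(\bm{\Sigma}_X\otimes\bm{P})^{-1}=\bm{\Sigma}_X^{-1}\otimes\bm{P}^{-1}$ and $\|\bm{u}\otimes\bm{v}\|_2=\|\bm{u}\|_2\|\bm{v}\|_2$, divide so that $\|\bm{x}\|_2$ cancels, and finish with $\|\bm{\sigma}-\bm{e}_y\|_2\le\sqrt{2}$. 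Your treatment of the $\sqrt{2}$ bound via $\sum_k\sigma_k^2\le\sum_k\sigma_k=1$ is a bit cleaner than the paper's worst-case argument, and your final paragraph flagging the $\mathbb{E}[\bm{x}\bm{x}^\top\otimes\bm{P}_{\bm{x}}]\ne\bm{\Sigma}_X\otimes\mathbb{E}[\bm{P}_{\bm{x}}]$ issue is a point of care the paper itself glosses over---it simply treats the Hessian as $\bm{\Sigma}_X\otimes\bm{P}$ without comment.
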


From Theorem \ref{IFratio}, we can observe that
the CE loss becomes more stable than MSE when the ratio of influence functions satisfies $R \leq 1$. This stability condition holds when the data covariance matrix is well-conditioned and model predictions avoid extreme confidence.The proof and additional discussion are provided in Appendix~\ref{app:IF}.

\section{Methodology}
We propose the Ordinal Cross-Entropy Time Series Forecasting (OCE-TS) method, a novel regression-to-classification framework that transforms continuous forecasting into an ordinal probability estimation problem. As shown in Figure~\ref{fig:truncated-gaussian}, the approach consists of three key components: Target-to-Probability Transformation (TPT) that discretizes continuous values into probability distributions across ordered bins; Deep Ordinal Classifier Training using a neural architecture with ordinal cross-entropy loss to preserve the inherent ordering of bins while learning temporal patterns; and Probability-to-Value Reconstruction (PVR) that converts the predicted class probabilities back to continuous values.

\begin{figure}[t]
	\centering
	\includegraphics[width=0.5\textwidth]{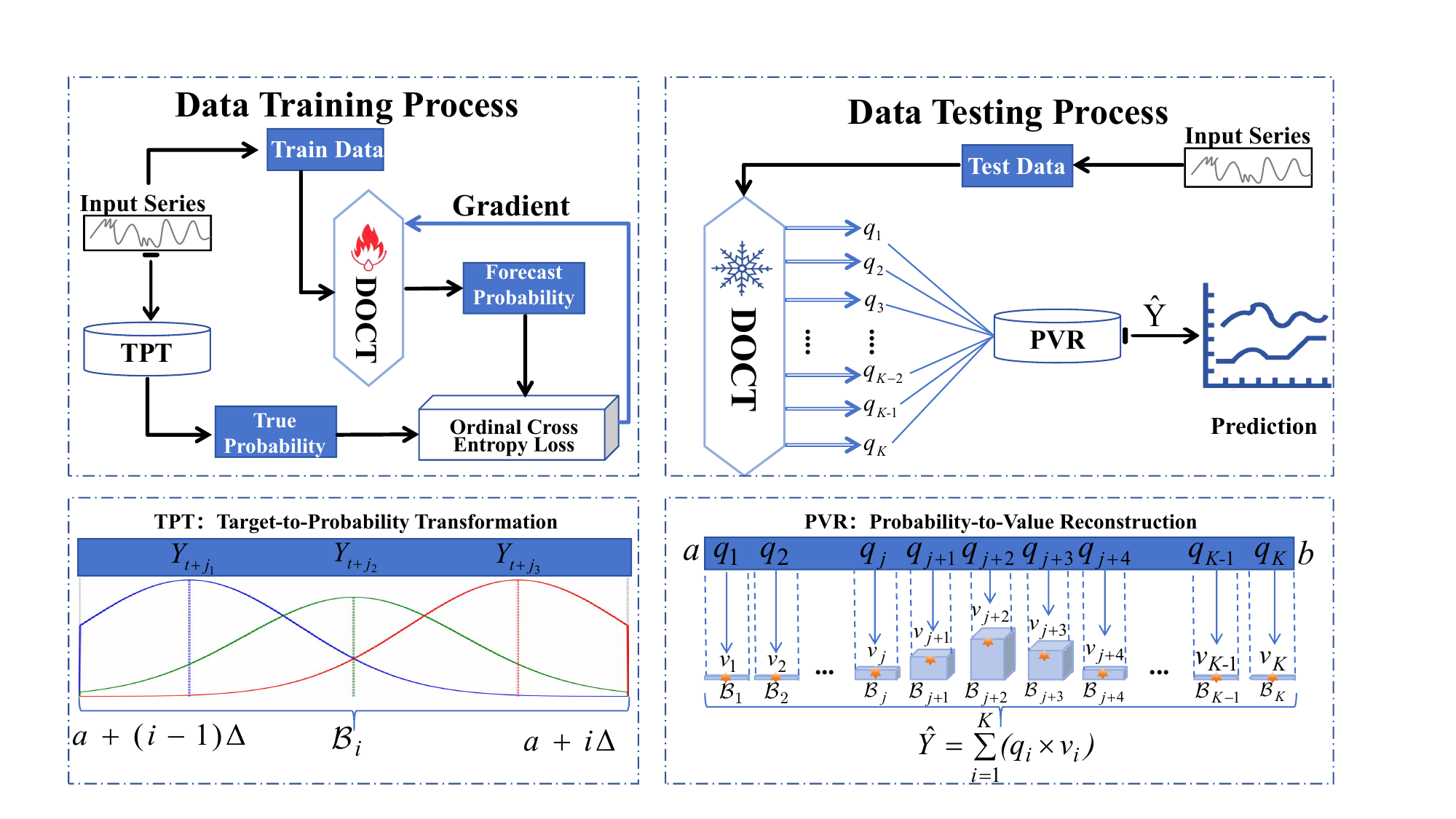}
	\caption{The Framework of Ordinal Cross-Entropy Loss-Based Time Series Forecasting (OCE-TS)}
	\label{fig:truncated-gaussian}
\end{figure}

\subsection{Target-to-Probability Transformation}

Given a true value $Y_{t+j} \in \mathbb{R}$ confined to the interval $[a,b]$, we construct its discrete probability distribution over $K$ bins as follows.
The target interval is divided into $K$ equal-width bins, where each bin $\mathcal{B}_k$, $k = 1,\ldots,K$ has lower and upper bounds defined by: $\mathcal{B}_k = [\ell_k, u_k)$, where $\ell_k = a + (k-1)\Delta$, $u_k = a + k\Delta$, and $\Delta = (b-a)/K$ for $k = 1,\ldots,K$. $\Delta$ represents the uniform bin width, ensuring complete coverage of $[a,b]$ without overlap.

For each observation $Y_{t+j}$, we center a truncated Gaussian distribution at $y_c = Y_{t+j}$:
\begin{equation}\label{eq:G_pdf}
	p(y) =
	\begin{cases}
		\frac{1}{Z\sigma\sqrt{2\pi}}\exp\left(-\frac{(y-y_c)^2}{2\sigma^2}\right) & y \in [a,b], \\
		0 & \text{otherwise}.
	\end{cases}
\end{equation}
The normalization factor $Z$ ensures unit probability mass within $[a,b]$:
\begin{equation}\label{eq:G_Z}
	Z = \frac{1}{2}\left[
	\operatorname{erf}\left(\frac{b-y_c}{\sigma\sqrt{2}}\right) -
	\operatorname{erf}\left(\frac{a-y_c}{\sigma\sqrt{2}}\right)
	\right].
\end{equation}
Here, $\operatorname{erf}(\cdot)$ denotes the Gauss error function, and $\sigma^2$ controls the dispersion of the distribution.

The probability mass of $Y_{t+j}$ for each bin $\mathcal{B}_k$ is obtained by integrating the CDF:
\begin{align}
	p_k^{(j)} &= \mathbb{P}(Y_{t+j} \in \mathcal{B}_k) \label{eq:bin_prob_def} \\
	&= \frac{1}{2Z}\left[
	\operatorname{erf}\left(\frac{u_k - y_c}{\sigma\sqrt{2}}\right) -
	\operatorname{erf}\left(\frac{\ell_k - y_c}{\sigma\sqrt{2}}\right)
	\right]. \label{eq:bin_prob_calc}
\end{align}

This formulation guarantees normalization and non-negativity $\forall k$:
$\sum_{k=1}^K p_k^{(j)} = 1, \quad p_k^{(j)} \geq 0.$

Each observation $Y_{t+j}$ is mapped to a $K$-dimensional probability vector:
$\bm{p}^{(j)} = \left[ p_1^{(j)}, p_2^{(j)}, \ldots, p_K^{(j)} \right]^{T}, \quad
p_k^{(j)} = \mathbb{P}(Y_{t+j} \in \mathcal{B}_k),$
where the vector components sum to unity ($\|\bm{p}^{(j)}\|_1 = 1$) by construction.

\subsection{Deep Ordinal Classifier Training}

To obtain predictive probability distributions, this study adopts a probabilistic forecasting framework based on the Decomposition-Linear (DLinear) Model. DLinear \cite{zeng2023transformers} is a time series forecasting model that employs series decomposition and multi-layer linear mapping. Given an input window $\bm{X}_t \in \mathbb{R}^{w \times M}$ with $w$ time steps and $M$ features, the model first decomposes the series into trend and residual components through moving average smoothing:
\begin{equation}
	\bm{T}_t = \text{MA}_l(\bm{X}_t), \quad \bm{S}_t = \bm{X}_t - \bm{T}_t,
\end{equation}
where $\text{MA}_l(\cdot)$ is the moving average smoothing with window size $l$, and $\bm{S}_t$ captures residual variations. This decomposition separates temporal patterns at different scales.

The model processes each component through dedicated linear projections:
\begin{equation}
	\tilde{\bm{Y}} = \bm{W}_t \bm{T}_t^\top + \bm{W}_s \bm{S}_t^\top \in \mathbb{R}^{H \times M},
\end{equation}
where $\bm{W}_t, \bm{W}_s \in \mathbb{R}^{H \times w}$ are learnable projection matrices. For each prediction horizon $j$, the scalar output $\tilde{Y}_{t+j}$ is transformed into bin probabilities:
\begin{equation}
	\bm{q} = \text{softmax}(\bm{W}_o \tilde{Y}_{t+j} + \bm{b}_o),
\end{equation}
with parameters $\bm{W}_o \in \mathbb{R}^{K \times M}$ and $\bm{b}_o \in \mathbb{R}^K$ mapping to $K$ ordinal bins.
The complete parameter set $\theta = \{\bm{W}_t, \bm{W}_s, \bm{W}_o, \bm{b}_o\}$ is optimized by minimizing:
\begin{equation}\label{eq:L}
	\mathcal{L}(\theta) = \frac{1}{N}\frac{1}{H}\sum_{i=1}^{N}\sum_{j=1}^{H} \mathcal{L}_{\text{OCE}}\left(\bm{p}(Y_{t_i+j}), \bm{q}(\bm{X}_{t_i};\theta)\right)
\end{equation}
where $\mathcal{L}_{\text{OCE}}$ is the ordinal cross-entropy loss (Eq.~(\ref{eq:oce_loss})), $Y_{t_i+j}$ is the ground truth at $t_i+j$, $N$ is the number of training windows, $H$ the prediction horizon, and $i$, $j$ index training samples and forecast steps, respectively.

The model parameters are optimized by minimizing the loss function $\mathcal{L}(\theta)$ to obtain the optimal parameters:
\begin{equation}
	\theta^* = \mathop{\arg\min}\limits_{\theta} \mathcal{L}(\theta),
	\label{eq:param_optim}
\end{equation}
 after obtaining the optimal parameters $\theta^*$, the model's prediction can be expressed as:
\begin{equation}
	q_t = \bm{q}(\bm{X}_{t}; \theta^*),
	\label{eq:q_compute}
\end{equation}
where $q_t$ is the model's prediction at time $t$ for $Y_{t+j}$.

\subsection{Probability-to-Value Reconstruction}
The final prediction $\hat{Y}_{t+j}$ is computed as the probability-weighted average of bin representatives $\{v_k\}_{k=1}^K$:
\begin{equation}
	\hat{Y}_{t+j} = \mathcal{G}\big(\{(v_k, q_k)\}_{k=1}^K\big) = \sum_{k=1}^K v_k \cdot q_k,
\end{equation}
where $v_k$ is the predefined center of bin $\mathcal{B}_k$.

The above end-to-end probabilistic forecasting framework is trained using ordinal cross-entropy loss to preserve bin ordering while enabling both accurate point forecasts and uncertainty quantification.

\section{Experimental Analysis}
This section evaluates OCE-TS against five baselines on seven datasets. We analyze key factors like loss functions, parameters and distribution assumptions to validate our method. Additional experimental results are provided in the Appendix~\ref{app:experimental results}.
\subsection{Experiments Settings}
\textbf{Dataset} We conduct experiments on seven publicly available time-series datasets, including ETT (ETTh1, ETTh2, ETTm1, ETTm2), Exchange, ILI, and Weather. The detailed dataset descriptions are provided in Appendix~\ref{app:dataset}.

\textbf{Baselines.} We adopt five representative models for long-term time series forecasting as our baselines, including three Transformer-based models: Autoformer~\cite{wu2021autoformer}, iTransformer~\cite{liu2023itransformer}, TimeXer~\cite{wang2024timexer} and TimeBridge~\cite{liu2024timebridge}, as well as one MLP-based model: DLinear~\cite{zeng2023transformers}.

\textbf{Experimental Environment}
The experiments are conducted on a machine equipped with an NVIDIA GeForce RTX 4060 GPU, an Intel(R) Core(TM) i7-14700F CPU, and 16 GB of RAM. The implementation requires Python 3.8.20 and PyTorch 2.0.0 compiled with CUDA 11.8 support, leveraging GPU acceleration during model training.

\textbf{Evaluation Metrics}
The MSE and MAE are employed to evaluate model performance in  time series forecasting. Lower values indicate better predictive performance. The definition of MAE is provided in Appendix~\ref{app:MAE}.

\textbf{Model Configuration and Training Details}
The input time series are normalized to $[-1, 1]$ or $[0, 1]$ during training and validation, and denormalized during testing for consistent metric evaluation. The support $[a, b]$ of the truncated Gaussian distribution is set to match the normalization range.Models are trained with a batch size of 32 for 15 epochs using the Adam optimizer (initial learning rate 0.005), employing dynamic learning rate adjustment based on validation performance, with an early stopping mechanism (patience=5 epochs). The input window size $w$ is set to 336, and prediction lengths $H$ are selected from $\{96, 192, 336, 720\}$. Details are provided in Appendix~\ref{app:Model}.

\begin{table*}[!t]
	\centering
	\renewcommand{\arraystretch}{0.8}
	\setlength{\tabcolsep}{0.3pt}
	\fontsize{8}{8}\selectfont
	\begin{tabular}{@{}lccccccccccccc@{}}
		\toprule
		\multirow{2}{*}{\textbf{Dataset}} & \multirow{2}{*}{\textbf{H}} & \multicolumn{6}{c}{\textbf{MSE}} & \multicolumn{6}{c}{\textbf{MAE}} \\
		\cmidrule(lr){3-8} \cmidrule(lr){9-14}
		& & \makecell{Autoformer\\(2021)} & \makecell{Dlinear\\(2022)} & \makecell{Itransformer\\(2024)} & \makecell{Timexer\\(2024)} & \makecell{TimeBridge\\(2024)} & \textbf{Our}
		& \makecell{Autoformer\\(2021)} & \makecell{Dlinear\\(2022)} & \makecell{Itransformer\\(2024)} & \makecell{Timexer\\(2024)} & \makecell{TimeBridge\\(2024)} & \textbf{Our} \\
		\midrule
		
		\multirow{4}{*}{ETTh1} & 96  & 0.453 & 0.375 & 0.386 & 0.377& 0.358 & \textbf{0.341} & 0.453 & 0.399 & 0.405 & 0.397 & \textbf{0.392} & 0.395 \\
		& 192 & 0.504 & 0.405 & 0.441 & 0.425 & \textbf{0.388}& 0.416 & 0.482 & 0.416 & 0.436 & 0.426 & \textbf{0.411}& 0.445 \\
		& 336 & 0.505 & 0.439 & 0.487 & 0.457 & \textbf{0.401}& 0.491 & 0.484 & 0.443 & 0.458 & 0.441 & \textbf{0.419}& 0.491 \\
		& 720 & 0.498 & 0.472 & 0.503 & 0.464 & \textbf{0.447}& 0.547 & 0.500 & 0.490 & 0.491 & 0.463 & \textbf{0.458}& 0.528 \\
		\midrule
		
		\multirow{4}{*}{ETTh2} & 96  & 0.368 & 0.289 & 0.297 & 0.289 & 0.295 & \textbf{0.197} & 0.410 & 0.353 & 0.349 & 0.340& 0.354 & \textbf{0.313} \\
		& 192 & 0.422 & 0.383 & 0.380 & 0.370 & 0.351& \textbf{0.248} & 0.434 & 0.418 & 0.400 & 0.391 & 0.389& \textbf{0.352} \\
		& 336 & 0.471 & 0.448 & 0.428 & 0.422 & 0.351& \textbf{0.313} & 0.475 & 0.465 & 0.432 & 0.434 & 0.397& \textbf{0.395} \\
		& 720 & 0.474 & 0.605 & 0.427 & 0.429 & 0.388& \textbf{0.379} & 0.484 & 0.551 & 0.445 & 0.445 & 0.436& \textbf{0.435} \\
		\midrule
		
		\multirow{4}{*}{ETTm1} & 96  & 0.481 & 0.299 & 0.334 & 0.309 & 0.297& \textbf{0.195} & 0.463 & 0.343 & 0.368 & 0.352 & 0.353 &\textbf{0.308} \\
		& 192 & 0.598 & 0.335 & 0.377 & 0.355  & 0.333 & \textbf{0.279} & 0.513 & 0.365 & 0.391 & 0.378  & 0.375 & \textbf{0.363} \\
		& 336 & 0.579 & 0.369 & 0.426 & 0.387  & \textbf{0.366} & 0.372 & 0.509 & \textbf{0.386} & 0.420 & 0.399  & 0.399 & 0.407 \\
		& 720 & 0.560 & 0.425 & 0.491 & 0.448  & \textbf{0.414} & 0.479 & 0.509 & \textbf{0.421} & 0.459 & 0.435 & 0.423& 0.466 \\
		\midrule
		
		\multirow{4}{*}{ETTm2} & 96  & 0.255 & 0.167 & 0.180 & 0.171 & 0.158& \textbf{0.118} & 0.339 & 0.260 & 0.264 & 0.255 & 0.249& \textbf{0.230} \\
		& 192 & 0.281 & 0.224 & 0.250 & 0.238 & 0.215& \textbf{0.167} & 0.340 & 0.303 & 0.309 & 0.300 & 0.291& \textbf{0.279} \\
		& 336 & 0.339 & 0.281 & 0.311 & 0.301 & 0.263& \textbf{0.211} & 0.372 & 0.342 & 0.348 & 0.340 & 0.323& \textbf{0.319} \\
		& 720 & 0.422 & 0.397 & 0.412 & 0.401 & 0.348& \textbf{0.277} & 0.419 & 0.421 & 0.407 & 0.397 & 0.376& \textbf{0.367} \\
		\midrule
		
		\multirow{4}{*}{Exchange} & 96  & 0.197 & 0.081 & 0.086 & 0.089 &0.084 & \textbf{0.055} & 0.382 & 0.203 & 0.206 & 0.208 &0.202 & \textbf{0.148} \\
		& 192 & 0.300 & 0.157 & 0.177 & 0.196 &0.189 & \textbf{0.107} & 0.369 & 0.293 & 0.299 & 0.313 &0.310 & \textbf{0.224} \\
		& 336 & 0.509 & 0.305 & 0.331 & 0.354 &0.362 & \textbf{0.199} & 0.524 & 0.414 & 0.417 & 0.429 &0.436 & \textbf{0.322} \\
		& 720 & 1.447 & 0.643 & 0.847 & 0.894 &0.900 & \textbf{0.521} & 0.941 & 0.601 & 0.691 & 0.708 &0.717 & \textbf{0.578} \\
		\midrule
		
		\multirow{4}{*}{ILI} & 24  & 3.483 & 2.215 & 2.695 & 1.971 &2.055& \textbf{0.570} & 1.287 & 1.081 & 1.703 & 0.868 &0.890& \textbf{0.458} \\
		& 36  & 3.103 & 1.963 & 2.563 & 2.068 &2.017& \textbf{0.749} & 1.148 & 0.963 & 1.051 & 0.934 &0.950& \textbf{0.558} \\
		& 48  & 2.669 & 2.130 & 2.567 & 1.925 &1.898& \textbf{1.180} & 1.085 & 1.024 & 1.048 & 0.884 &0.905& \textbf{0.710} \\
		& 60  & 2.770 & 2.368 & 2.625 & 1.920 &1.796& \textbf{1.282} & 1.125 & 1.096 & 1.068 & 0.892 &0.897& \textbf{0.738} \\
		\midrule
		
		\multirow{4}{*}{Weather} & 96  & 0.266 & 0.176 & 0.174 & 0.168 &0.143& \textbf{0.107} & 0.336 & 0.237 & 0.214 & 0.209 &0.192& \textbf{0.144} \\
		& 192 & 0.307 & 0.220 & 0.221 & 0.220 &0.185& \textbf{0.140} & 0.367 & 0.282 & 0.254 & 0.254 &0.235& \textbf{0.190} \\
		& 336 & 0.359 & 0.265 & 0.278 & 0.276 &0.237& \textbf{0.169} & 0.395 & 0.319 & 0.296 & 0.296 &0.277& \textbf{0.228} \\
		& 720 & 0.419 & 0.323 & 0.358 & 0.353 &0.307& \textbf{0.225} & 0.428 & 0.362 & 0.347 & 0.347 &0.330& \textbf{0.283} \\

		
		\bottomrule
	\end{tabular}
	\caption{Comparison of Model Performances}
	\label{tab:model_performance}
\end{table*}
\subsection{Comparative Results}
To verify the effectiveness of the proposed method, we conduct a systematic performance comparison between OCE-TS and five mainstream models trained with the traditional MSE loss function on multiple benchmark datasets.

Table~\ref{tab:model_performance} presents the comparative performance analysis. Given the relatively small size of the ILI dataset, its lookback window is configured as 104 timesteps with forecast horizons $\{24, 36, 48, 60\}$. Following conventional practice, other datasets adopt a fixed lookback window of 336 timesteps and forecast horizons $\{96, 192, 336, 720\}$. The experimental results show that OCE-TS achieves superior forecasting performance across most benchmark datasets.

To intuitively demonstrate the model’s fitting performance on two types of datasets, this paper visualizes the 720-step forecasting results by comparing the predicted values with the ground truth. Both the proposed method and the DLinear model use a 336-step historical sequence as input.

\begin{figure}[h]
	\centering
	\subfloat[DLinear]{\includegraphics[width=0.11\textwidth]{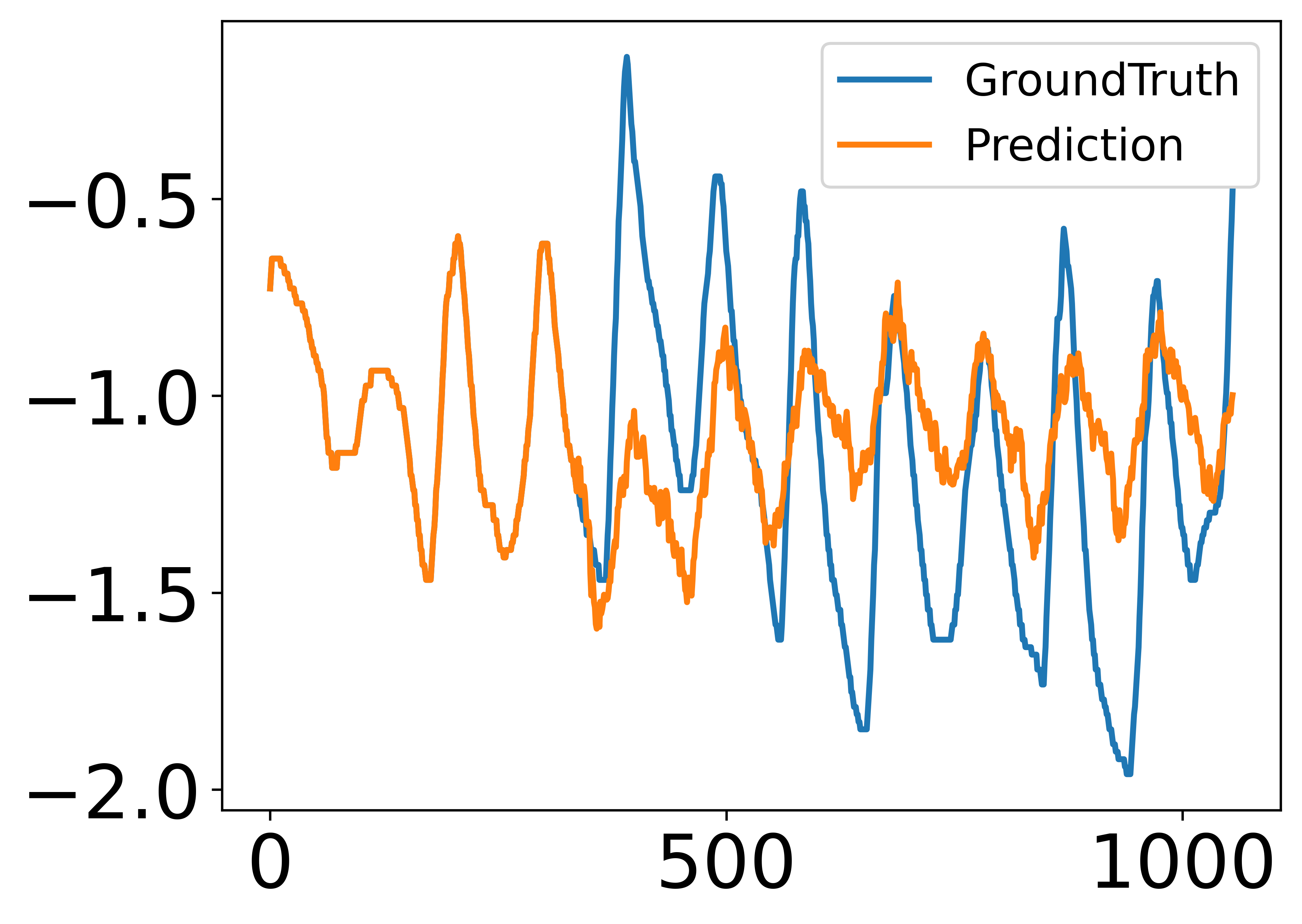}}
	\subfloat[Ours]{\includegraphics[width=0.11\textwidth]{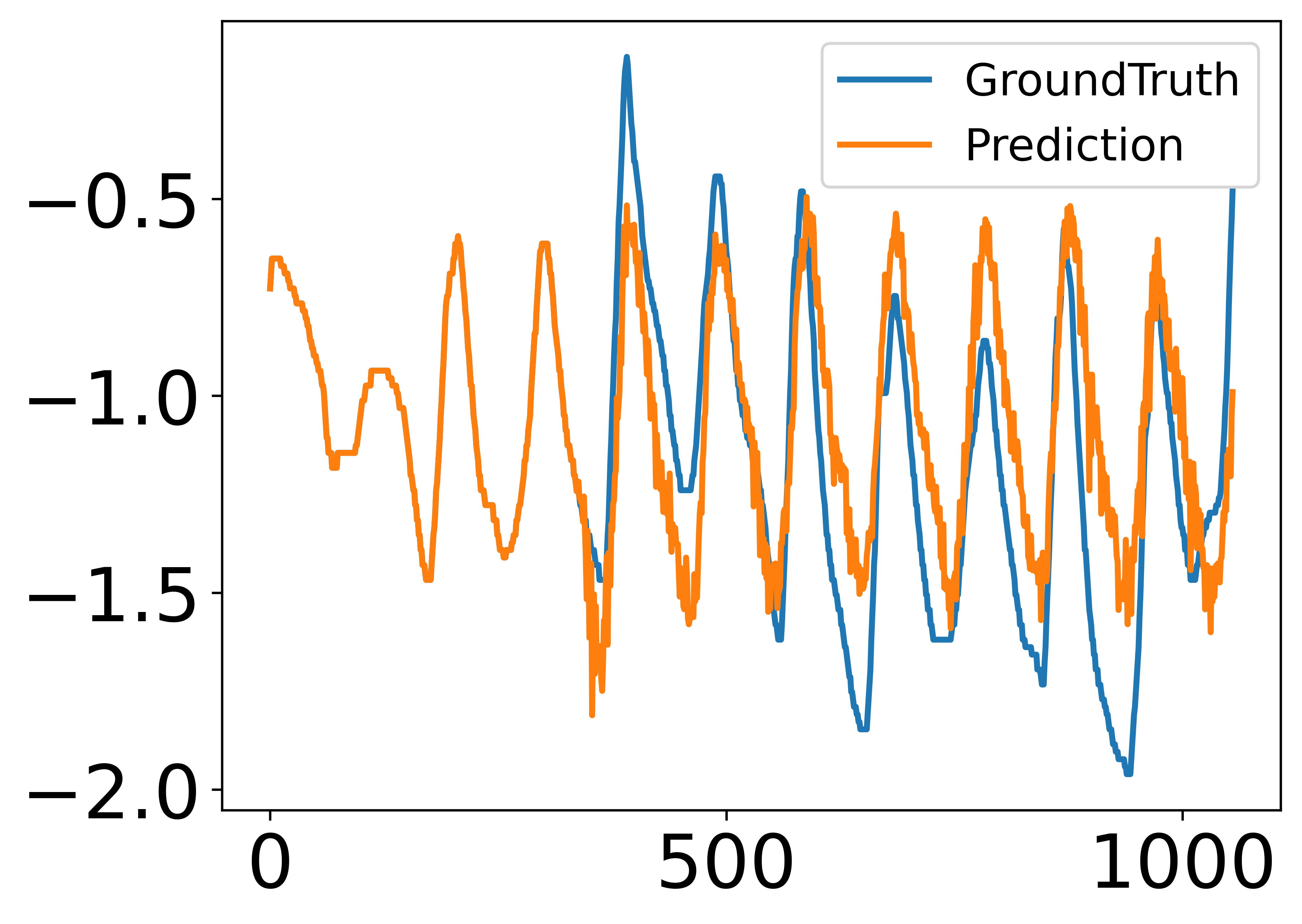}}
	\subfloat[DLinear]{\includegraphics[width=0.11\textwidth]{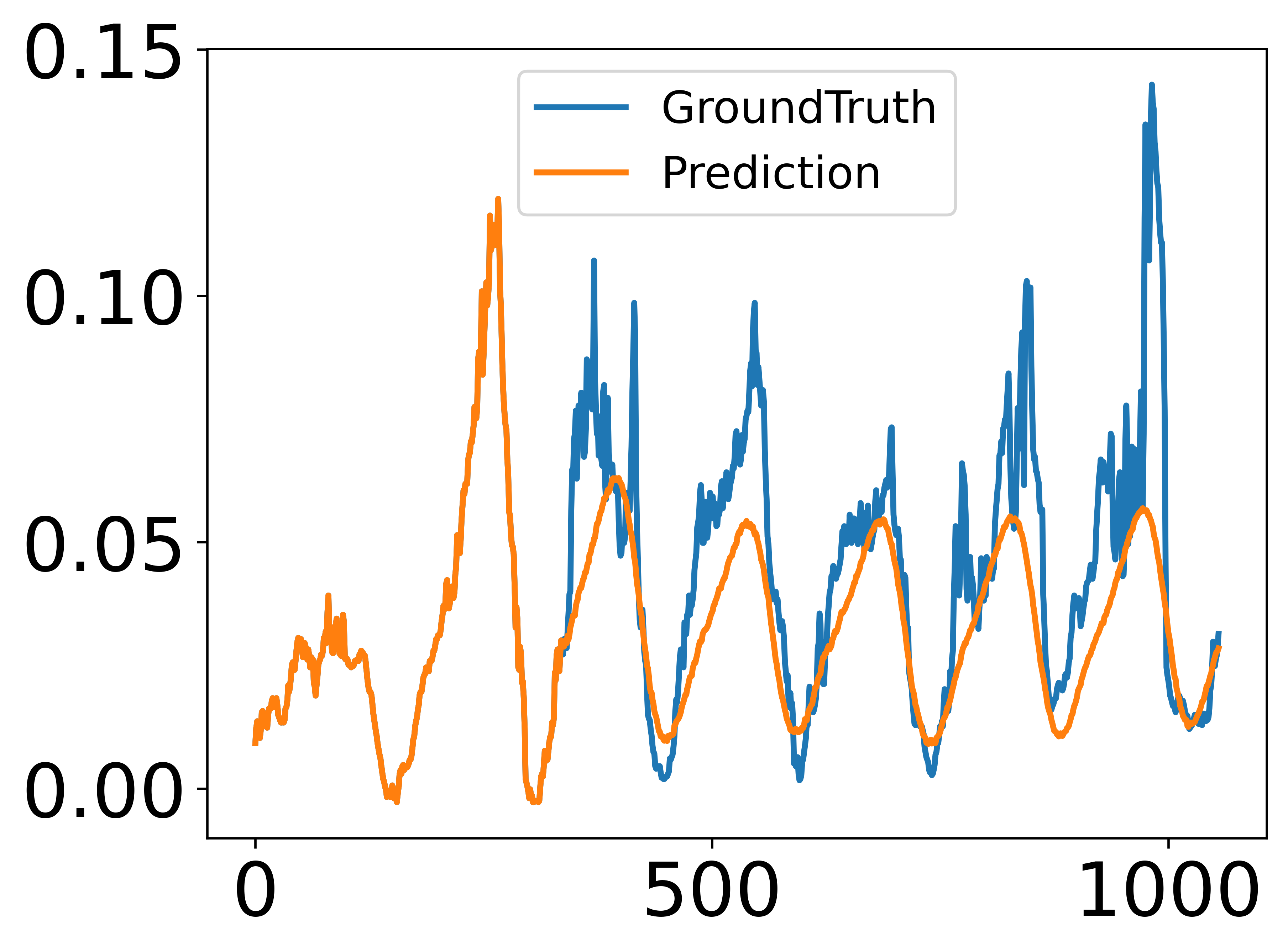}}
	\subfloat[Ours]{\includegraphics[width=0.11\textwidth]{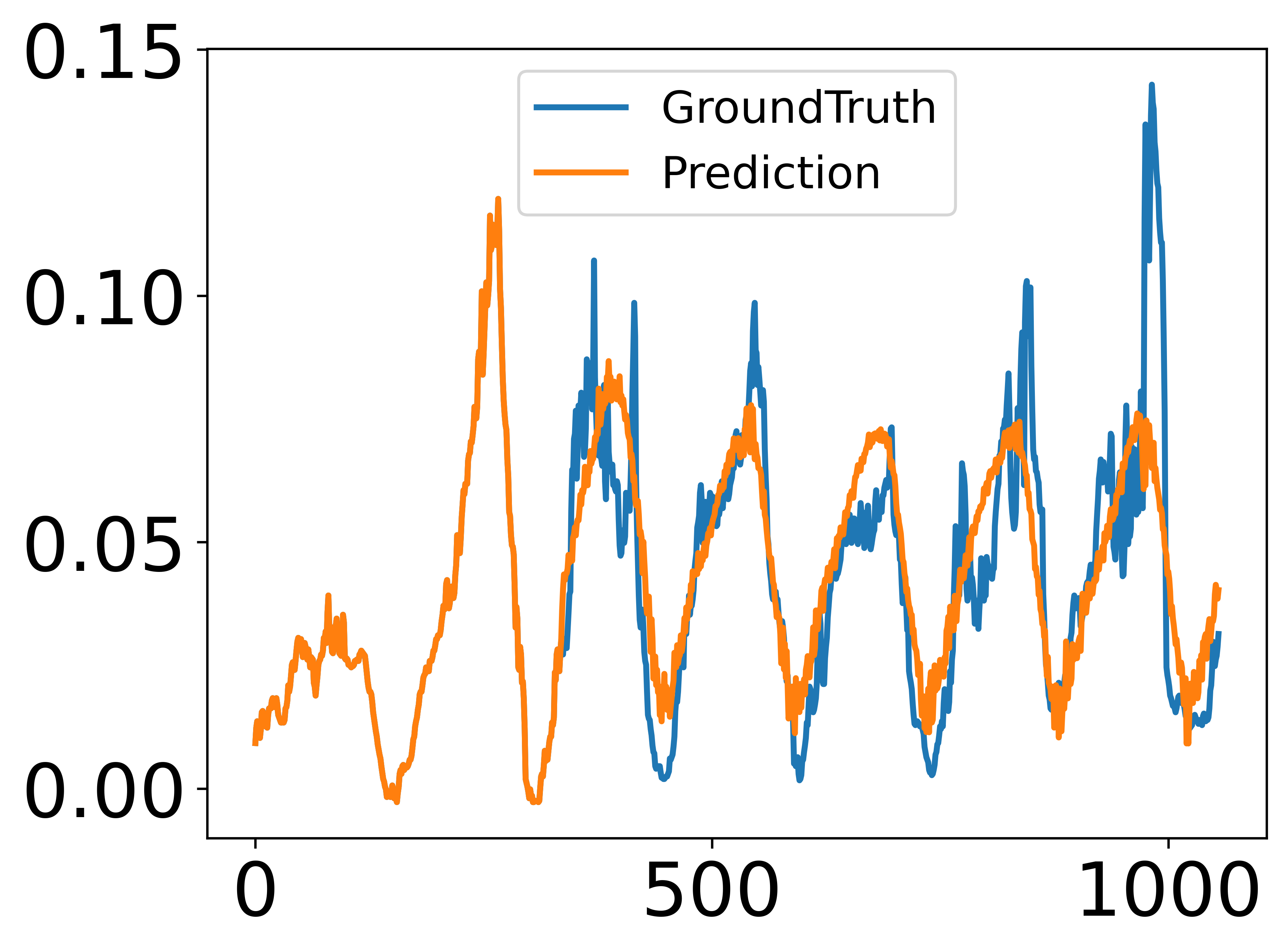} }
\caption{Model Fitting Results of DLinear and Ours}
	\label{fig:model-comparison}
\end{figure}

As shown in Figure~\ref{fig:model-comparison}, the first two plots forecast results on ETTm2, while the latter two show predictions for Weather. Our method tracks ground truth better than DLinear, confirming the better performance of our approach.

\begin{table}[!ht]
	\centering
	\scriptsize  
	\setlength{\tabcolsep}{1mm}  
	\begin{tabular}{lcccccc}
		\toprule
		\textbf{models} & \textbf{Autoformer} & \textbf{Dlinear} & \textbf{Itransformer} & \textbf{Timexer} & \textbf{TimeBridge} & \textbf{Ours} \\
		\midrule
		\textbf{ms/iter} &123.3  &19.3  &29  &17.2  &21  &34.9  \\
		\bottomrule
	\end{tabular}
	\caption{Per-Iteration Runtime by Model (ms)}
	\label{tab:time-per-iter-9pt}
	\vspace{-9pt}
\end{table}
Table~\ref{tab:time-per-iter-9pt} shows that OCE-TS achieves high accuracy with moderate cost, balancing efficiency and performance.

\subsection{Study of Different Probability Distributions}
This section analyzes the impact of the probability distribution on the performance of the ETTh dataset. Distributions are defined in Appendix~\ref{app:Distribution}.

\begin{figure}[!ht]
	\centering
	\includegraphics[width=0.85\linewidth]{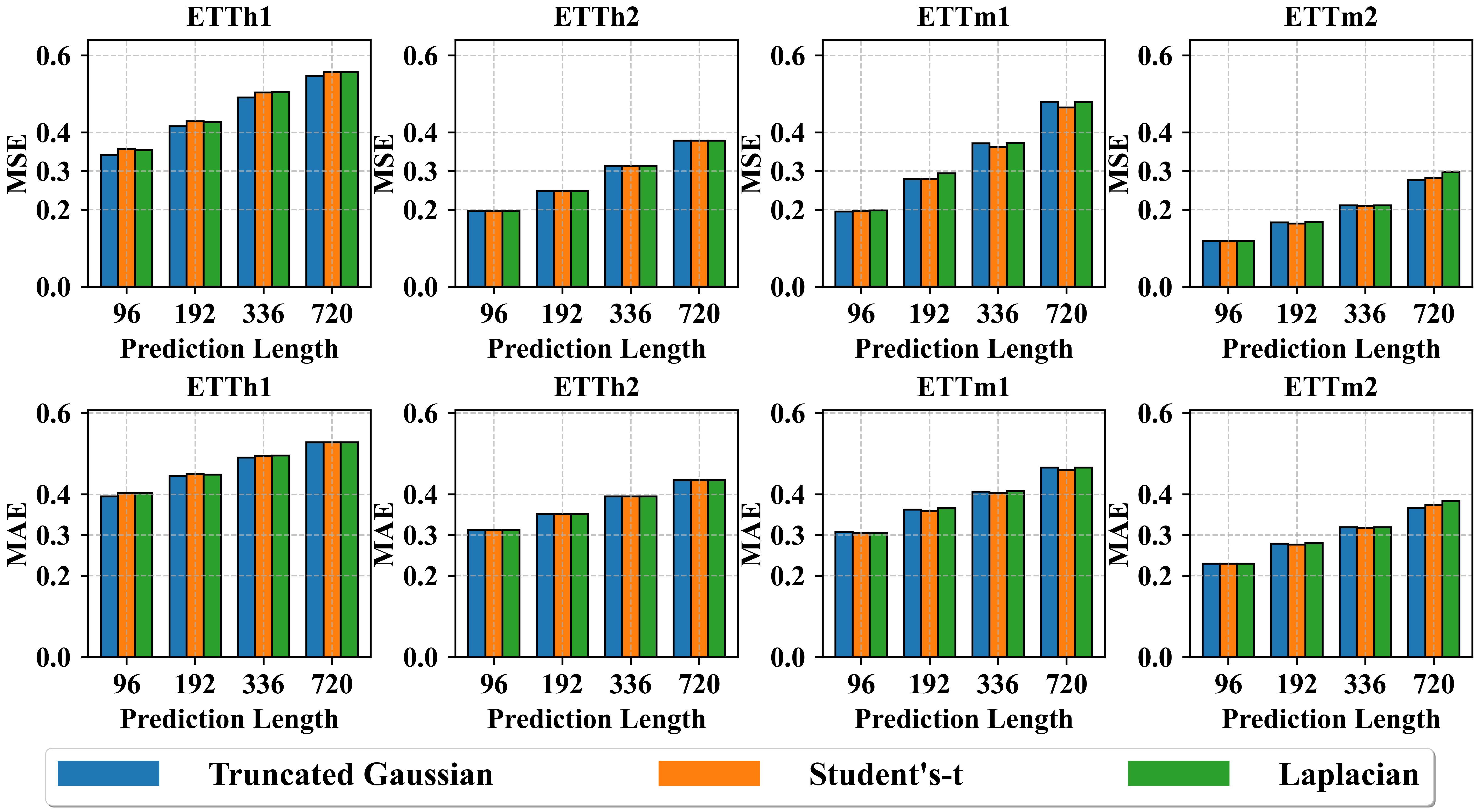}
	\caption{Distribution Comparison on ETT Datasets}
	\label{fig:dist-compare}
\end{figure}

As illustrated in Figure~\ref{fig:dist-compare}, the truncated Gaussian distribution achieves optimal performance on the ETTh1 dataset, demonstrating superior MSE and MAE metrics across all prediction horizons compared to both the Student's-t distribution and Laplacian distribution, with particularly notable advantages in long-term forecasting. However, the Student's-t distribution exhibits better performance on ETTm2, while all three distributions show comparable results on ETTh2 and ETTm1 without statistically significant differences. Hence, the selection of an appropriate probability distribution should be carefully determined according to specific task requirements and dataset characteristics.

\subsection{Lookback Window Sizes}
To evaluate the impact of lookback window size on forecasting performance, this study employs multi-scale sliding windows (48 / 96 / 192 / 336 / 504 / 720 timesteps) for 720-step-ahead prediction across multiple datasets, quantitatively analyzing the correlation between historical information scale and long-term forecasting accuracy.

\begin{figure}[!ht]
	\centering
	\begin{subfigure}[b]{\linewidth}
		\centering
		\includegraphics[width=\linewidth, height=\textheight, keepaspectratio]{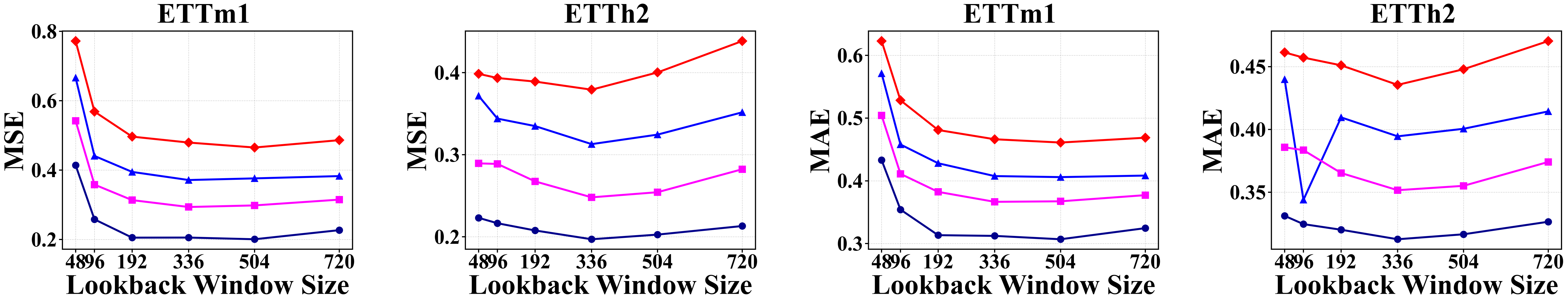}
	\end{subfigure}
	
	\begin{subfigure}[b]{\linewidth}
		\centering
		\includegraphics[width=\linewidth, height=\textheight, keepaspectratio]{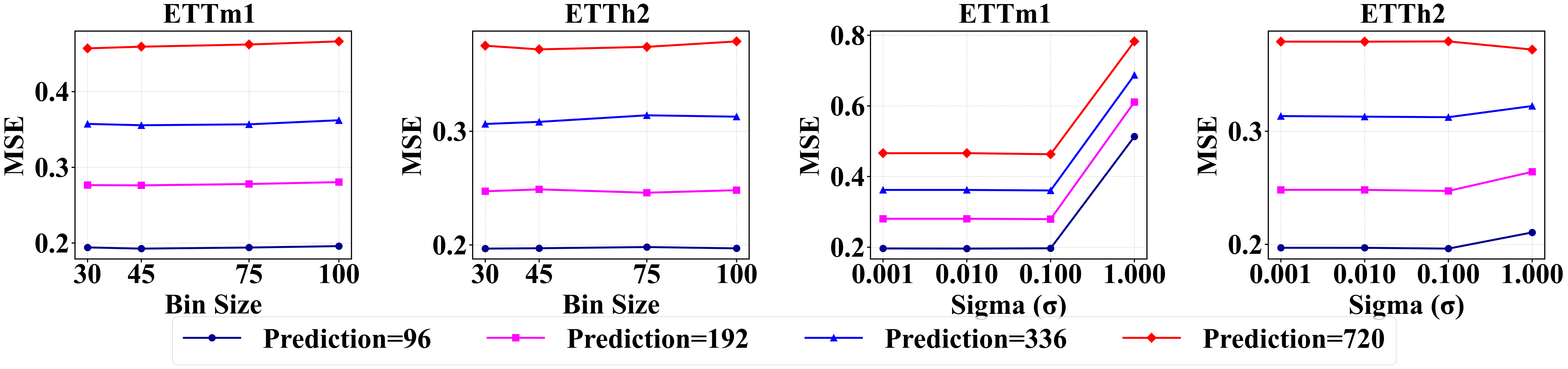}
	\end{subfigure}
	\caption{Lookback Window Sizes and Parameter Sensitivity}
	\label{fig:combined-results}
\end{figure}

 The first row of Figure~\ref{fig:combined-results} presents the performance of ETTm1 and ETTh2 under different lookback window lengths, the experiments show that increasing the lookback window significantly improves prediction accuracy for ETTm1 but has limited benefits for ETTh2, indicating that the optimization effect of historical information length depends on dataset characteristics. ETTm1 achieves optimal performance with $\geq$336 steps, while ETTh2 performs best at 192 steps. Therefore, the selection of the lookback window should balance accuracy and efficiency, with adjustments tailored to different datasets. Please refer to Appendix~\ref{app:Lookback} for complete experimental results and details.

\subsection{Parameter Sensitivity}
The OCE-TS method involves two key parameters: the number of bins (bin = K) and the standard deviation (\(\sigma\)). In our experimental setup, we consider bin \(\in \{30, 45, 75, 100\}\) and \(\sigma \in \{0.001, 0.01, 0.1, 1\}\), while keeping all other settings consistent with the baseline configuration. To systematically analyze the influence of these parameters, we examine the effect of bin by fixing \(\sigma = 0.01\), and analyze the impact of \(\sigma\) under a fixed bin of 100. Based on these results, we provide recommendations for parameter configuration.

The second row of Figure~\ref{fig:combined-results} shows the variations in model performance with respect to $\sigma$ values and binning parameters. The ETTm1 and ETTh2 datasets are insensitive to the number of bins but exhibit different levels of sensitivity to the standard deviation. Both datasets maintain stable performance across various bin settings; however, when $\sigma = 1$, the MSE on ETTm1 increases significantly, and ETTh2 also experiences a performance drop. Considering both stability and accuracy, we recommend setting the number of bins to 100 and the standard deviation $\sigma$ to 0.01. Please refer to Appendix~\ref{app:Parameter} for complete experimental results and details.

 \subsection{Comparison with CE}
This section compares the performance between the OCE loss and the CE loss. See Appendix~\ref{app:Loss} for complete experimental results and details.
Experimental results in Figure~\ref{fig:ce-oce-comparison} validate that OCE loss achieves superior metric performance (MAE / MSE) over CE loss on the Weather dataset.
\begin{figure}[ht]
	\centering
	\begin{subfigure}[t]{0.48\columnwidth}
		\centering
		\includegraphics[width=0.9\textwidth]{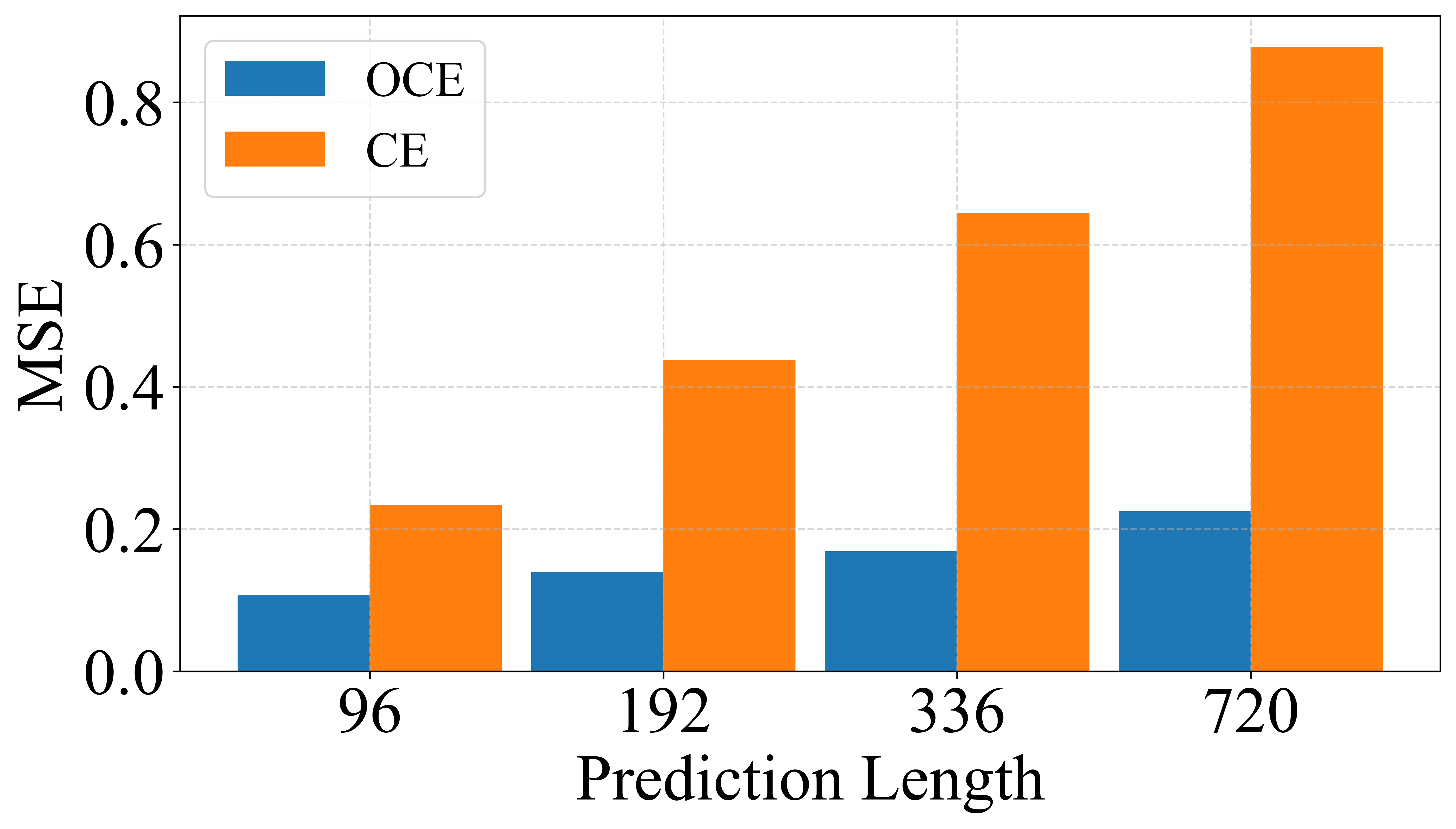}
		\caption{MSE (CE vs. OCE)}  
		\label{fig:mse-ce-oce}  
	\end{subfigure}
	\hfill
	\begin{subfigure}[t]{0.48\columnwidth}
		\centering
		\includegraphics[width=0.9\textwidth]{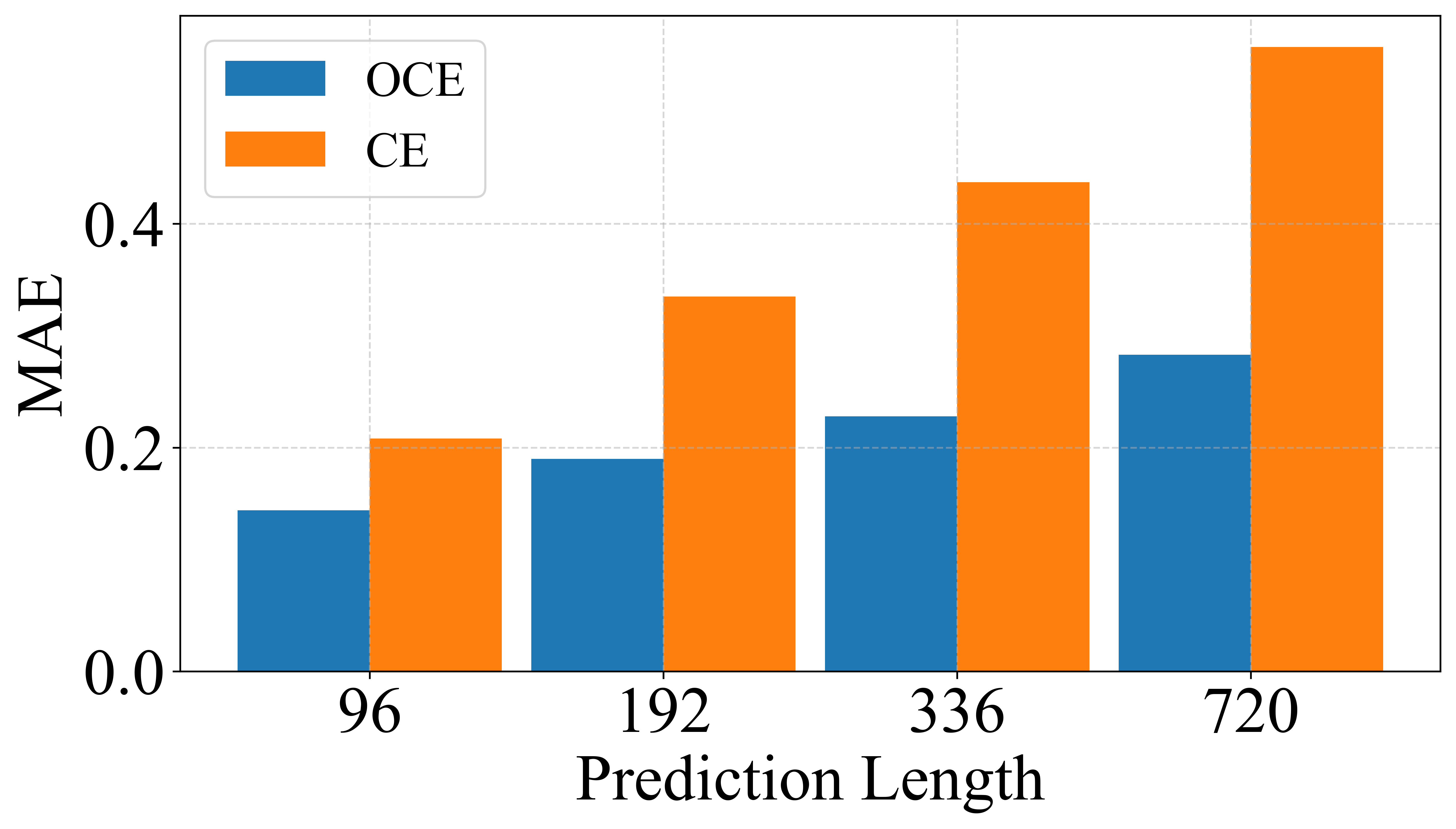}
		\caption{MAE (CE vs. OCE)}
		\label{fig:mae-ce-oce}
	\end{subfigure}
	\caption{The Comparison between CE Loss and OCE Loss}
	\label{fig:ce-oce-comparison}
\end{figure}

\subsection{Model Performance under Different SNRs}
This experiment compares the  OCE-TS method with DLinear under varying signal-to-noise ratios (SNRs) (\SI{-3}{dB}, \SI{0}{dB}, \SI{3}{dB}, \SI{10}{dB}, and \SI{20}{dB}), where additive Gaussian white noise is injected into inputs to evaluate robustness on the ETTm2 and Weather datasets.
Please refer to Appendix~\ref{app:SNR} for complete experimental results and details.

\begin{table}[h]
	\centering
	\scriptsize
	\setlength{\tabcolsep}{2.5pt} 
	\begin{tabular}{l|cccccccccc}
		\toprule
		\multirow{2}{*}{\textbf{Dataset}} & \multicolumn{2}{c}{\textbf{-3dB}} & \multicolumn{2}{c}{\textbf{0dB}} & \multicolumn{2}{c}{\textbf{3dB}} & \multicolumn{2}{c}{\textbf{10dB}} & \multicolumn{2}{c}{\textbf{20dB}} \\
		\cmidrule(lr){2-3} \cmidrule(lr){4-5} \cmidrule(lr){6-7} \cmidrule(lr){8-9}\cmidrule(lr){10-11}
		& MSE & Ours & MSE & Ours & MSE & Ours & MSE & Ours & MSE & Ours \\
		\midrule
		\textbf{ETTm2}& 0.440 & \textbf{0.275} & 0.431 & \textbf{0.274} &0.425  & \textbf{0.273} & 0.420 & \textbf{0.277}&0.420  & \textbf{0.280} \\
		\textbf{Weather}&0.342  & \textbf{0.227} & 0.339 & \textbf{0.225} &0.338  & \textbf{0.224} & 0.337 & \textbf{0.223}& 0.338 & \textbf{0.224} \\
		\bottomrule
	\end{tabular}
\caption{Comparison Under Different Noise Levels}
	\label{Table_noise}
\end{table}

The experimental results show that under different SNR levels, the performance of our method is superior to that of DLinear, and the impact of noise on the model's performance is minimal, demonstrating its robustness.

\subsection{Significance Analysis}
To evaluate the statistical significance of performance differences among the compared algorithms, we conduct a critical difference (CD) analysis based on both MAE and MSE metrics. The x-axis of Figure~\ref{fig:cd_analysis} represents the average rankings, and algorithms connected by horizontal lines indicate no statistically significant differences. Our approach exhibits prominent superiority, with its leftmost position on the CD axis confirming statistically significant advantages over counterparts. Please refer to Appendix~\ref{app:siga} for more details.

\begin{figure}[h]
	\centering
	\begin{subfigure}[t]{0.48\columnwidth}
		\centering
		\includegraphics[width=\textwidth]{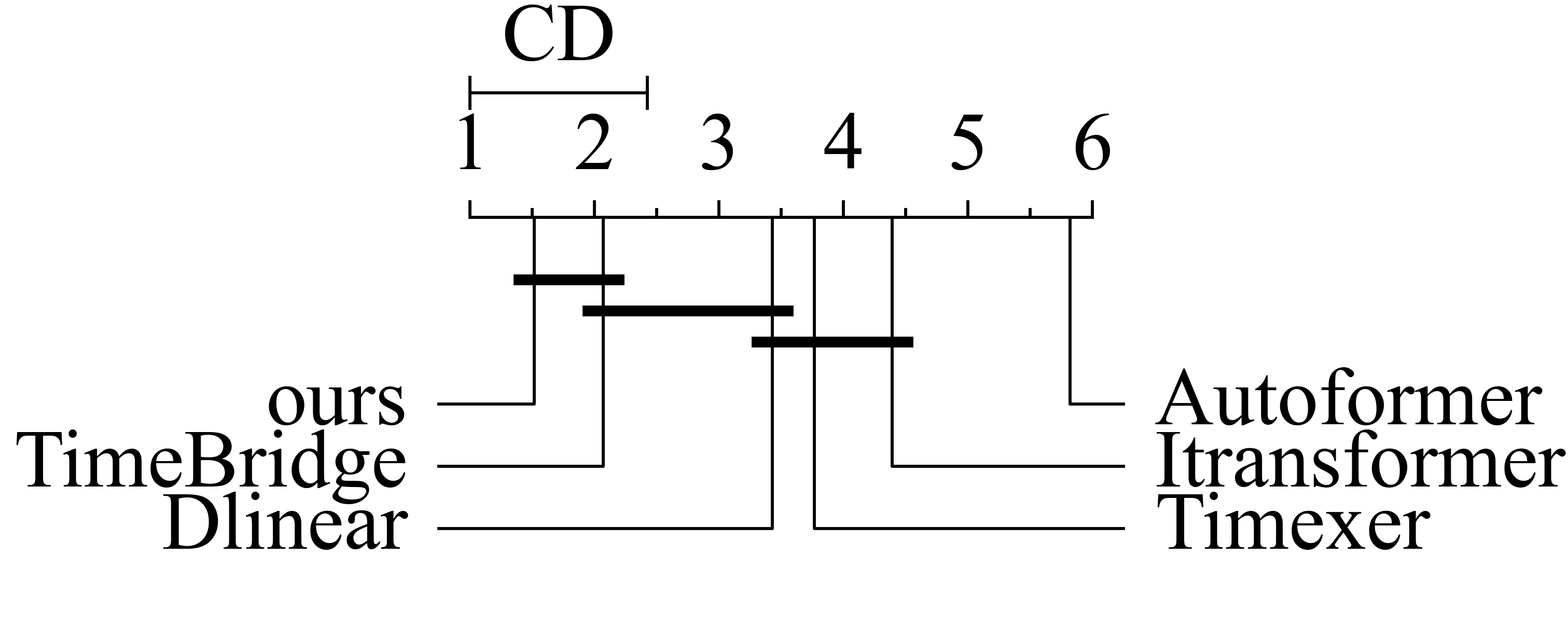}
		\caption{MSE}
		\label{fig:cd_diagram_mse}
	\end{subfigure}
	\hfill
	\begin{subfigure}[t]{0.48\columnwidth}
		\centering
		\includegraphics[width=\textwidth]{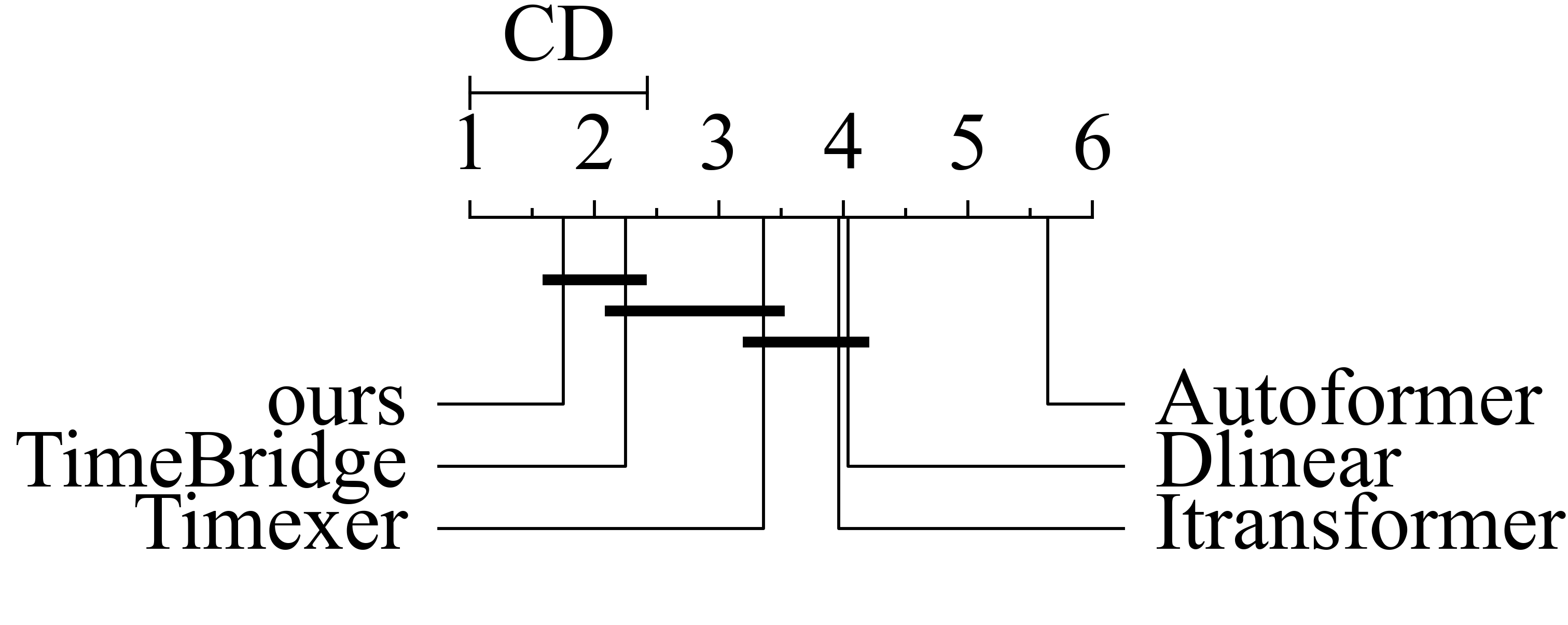}
		\caption{MAE}
		\label{fig:cd_diagram_mae}
	\end{subfigure}
	\caption{Critical Difference Analysis }
	\label{fig:cd_analysis}
\end{figure}

\section{Conclusion}
Temporal sequence prediction plays a crucial role across various domains, yet existing models generally suffer from two major limitations: insufficient uncertainty quantification and weak robustness. To address this, this paper proposes the OCE-TS framework, which synergistically integrates regression-as-classification with ordinal cross-entropy optimization to achieve more accurate predictive distribution modeling. Experiments on multiple benchmark datasets demonstrate that this approach significantly outperforms baseline models in both prediction accuracy and stability. Future work will explore more expressive distribution modeling techniques to capture complex uncertainty structures.

\bibliography{aaai2026}

\begin{thebibliography}{50}
\providecommand{\natexlab}[1]{#1}

\bibitem[{Basu, Pope, and Feizi(2020)}]{Basu2020Influence}
Basu, S.; Pope, P.; and Feizi, S. 2020.
\newblock Influence functions in deep learning are fragile.
\newblock \emph{arXiv preprint arXiv:2006.14651}.

\bibitem[{Box et~al.(2015)Box, Jenkins, Reinsel, and Ljung}]{box2015time}
Box, G. E.~P.; Jenkins, G.~M.; Reinsel, G.~C.; and Ljung, G.~M. 2015.
\newblock \emph{Time Series Analysis: Forecasting and Control}.
\newblock Hoboken, NJ: John Wiley \& Sons, 5 edition.

\bibitem[{Campos et~al.(2023)Campos, Zhang, Yang, Chen, Wang, Xu, and
  Li}]{campos2023lightts}
Campos, D.; Zhang, M.; Yang, B.; Chen, H.; Wang, S.; Xu, L.; and Li, L. 2023.
\newblock LightTS: Lightweight time series classification with adaptive
  ensemble distillation.
\newblock \emph{Proceedings of the ACM on Management of Data}, 1(2): 1--27.

\bibitem[{Chu, Ghahramani, and Williams(2005)}]{chu2005gaussian}
Chu, W.; Ghahramani, Z.; and Williams, C. K.~I. 2005.
\newblock Gaussian processes for ordinal regression.
\newblock \emph{Journal of Machine Learning Research}, 6(7).

\bibitem[{Gao, Parcollet, and Lane(2021)}]{gao2021distilling}
Gao, Y.; Parcollet, T.; and Lane, N.~D. 2021.
\newblock Distilling knowledge from ensembles of acoustic models for joint
  CTC-attention end-to-end speech recognition.
\newblock In \emph{2021 IEEE Automatic Speech Recognition and Understanding
  Workshop (ASRU)}, 138--145.

\bibitem[{Greff et~al.(2017)Greff, Srivastava, Koutnik, Steunebrink, and
  Schmidhuber}]{greff2017lstm}
Greff, K.; Srivastava, R.~K.; Koutnik, J.; Steunebrink, B.~R.; and Schmidhuber,
  J. 2017.
\newblock LSTM: A search space odyssey.
\newblock \emph{IEEE Transactions on Neural Networks and Learning Systems},
  28(10): 2222--2232.

\bibitem[{Gruver et~al.(2023)Gruver, Finzi, Qiu, and
  Wilson}]{gruver2023llmtime}
Gruver, N.; Finzi, M.; Qiu, S.; and Wilson, A.~G. 2023.
\newblock Large Language Models Are Zero‑Shot Time Series Forecasters.
\newblock In \emph{Advances in Neural Information Processing Systems},
  volume~36.

\bibitem[{Gu et~al.(2015)Gu, Sheng, Tay, Romano, and Li}]{Gu2015Incremental}
Gu, B.; Sheng, V.~S.; Tay, K.~Y.; Romano, W.; and Li, S. 2015.
\newblock Incremental Support Vector Learning for Ordinal Regression.
\newblock \emph{IEEE Transactions on Neural Networks and Learning Systems},
  26(7): 1403--1416.

\bibitem[{Guo et~al.(2019)Guo, Lin, Feng, and et~al.}]{Guo2019AttentionBasedST}
Guo, S.; Lin, Y.; Feng, N.; and et~al. 2019.
\newblock Attention Based Spatial-Temporal Graph Convolutional Networks for
  Traffic Flow Forecasting.
\newblock In \emph{Proceedings of the AAAI Conference on Artificial
  Intelligence}, volume~33, 922--929.

\bibitem[{Guti{\'e}rrez et~al.(2016)Guti{\'e}rrez, Perez-Ortiz,
  Sanchez-Monedero, Fern{\'a}ndez-Navarro, and
  Herv{\'a}s-Mart{\'\i}nez}]{gutierrez2015ordinal}
Guti{\'e}rrez, P.~A.; Perez-Ortiz, M.; Sanchez-Monedero, J.;
  Fern{\'a}ndez-Navarro, F.; and Herv{\'a}s-Mart{\'\i}nez, C. 2016.
\newblock Ordinal Regression Methods: Survey and Experimental Study.
\newblock \emph{IEEE Transactions on Knowledge and Data Engineering}, 28(1):
  127--146.

\bibitem[{Hampel(1974)}]{Hampel1974Influence}
Hampel, F.~R. 1974.
\newblock The influence curve and its role in robust estimation.
\newblock \emph{Journal of the American Statistical Association}, 69(346):
  383--393.

\bibitem[{Horn and Johnson(2012)}]{horn2012matrix}
Horn, R.~A.; and Johnson, C.~R. 2012.
\newblock \emph{Matrix Analysis}.
\newblock Cambridge: Cambridge University Press.

\bibitem[{Hu et~al.(2010)Hu, Guo, Yu, and Li}]{hu2010information}
Hu, Q.; Guo, M.; Yu, D.; and Li, Q. 2010.
\newblock Information Entropy for Ordinal Classification.
\newblock \emph{Science China Information Sciences}, 53(6): 1188--1200.

\bibitem[{Huber(1992)}]{huber1992robust}
Huber, P.~J. 1992.
\newblock Robust Estimation of a Location Parameter.
\newblock In Kotz, S.; and Johnson, N.~L., eds., \emph{Breakthroughs in
  Statistics: Methodology and Distribution}, 492--518.

\bibitem[{Hyndman et~al.(2008)Hyndman, Koehler, Ord, and
  Snyder}]{hyndman2008forecasting}
Hyndman, R.~J.; Koehler, A.~B.; Ord, K.; and Snyder, R.~D. 2008.
\newblock \emph{Forecasting with Exponential Smoothing: The State Space
  Approach}.

\bibitem[{Jin et~al.(2024)Jin, Wang, Ma, Chu, Zhang, Shi, Chen, Liang, Li, Pan,
  and Wen}]{jin2023time}
Jin, M.; Wang, S.; Ma, L.; Chu, Z.; Zhang, J.~Y.; Shi, X.; Chen, P.-Y.; Liang,
  Y.; Li, Y.-F.; Pan, S.; and Wen, Q. 2024.
\newblock {Time-LLM}: Time Series Forecasting by Reprogramming Large Language
  Models.
\newblock In \emph{International Conference on Learning Representations}.

\bibitem[{Kendall, Gal, and Cipolla(2018)}]{kendall2018multi}
Kendall, A.; Gal, Y.; and Cipolla, R. 2018.
\newblock Multi-task Learning Using Uncertainty to Weigh Losses for Scene
  Geometry and Semantics.
\newblock In \emph{Proceedings of the IEEE Conference on Computer Vision and
  Pattern Recognition (CVPR)}, 7482--7491.

\bibitem[{Kingma and Welling(2013)}]{kingma2013auto}
Kingma, D.~P.; and Welling, M. 2013.
\newblock Auto-Encoding Variational Bayes.
\newblock \emph{arXiv preprint arXiv:1312.6114}.
\newblock Online; accessed 20 December 2013.

\bibitem[{Koenker and Hallock(2001)}]{koenker2001quantile}
Koenker, R.; and Hallock, K.~F. 2001.
\newblock Quantile Regression.
\newblock \emph{Journal of Economic Perspectives}, 15(4): 143--156.

\bibitem[{Koh and Liang(2017)}]{pmlr-v70-koh17a}
Koh, P.~W.; and Liang, P. 2017.
\newblock Understanding black-box predictions via influence functions.
\newblock In \emph{International Conference on Machine Learning}, 1885--1894.

\bibitem[{Li et~al.(2025)Li, Liu, Wang, and Qian}]{Li2025}
Li, F.; Liu, X.; Wang, J.; and Qian, Y. 2025.
\newblock MCSS: Discovering Consistently Determined Relation in Multi-view
  Clustering Based on Sample’s Stability.
\newblock \emph{ACM Transactions on Knowledge Discovery from Data}.
\newblock Just Accepted.

\bibitem[{Li et~al.(2019{\natexlab{a}})Li, Qian, Wang, Dang, and
  Jing}]{li2019clustering}
Li, F.; Qian, Y.; Wang, J.; Dang, C.; and Jing, L. 2019{\natexlab{a}}.
\newblock Clustering ensemble based on sample's stability.
\newblock \emph{Artificial Intelligence}, 273: 37--55.

\bibitem[{Li et~al.(2023)Li, Wang, Qian, Liu, and Wang}]{li2023fuzzy}
Li, F.; Wang, J.; Qian, Y.; Liu, G.; and Wang, K. 2023.
\newblock Fuzzy ensemble clustering based on self-coassociation and prototype
  propagation.
\newblock \emph{IEEE Transactions on Fuzzy Systems}, 31(10): 3610--3623.

\bibitem[{Li et~al.(2019{\natexlab{b}})Li, Jin, Xuan, Zhou, Chen, Wang, and
  Yan}]{li2019enhancing}
Li, S.; Jin, X.; Xuan, Y.; Zhou, X.; Chen, W.; Wang, Y.; and Yan, X.
  2019{\natexlab{b}}.
\newblock Enhancing the locality and breaking the memory bottleneck of
  transformer on time series forecasting.
\newblock \emph{Advances in Neural Information Processing Systems}, 32.

\bibitem[{Lim et~al.(2021)Lim, Arik, Loeff, and Pfister}]{lim2021temporal}
Lim, B.; Arik, S.~O.; Loeff, N.; and Pfister, T. 2021.
\newblock Temporal Fusion Transformers for Interpretable Multi-horizon Time
  Series Forecasting.
\newblock \emph{International Journal of Forecasting}, 37(4): 1748--1764.

\bibitem[{Liu et~al.(2024)Liu, Wu, Hu et~al.}]{liu2024timebridge}
Liu, P.; Wu, B.; Hu, Y.; et~al. 2024.
\newblock TimeBridge: Non-Stationarity Matters for Long-term Time Series
  Forecasting.
\newblock \emph{arXiv preprint arXiv:2410.04442}.

\bibitem[{Liu, Pokharel, and Principe(2007)}]{liu2007correntropy}
Liu, W.; Pokharel, P.~P.; and Principe, J.~C. 2007.
\newblock Correntropy: Properties and Applications in Non-Gaussian Signal
  Processing.
\newblock \emph{IEEE Transactions on Signal Processing}, 55(11): 5286--5298.

\bibitem[{Liu et~al.(2023)Liu, Hu, Zhang, Zhang, Qin, Xie, Wu, Zhou, and
  Zhang}]{liu2023itransformer}
Liu, Y.; Hu, T.; Zhang, H.; Zhang, S.; Qin, Z.; Xie, Y.; Wu, Q.; Zhou, H.; and
  Zhang, W. 2023.
\newblock iTransformer: Inverted Transformers Are Effective for Time Series
  Forecasting.
\newblock \emph{arXiv preprint arXiv:2310.06625}.

\bibitem[{Liu, Wang, and Kong(2019)}]{liu2019probabilistic}
Liu, Y.; Wang, F.; and Kong, A. W.~K. 2019.
\newblock Probabilistic deep ordinal regression based on Gaussian processes.
\newblock In \emph{Proceedings of the IEEE/CVF International Conference on
  Computer Vision (ICCV)}, 5301--5309.

\bibitem[{Nie et~al.(2022)Nie, Nguyen, Sinthong, Wisteria, Gholami, and
  Gonzalez}]{nie2022time}
Nie, Y.; Nguyen, N.~H.; Sinthong, P.; Wisteria, R.; Gholami, A.; and Gonzalez,
  J. 2022.
\newblock A time series is worth 64 words: Long-term forecasting with
  transformers.
\newblock \emph{arXiv preprint arXiv:2211.14730}.

\bibitem[{Niu et~al.(2016)Niu, Zhou, Wang, and Gao}]{Niu2016ordinal}
Niu, Z.; Zhou, M.; Wang, L.; and Gao, Q. 2016.
\newblock Ordinal Regression with Multiple Output CNN for Age Estimation.
\newblock In \emph{Proceedings of the IEEE Conference on Computer Vision and
  Pattern Recognition (CVPR)}, 4920--4928.

\bibitem[{Pitawela, Carneiro, and Chen(2025)}]{pitawela2025cloc}
Pitawela, D.; Carneiro, G.; and Chen, H.-T. 2025.
\newblock CLOC: Contrastive Learning for Ordinal Classification with
  Multi-Margin N-pair Loss.
\newblock In \emph{Proceedings of the IEEE/CVF Conference on Computer Vision
  and Pattern Recognition (CVPR)}, 15538--15548.

\bibitem[{Pruthi et~al.(2020)Pruthi, Liu, Kale et~al.}]{Pruthi2020Estimating}
Pruthi, G.; Liu, F.; Kale, S.; et~al. 2020.
\newblock Estimating training data influence by tracing gradient descent.
\newblock \emph{Advances in Neural Information Processing Systems}, 33:
  19920--19930.

\bibitem[{Schioppa et~al.(2022)Schioppa, Zablotskaia, Vilar, Chaudhary,
  Guzm{\'a}n, and Schwenk}]{Schioppa2022Scaling}
Schioppa, A.; Zablotskaia, P.; Vilar, D.; Chaudhary, V.; Guzm{\'a}n, F.; and
  Schwenk, H. 2022.
\newblock Scaling Up Influence Functions.
\newblock In \emph{Proceedings of the AAAI Conference on Artificial
  Intelligence}, volume~36, 8179--8186.

\bibitem[{Seeger(2004)}]{seeger2004gaussian}
Seeger, M. 2004.
\newblock Gaussian Processes for Machine Learning.
\newblock \emph{International Journal of Neural Systems}, 14(02): 69--106.

\bibitem[{Shi, Cao, and Raschka(2023)}]{Shi2023RankConsistent}
Shi, X.; Cao, W.; and Raschka, S. 2023.
\newblock Deep neural networks for rank-consistent ordinal regression based on
  conditional probabilities.
\newblock \emph{Pattern Analysis and Applications}, 26(3): 941--955.

\bibitem[{Taylor and Letham(2018)}]{taylor2018forecasting}
Taylor, S.~J.; and Letham, B. 2018.
\newblock Forecasting at Scale.
\newblock \emph{The American Statistician}, 72(1): 37--45.

\bibitem[{Tonekaboni, Eytan, and Goldenberg(2021)}]{tonekaboni2021unsupervised}
Tonekaboni, S.; Eytan, D.; and Goldenberg, A. 2021.
\newblock Unsupervised representation learning for time series with temporal
  neighborhood coding.
\newblock \emph{arXiv preprint arXiv:2106.00750}.

\bibitem[{Vaswani et~al.(2017)Vaswani, Shazeer, Parmar, Uszkoreit, Jones,
  Gomez, Kaiser, and Polosukhin}]{vaswani2017attention}
Vaswani, A.; Shazeer, N.; Parmar, N.; Uszkoreit, J.; Jones, L.; Gomez, A.~N.;
  Kaiser, {\L}.; and Polosukhin, I. 2017.
\newblock Attention is all you need.
\newblock In \emph{Advances in Neural Information Processing Systems},
  volume~30.

\bibitem[{Wang et~al.(2023)Wang, Li, Li, Hou, Qian, and Liang}]{wang2023rss}
Wang, J.; Li, F.; Li, J.; Hou, C.; Qian, Y.; and Liang, J. 2023.
\newblock RSS-Bagging: Improving Generalization Through the Fisher Information
  of Training Data.
\newblock \emph{IEEE Transactions on Neural Networks and Learning Systems},
  36(2): 1974--1988.

\bibitem[{Wang, Qian, and Li(2020)}]{wang2020learning}
Wang, J.; Qian, Y.; and Li, F. 2020.
\newblock Learning with mitigating random consistency from the accuracy
  measure.
\newblock \emph{Machine Learning}, 109(12): 2247--2281.

\bibitem[{Wang et~al.(2022)Wang, Qian, Li, and Liu}]{wang2022generalization}
Wang, J.; Qian, Y.; Li, F.; and Liu, Y. 2022.
\newblock Generalization Performance of Pure Accuracy and Its Application in
  Selective Ensemble Learning.
\newblock \emph{IEEE Transactions on Pattern Analysis and Machine
  Intelligence}, 45(2): 1798--1816.

\bibitem[{Wang et~al.(2025)Wang, Zhang, Li, Qian, and
  Liang}]{wang2025stabilizing}
Wang, J.; Zhang, Z.; Li, F.; Qian, Y.; and Liang, X. 2025.
\newblock Stabilizing Sample Similarity in Representation via Mitigating Random
  Consistency.
\newblock In \emph{Proceedings of the 42nd International Conference on Machine
  Learning}, volume 267, 65655--65686.

\bibitem[{Wang et~al.(2024)Wang, Wu, Dong, Liu, Liu, Zhao, and
  Tian}]{wang2024timexer}
Wang, Y.; Wu, H.; Dong, J.; Liu, G.; Liu, J.; Zhao, D.; and Tian, Q. 2024.
\newblock Timexer: Empowering Transformers for Time Series Forecasting with
  Exogenous Variables.
\newblock \emph{arXiv preprint arXiv:2402.19072}.

\bibitem[{Wu et~al.(2021)Wu, Xu, Wang, Chen, Xue, Xue, and
  Tian}]{wu2021autoformer}
Wu, H.; Xu, J.; Wang, J.; Chen, Y.; Xue, Y.; Xue, Y.; and Tian, Q. 2021.
\newblock Autoformer: Decomposition transformers with auto-correlation for
  long-term series forecasting.
\newblock \emph{Advances in Neural Information Processing Systems}, 34:
  22419--22430.

\bibitem[{Wu et~al.(2020)Wu, Pan, Long, Jiang, Chang, and
  Zhang}]{wu2020connecting}
Wu, Z.; Pan, S.; Long, G.; Jiang, J.; Chang, X.; and Zhang, C. 2020.
\newblock Connecting the Dots: Multivariate Time Series Forecasting with Graph
  Neural Networks.
\newblock In \emph{Proceedings of the ACM SIGKDD International Conference on
  Knowledge Discovery \& Data Mining}, 753--763.

\bibitem[{Yue et~al.(2022)Yue, Wang, Duan, Lin, He, Ma, Zhang, Yang, and
  Yu}]{yue2022ts2vec}
Yue, Z.; Wang, Y.; Duan, J.; Lin, Z.; He, Q.; Ma, Z.; Zhang, Y.; Yang, Y.; and
  Yu, P.~S. 2022.
\newblock TS2Vec: Towards universal representation of time series.
\newblock In \emph{Proceedings of the AAAI Conference on Artificial
  Intelligence}, volume~36, 8980--8987.

\bibitem[{Zeng et~al.(2023)Zeng, Chen, Zhang, Zhou, Peng, Zhang, and
  Zhang}]{zeng2023transformers}
Zeng, A.; Chen, M.; Zhang, L.; Zhou, H.; Peng, J.; Zhang, S.; and Zhang, W.
  2023.
\newblock Are transformers effective for time series forecasting?
\newblock In \emph{Proceedings of the AAAI Conference on Artificial
  Intelligence}, volume~37, 11121--11128.

\bibitem[{Zhou et~al.(2021)Zhou, Zhang, Peng, Zhang, Li, and
  Zhang}]{zhou2021informer}
Zhou, H.; Zhang, S.; Peng, J.; Zhang, S.; Li, Z.; and Zhang, W. 2021.
\newblock Informer: Beyond efficient transformer for long sequence time-series
  forecasting.
\newblock In \emph{Proceedings of the AAAI Conference on Artificial
  Intelligence}, volume~35, 11106--11115.

\bibitem[{Zhou et~al.(2022)Zhou, Ma, Wen, Zou, Geng, Li, Li, and
  Zhang}]{zhou2022fedformer}
Zhou, T.; Ma, Z.; Wen, Q.; Zou, Y.; Geng, X.; Li, J.; Li, H.; and Zhang, Y.
  2022.
\newblock Fedformer: Frequency enhanced decomposed transformer for long-term
  series forecasting.
\newblock In \emph{Proceedings of the International Conference on Machine
  Learning}, 27268--27286.

\end{thebibliography}

\fi
\setlength{\leftmargini}{20pt}
\makeatletter\def\@listi{\leftmargin\leftmargini \topsep .5em \parsep .5em \itemsep .5em}
\def\@listii{\leftmargin\leftmarginii \labelwidth\leftmarginii \advance\labelwidth-\labelsep \topsep .4em \parsep .4em \itemsep .4em}
\def\@listiii{\leftmargin\leftmarginiii \labelwidth\leftmarginiii \advance\labelwidth-\labelsep \topsep .4em \parsep .4em \itemsep .4em}\makeatother

\setcounter{secnumdepth}{0}
\renewcommand\thesubsection{\arabic{subsection}}
\renewcommand\labelenumi{\thesubsection.\arabic{enumi}}

\newcounter{checksubsection}
\newcounter{checkitem}[checksubsection]

\newcommand{\checksubsection}[1]{%
	\refstepcounter{checksubsection}%
	\paragraph{\arabic{checksubsection}. #1}%
	\setcounter{checkitem}{0}%
}

\newcommand{\checkitem}{%
	\refstepcounter{checkitem}%
	\item[\arabic{checksubsection}.\arabic{checkitem}.]%
}
\newcommand{\question}[2]{\normalcolor\checkitem #1 #2 \color{blue}}
\newcommand{\ifyespoints}[1]{\makebox[0pt][l]{\hspace{-15pt}\normalcolor #1}}

\newpage
\clearpage
\appendix

\twocolumn[%
\begin{@twocolumnfalse}
	\begin{center}
		\Huge\textbf{Appendix}
	\end{center}
	\vspace{2em}
\end{@twocolumnfalse}
]

The appendix includes:
\begin{itemize}
	\item \textbf{A Definition and Proof.}
	\item \textbf{B Algorithm Framework.}
	\item \textbf{C More Experimental Results.}
\end{itemize}

\section {Definition and Proof}

\subsection{The Definition of Cross-Entropy Loss}
\label{app:CE}
\begin{definition}[Standard Cross-Entropy Loss (General Form)]
	Let $Y \in \{1,\dots,K\}$ be the true class label and $\hat{Y}$ the predicted class. Given the predicted probability distribution $\bm{q} = [q_1,\dots,q_K]$ where $q_k = \mathbb{P}(\hat{Y} = k)$ and the true probability distribution $\bm{p} = [p_1,\dots,p_K]$, the cross-entropy loss is:
	\begin{equation}\label{eq:ce_general}
		\mathcal{L}_{\text{CE}}(Y, \hat{Y}) = -\sum_{k=1}^K p_k \log q_k.
	\end{equation}
	For hard labels (one-hot encoding):
	\begin{equation}
		p_k = \mathbb{P}_{\text{true}}(Y = k) = \begin{cases}
			1 & \text{if } Y = k ,\\
			0 & \text{otherwise}.
		\end{cases}
	\end{equation}
	The cross-entropy loss simplifies to:
	\begin{equation}\label{eq:ce_onehot}
		\mathcal{L}_{\text{CE}}(Y, \hat{Y}) = -\log q_Y.
	\end{equation}
	where $q_Y$ is the predicted probability for the true class.
\end{definition}

\subsection{Influence Function and Related Work}
Influence functions (IF)~\cite{Hampel1974Influence} quantify the marginal impact of individual training samples on model parameters or predictions. First, for efficient computation,~\cite{pmlr-v70-koh17a} introduces stochastic Hessian inverse approximation for large models and~\cite{Schioppa2022Scaling} implements Arnoldi iteration to accelerate this for Transformers. Second, for interpretability and robustness, IFs identify influential training samples for debugging, anomaly detection, and robustness evaluation~\cite{Pruthi2020Estimating}, reveal model sensitivity to input perturbations~\cite{Basu2020Influence}, and aid adversarial detection and data quality assessment.

\begin{definition}[Influence Function]
	Let \( T \) be a statistical functional mapping from a probability distribution to a real number (or vector), such as the Mean $ T(P) = \mathbb{E}_P[X] = \int x \, dP(x) $.
	Let \( P_\epsilon \) be the contaminated distribution with
	a mixture of the true distribution \( P \) and a Dirac delta distribution \( \delta_y \) at \( y \):
	\begin{equation}
		P_\epsilon = (1 - \epsilon)P + \epsilon \delta_y,
	\end{equation}
	where \( \epsilon \in [0, 1) \) is the contamination proportion. Intuitively, \( P_\epsilon \) represents a scenario where most of the data comes from \( P \), but a small fraction \( \epsilon \) is replaced by an outlier at \( y \).
\end{definition}

The influence function (IF) is a fundamental tool in robust statistics that quantifies the effect of infinitesimal contamination on a statistical functional \( T \)~\cite{pmlr-v70-koh17a}.
It is defined as the directional derivative of \( T \) at \( P \) in the direction of \( \delta_y \):
\begin{equation}
	\text{IF}(y; T, P) = \lim_{\epsilon \to 0} \frac{T(P_\epsilon) - T(P)}{\epsilon}.
\end{equation}
If \( \text{IF}(y; T, P) \) is unbounded for some \( y \), \( T \) is non-robust (e.g., the mean). If it is bounded, \( T \) is robust (e.g., the median).
IF describes how much \( T \) changes when a small fraction of data is perturbed at \( y \).

\subsection{ The Influence Function of Model Parameter}
In machine learning,
influence function analysis quantifies the contribution of individual training samples to model parameters or predictions, revealing data importance and enhancing model interpretability. It can identify critical samples (e.g., marginal data or outliers), detect noisy or biased data, and assess model robustness.


For a model parameterized by $\bm{\theta} \in \Theta$, trained via empirical risk minimization (ERM) with loss $L(\bm{z}, \bm{\theta})$, the empirical distribution $P_n = \frac{1}{n} \sum_{i=1}^n \delta_{\bm{z}_i}$. The parameter estimate $\hat{\bm{\theta}}$ is obtained by minimizing the expected loss over the empirical distribution $P_n$:
\begin{equation}
	\hat{\bm{\theta}} = \arg\min_{\bm{\theta}} \mathbb{E}_{\bm{z} \sim P_n} [L(\bm{z}, \bm{\theta})] = \arg\min_{\bm{\theta}} \frac{1}{n} \sum_{i=1}^n L(\bm{z}_i, \bm{\theta}).
\end{equation}
To compute the influence of a new point $\bm{z}$, consider a perturbed distribution:
\begin{equation}
	P_\epsilon = (1 - \epsilon)P_n + \epsilon \delta_{\bm{z}},
\end{equation}
where $\epsilon$ is a small weight added to the point $\bm{z}$. The new parameter estimate $\hat{\bm{\theta}}_\epsilon$ minimizes:
\begin{equation}
	\mathbb{E}_{\bm{z} \sim P_\epsilon}[L(\bm{z}, \bm{\theta})] = (1 - \epsilon) \mathbb{E}_{\bm{z} \sim P_n}[L(\bm{z}, \bm{\theta})] + \epsilon L(\bm{z}, \bm{\theta}).
\end{equation}

To study the sensitivity of the loss function to perturbations, we derive the derivative \(\left.\frac{d\hat{\bm{\theta}}_\epsilon}{d\epsilon}\right|_{\epsilon=0}\), thereby defining the influence function of $\bm{\theta}$.
Analyzing the behavior at $\epsilon = 0$ (i.e., the derivative at zero perturbation) is to study the model's sensitivity to infinitesimal data perturbations. According to the
first-order optimality condition and the implicit function theorem (IFT), we obtain
the influence function $\text{IF}(\bm{z}; \hat{\bm{\theta}}, P)$ is:
\begin{equation}
	\text{IF}(\bm{z}; \hat{\bm{\theta}}, P) = \left. \frac{d\hat{\bm{\theta}}_\epsilon}{d\epsilon} \right|_{\epsilon=0} = -H_{\hat{\bm{\theta}}}^{-1} \nabla_{\bm{\theta}} L(\bm{z}, \hat{\bm{\theta}}).
	\label{IFtheta}
\end{equation}
This quantifies how much $\hat{\bm{\theta}}$ changes when an infinitesimal weight is added at $\bm{z}$.

The details of Equation (\ref{IFtheta}) is obtained as follows.
The Implicit Function Theorem is used to derive how $\hat{\bm{\theta}}_\epsilon$ changes with $\epsilon$ at $\epsilon = 0$. Firstly, according to
the first-order optimality condition,
at $\hat{\bm{\theta}}_\epsilon$, the gradient of the perturbed risk is zero:
\begin{equation}
	(1 - \epsilon) \nabla_{\bm{\theta}} \mathbb{E}_{P_n}[L(\bm{z}, \hat{\bm{\theta}}_\epsilon)] + \epsilon \nabla_{\bm{\theta}} L(\bm{z}, \hat{\bm{\theta}}_\epsilon) = 0.
\end{equation}

Secondly, differentiate the above equality w.r.t. $\epsilon$, then apply the chain rule and evaluate at $\epsilon = 0$, we have:
\begin{align}
	&  -\nabla_{\bm{\theta}} \mathbb{E}_{P_n}[L(\bm{z}, \hat{\bm{\theta}})] + \nabla_{\bm{\theta}} L(\bm{z}, \hat{\bm{\theta}})  \\&
	\quad \quad +\mathbb{E}_{P_n}[\nabla_{\bm{\theta}}^2 L(\bm{z}, \hat{\bm{\theta}})] \cdot \left. \frac{d\hat{\bm{\theta}}_\epsilon}{d\epsilon} \right|_{\epsilon=0}
	= 0.
\end{align}
The first term vanishes because $\hat{\bm{\theta}}$ minimizes the empirical risk (gradient is zero).
Finally, solve for the derivative, we have:
\begin{equation}
	\left. \frac{d\hat{\bm{\theta}}_\epsilon}{d\epsilon} \right|_{\epsilon=0} = -H_{\hat{\bm{\theta}}}^{-1} \nabla_{\bm{\theta}} L(\bm{z}, \hat{\bm{\theta}}),
	\label{IFdef}
\end{equation}
where $H_{\hat{\bm{\theta}}} = \mathbb{E}_{P_n}[\nabla_{\bm{\theta}}^2 L(\bm{z}, \hat{\bm{\theta}})]$ is the empirical Hessian (assumed invertible).

\subsection{ Prerequisites for Influence Function Analysis}

For matrices $\bm{A} \in \mathbb{R}^{m \times n}$ and $\bm{B} \in \mathbb{R}^{p \times q}$, their Kronecker product is defined as the block matrix \cite{horn2012matrix}:
\[
\bm{A} \otimes \bm{B} = \begin{bmatrix}
	a_{11}\bm{B} & \cdots & a_{1n}\bm{B} \\
	\vdots & \ddots & \vdots \\
	a_{m1}\bm{B} & \cdots & a_{mn}\bm{B}
\end{bmatrix} \in \mathbb{R}^{mp \times nq}
\]

If $\bm{\Sigma}_X \in \mathbb{R}^{d \times d}$ and $\bm{P} \in \mathbb{R}^{K \times K}$ are invertible, then:
\[
(\bm{\Sigma}_X \otimes \bm{P})^{-1} = \bm{\Sigma}_X^{-1} \otimes \bm{P}^{-1}
\]

Let $\bm{A} \in \mathbb{R}^{n \times n}$ be a real symmetric matrix. Its spectral norm is defined as:
\begin{align}
	\|\bm{A}\|_2 & = \max_{\bm{x} \neq \bm{0}} \frac{\|\bm{Ax}\|_2}{\|\bm{x}\|_2} \\ \notag
	& = \max \{ |\lambda| : \lambda \text{ is an eigenvalue of } \bm{A} \},
\end{align}
That is:
\begin{equation}
	\|\bm{A}\|_2 = |\lambda_{\max}(\bm{A})|.
\end{equation}

For arbitrary matrices $\bm{A}$, $\bm{B}$:
\[
\|\bm{A} \otimes \bm{B}\|_2 = \|\bm{A}\|_2 \cdot \|\bm{B}\|_2
\]

If $\bm{X}$ is symmetric positive definite, the condition number of matrix $\bm{X} \in \mathbb{R}^{n \times n}$ measures its numerical stability:
\[
\kappa_2(\bm{X}) := \|\bm{X}\|_2 \cdot \|\bm{X}^{-1}\|_2 = \frac{\lambda_{\max}(\bm{X})}{\lambda_{\min}(\bm{X})}.
\]

\subsubsection{Bound on Probability Residual Norm}

\begin{lemma}
	\label{lem:prob_bound}
	Let $\bm{\sigma} \in \mathbb{R}^K$ be a probability vector from the softmax function (i.e., $\sigma_k \geq 0$, $\sum_{k=1}^K \sigma_k = 1$), and $\mathbf{e}_y \in \{0,1\}^K$ be the one-hot encoded ground-truth label. The Euclidean norm of the probability residual satisfies:
	\begin{equation}
		\|\bm{\sigma} - \mathbf{e}_y\|_2 \leq \sqrt{2},
		\label{eq:main_bound}
	\end{equation}
	with equality achieved when $\sigma_y = 0$ (i.e., the model assigns zero probability to the true class).
\end{lemma}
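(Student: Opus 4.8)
The plan is to reduce the statement to a bound on the squared Euclidean norm and then close it by either of two elementary routes — a convexity argument over the probability simplex, or a direct componentwise estimate. Neither requires any hypothesis beyond $\sigma_k\ge 0$ and $\sum_k\sigma_k=1$ (the softmax origin of $\bm{\sigma}$ is used only in the final remark about attainability), and the single point that needs genuine care is the characterization of the equality case.

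First I would expand, writing $e_{y,k}=\mathbb{I}\{k=y\}$:
\[
\|\bm{\sigma} - \mathbf{e}_y\|_2^2 = (\sigma_y - 1)^2 + \sum_{k \neq y} \sigma_k^2 .
\]
For the convexity route, observe that the feasible set $\Delta_{K-1}=\{\bm{\sigma}:\sigma_k\ge 0,\ \sum_k\sigma_k=1\}$ is a compact convex polytope whose extreme points are exactly the one-hot vectors $\mathbf{e}_1,\dots,\mathbf{e}_K$. Since $\bm{\sigma}\mapsto\|\bm{\sigma}-\mathbf{e}_y\|_2^2$ is a convex (quadratic) function, its maximum over $\Delta_{K-1}$ is attained at a vertex; and $\|\mathbf{e}_j-\mathbf{e}_y\|_2^2$ equals $0$ for $j=y$ and $2$ for $j\neq y$. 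Hence the maximum is $2$, which gives $\|\bm{\sigma}-\mathbf{e}_y\|_2\le\sqrt{2}$.

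Alternatively, the direct route avoids any polytope theory: since each $\sigma_k\in[0,1]$ we have $\sigma_k^2\le\sigma_k$, so $\sum_{k\neq y}\sigma_k^2\le\sum_{k\neq y}\sigma_k=1-\sigma_y$. Substituting and setting $t=\sigma_y\in[0,1]$ gives $\|\bm{\sigma}-\mathbf{e}_y\|_2^2\le (1-t)^2+(1-t)=:g(t)$, and since $g'(t)=2t-3<0$ on $[0,1]$, $g$ is decreasing with $g(0)=2$; hence the squared norm is at most $2$.

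The only subtle step — which I would treat as the "hard part," though it is mild — is the equality claim. Tracing back through the direct route, equality forces both $\sigma_y=0$ (to make $g$ attain its maximum) and $\sigma_k^2=\sigma_k$, i.e. $\sigma_k\in\{0,1\}$, for every $k\neq y$; combined with $\sum_{k\neq y}\sigma_k=1$ this means exactly one such $\sigma_k$ equals $1$. So the sharp condition is that $\bm{\sigma}$ is one-hot on a class $j\neq y$, which in particular implies $\sigma_y=0$ as the lemma states; conversely, for a genuine softmax output $\sigma_y>0$ strictly, so $\sqrt{2}$ is a supremum that is approached, but not attained, as the predicted mass on the true class vanishes.
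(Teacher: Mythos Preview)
Your proof is correct; both routes work, and your equality analysis is sharper than the paper's. The paper also begins by expanding $\|\bm{\sigma}-\mathbf{e}_y\|_2^2=(1-\sigma_y)^2+\sum_{k\neq y}\sigma_k^2$, but then asserts that $\sum_{k\neq y}\sigma_k^2$ is \emph{maximized} when the $\sigma_k$ are all equal ``by convexity,'' and evaluates the ``worst case'' at $\sigma_y=0$, $\sigma_k=\tfrac{1}{K-1}$ to obtain $1+\tfrac{1}{K-1}\le 2$. That step is mis-stated: convexity of $t\mapsto t^2$ makes the equal-mass point the \emph{minimizer} of the sum of squares under a fixed sum, not the maximizer, so the paper's ``worst case'' is in fact the best case on the slice $\sigma_y=0$; the conclusion survives only because $1+\tfrac{1}{K-1}\le 2$ happens to hold anyway. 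Your first route (a convex quadratic on the simplex attains its maximum at a vertex $\mathbf{e}_j$, $j\neq y$) locates the true maximizer directly, and your second route (the elementary bound $\sigma_k^2\le\sigma_k$) sidesteps any optimization over the off-true coordinates altogether. You also correctly note that equality requires $\bm{\sigma}$ to be one-hot on some $j\neq y$, not merely $\sigma_y=0$, and that a genuine softmax output therefore approaches but never attains $\sqrt{2}$ --- a refinement the paper's equality clause does not make.
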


\begin{proof}
	Expand the squared norm:
	\begin{equation}
		\|\bm{\sigma} - \mathbf{e}_y\|_2^2 = \sum_{k=1}^K (\sigma_k - \delta_{k,y})^2 = (1 - \sigma_y)^2 + \sum_{k \neq y} \sigma_k^2,
		\label{eq:decomp}
	\end{equation}
	where $\delta_{k,y}$ is the Kronecker delta ($\delta_{k,y} = 1$ if $k = y$, else $0$).
	
	Since $\sum_{k \neq y} \sigma_k = 1 - \sigma_y$ and $\sigma_k \in [0,1]$, the sum of squares $\sum_{k \neq y} \sigma_k^2$ is maximized when all $\sigma_k$ for $k \neq y$ are equal (by convexity).
	
	The worst case occurs when $\sigma_y = 0$ and $\sigma_k = \frac{1}{K-1}$ for $k \neq y$:
	\begin{align}
		\|\bm{\sigma} - \mathbf{e}_y\|_2^2 & = (1 - 0)^2 + \sum_{k \neq y} \left(\frac{1}{K-1}\right)^2 ,\\
		& = 1 + \frac{1}{K-1} \leq 2 \quad (\forall K \geq 2).
		\label{eq:worst_case}
	\end{align}
\end{proof}

\subsection{ IF for MSE Loss}

We consider the standard linear regression model $y = \bm{x}^\top\bm{\theta} + \epsilon$, where $\bm{x} \in \mathbb{R}^d$ is the $d$-dimensional feature vector, $\bm{\theta} \in \mathbb{R}^d$ is the $d$-dimensional parameter vector, $\epsilon \in \mathbb{R}$ is the scalar noise term, and $y \in \mathbb{R}$ is the scalar response.

The mean squared error loss function for a data point $\bm{z} = (\bm{x},y)$ is given by:
\begin{equation}
	L(\bm{z},\bm{\theta}) = \frac{1}{2}(y - \bm{x}^\top\bm{\theta})^2 \in \mathbb{R}
\end{equation}

The gradient and Hessian of the loss function take the forms:
\begin{align}
	\nabla_\theta L(\bm{z},\bm{\theta}) &= -\bm{x}(y - \bm{x}^\top\bm{\theta}) \in \mathbb{R}^d, \\
	\nabla_\theta^2 L(\bm{z},\bm{\theta}) & = \bm{x}\bm{x}^\top \in \mathbb{R}^{d\times d},
\end{align}
where the expected Hessian $\bm{H}_\theta = \mathbb{E}[\bm{x}\bm{x}^\top] \in \mathbb{R}^{d\times d}$ corresponds to the $d\times d$ feature covariance matrix.

The influence function for MSE can be derived as:
\begin{equation}
	\text{IF}(\bm{z};\hat{\bm{\theta}}) = \left(\mathbb{E}[\bm{x}\bm{x}^\top]\right)^{-1}\bm{x}(y - \bm{x}^\top\hat{\bm{\theta}}) \in \mathbb{R}^d
\end{equation}
which quantifies the $d$-dimensional effect of individual data points on parameter estimates.

By the submultiplicative property of spectral norm, we have the upper Bound for $\|\text{IF}_{\text{MSE}}\|_2$:
\begin{align*}
	&\|\text{IF}_{\text{MSE}}\|_2 = \|\bm{\Sigma}_X^{-1} \bm{x} (y - \bm{x}^\top\bm{\theta})\|_2 \\
	&\leq \|\bm{\Sigma}_X^{-1}\|_2 \cdot \|\bm{x}\|_2 \cdot |y - \bm{x}^\top\bm{\theta}|.
\end{align*}
By the definition of spectral norm, we have the lower Bound for $\|\text{IF}_{\text{MSE}}\|_2$:
\begin{align*}
	&\|\text{IF}_{\text{MSE}}\|_2 = \|\bm{\Sigma}_X^{-1} \bm{x} (y - \bm{x}^\top\bm{\theta})\|_2 \\
	&\geq \sigma_{\min}(\bm{\Sigma}_X^{-1}) \cdot \|\bm{x}\|_2 \cdot |y - \bm{x}^\top\bm{\theta}| \\
	&= \frac{\|\bm{x}\|_2 \cdot |y - \bm{x}^\top\bm{\theta}|}{\|\bm{\Sigma}_X\|_2} \quad \text{(since $\sigma_{\min}(\bm{A}^{-1}) = 1/\|\bm{A}\|_2$)}.
\end{align*}

The norm of the influence function $\|\text{IF}_{\text{MSE}}\|_2$ becomes large under several key conditions:

First, when samples have large prediction errors $\lvert y - \bm{x}^\top\bm{\theta} \rvert \gg 0$ (such as outliers), both the upper and lower bounds increase substantially, leading to significant growth in $\|\text{IF}_{\text{MSE}}\|_2$.

Second, the bounds scale linearly with the feature norm $\|\bm{x}\|_2 \gg 0$, meaning that samples with extreme feature values (known as high leverage points) exert disproportionate influence on the estimator.

Third, an ill-conditioned covariance matrix, characterized by $\|\bm{\Sigma}_X^{-1}\|_2 \gg 0$ or equivalently $\lambda_{\min}(\bm{\Sigma}_X) \approx 0$, significantly affects the influence function. This condition primarily relates to the shape of the data distribution and occurs when features exhibit near-linear dependence (collinearity). The ill-conditioning amplifies the upper bound of the influence function through the term $\|\bm{\Sigma}_X^{-1}\|_2$.

Fourth, a concentrated data distribution, indicated by $\|\bm{\Sigma}_X\|_2 \ll 1$ or equivalently $\lambda_{\max}(\bm{\Sigma}_X) \ll 1$, also increases the influence function's sensitivity. This condition reflects the overall scale of the data, showing that all features have small magnitudes (low variance). The concentration effect primarily impacts the lower bound via $\|\bm{\Sigma}_X\|_2$, making even moderate residuals more influential when the data variance is small.

These conditions interact in important ways. The ill-conditioned covariance matrix focuses on the relative scaling between features (anisotropic stretching), while the concentrated distribution concerns the absolute scale of the data (uniform shrinkage). When both conditions occur simultaneously - with $\lambda_{\min}(\bm{\Sigma}_X) \approx 0$ and $\lambda_{\max}(\bm{\Sigma}_X) \ll 1$ - their combined effect can make the estimator particularly sensitive to individual data points. Understanding these relationships helps explain why certain data characteristics lead to larger influence function values and provides insights for improving model robustness through techniques like regularization and data preprocessing.

\subsection{ IF for Classification}
\label{app:IF}
Consider a $K$-class classification problem with a parametric softmax model. Let $\bm{x} \in \mathbb{R}^d$ be an input feature vector and $y \in \{1,...,K\}$ its corresponding class label. The model's predicted probabilities are given by:
\begin{equation}
	p(y|\bm{x}; \bm{\beta}) = \sigma(\bm{x}^T \bm{\beta})_y = \frac{e^{\bm{x}^T \bm{\beta}_y}}{\sum_{k=1}^K e^{\bm{x}^T \bm{\beta}_k}},
\end{equation}
where $\bm{\beta} = [\bm{\beta}_1,...,\bm{\beta}_K] \in \mathbb{R}^{d \times K}$ contains the model parameters. The cross-entropy loss for a single observation $\bm{z} = (\bm{x}, y)$ is:
\begin{equation}
	L(\bm{z}, \bm{\beta}) = -\sum_{k=1}^K \mathbb{I}_{y=k} \log \sigma(\bm{x}^T \bm{\beta})_k,
\end{equation}
where $\mathbb{I}_{y=k}$ denotes the indicator function that equals 1 when $y=k$ and 0 otherwise. This loss function is widely used in classification tasks due to its desirable properties, including differentiability and appropriateness for probability estimation.
From Eq. (\ref{IFdef}), computing the influence function (IF) requires both the first-order derivative (gradient) and the Hessian matrix.

\subsubsection{Gradient Derivation}
The gradient of the cross-entropy loss with respect to the model parameters $\bm{\beta}$ can be computed as follows. For a single observation $\bm{z} = (\bm{x},y)$, by the chain rule, we have:
\begin{align}
	\nabla_{\bm{\beta}} L(\bm{z}, \bm{\beta}) &= \nabla_{\bm{\beta}} \left[ -\log \sigma(\bm{x}^T \bm{\beta})_y \right] ,\\ \notag
	&= -\frac{1}{\sigma(\bm{x}^T \bm{\beta})_y} \nabla_{\bm{\beta}} \sigma(\bm{x}^T \bm{\beta})_y,
\end{align}

The gradient of the softmax function has two cases. For the true class $y$:
\begin{equation}
	\frac{\partial \sigma(\bm{x}^T \bm{\beta})_y}{\partial \bm{\beta}_y} = \sigma(\bm{x}^T \bm{\beta})_y (1 - \sigma(\bm{x}^T \bm{\beta})_y) \bm{x},
\end{equation}

For any other class $k \neq y$:
\begin{equation}
	\frac{\partial \sigma(\bm{x}^T \bm{\beta})_y}{\partial \bm{\beta}_k} = -\sigma(\bm{x}^T \bm{\beta})_y \sigma(\bm{x}^T \bm{\beta})_k \bm{x},
\end{equation}

Combining these partial derivatives, we obtain the full gradient:
\begin{equation}
	\nabla_{\bm{\beta}} L(\bm{z}, \bm{\beta}) = \bm{x} \left( \sigma(\bm{x}^T \bm{\beta}) - \mathbf{e}_y \right) \in \mathbb{R}^{d \times K},
\end{equation}
where $\mathbf{e}_y$ is the one-hot vector with 1 at position $y$ and 0 elsewhere. This compact form reveals several important properties: the gradient magnitude scales linearly with the input features $\bm{x}$; the direction depends on the discrepancy between predictions $\sigma(\bm{x}^T \bm{\beta})$ and ground truth $\mathbf{e}_y$; for correctly classified examples with high confidence ($\sigma(\bm{x}^T \bm{\beta})_y \approx 1$), the gradient approaches zero.

\subsubsection{Derivation of the Hessian Matrix}
The Hessian matrix $H_{\bm{\beta}}$ is a second-order derivative matrix of dimension $(dK) \times (dK)$. It can be partitioned into $K \times K$ block matrices, where each block $H_{ij} \in \mathbb{R}^{d \times d}$ corresponds to the interaction between parameters $\bm{\beta}_i$ and $\bm{\beta}_j$:
\begin{equation}
	H_{\bm{\beta}} = \begin{bmatrix}
		\frac{\partial^2 L}{\partial \bm{\beta}_1 \partial \bm{\beta}_1} & \cdots & \frac{\partial^2 L}{\partial \bm{\beta}_1 \partial \bm{\beta}_K} \\
		\vdots & \ddots & \vdots \\
		\frac{\partial^2 L}{\partial \bm{\beta}_K \partial \bm{\beta}_1} & \cdots & \frac{\partial^2 L}{\partial \bm{\beta}_K \partial \bm{\beta}_K}
	\end{bmatrix}
\end{equation}

For any $i,j \in \{1,\dots,K\}$, we compute the block matrix $H_{ij} = \frac{\partial^2 L}{\partial \bm{\beta}_i \partial \bm{\beta}_j}$.

First, consider the case where $i = j = y$ (true class). The Hessian block is derived through:
\begin{align}
	H_{yy} &= \frac{\partial}{\partial \bm{\beta}_y} \left[ \bm{x} \left( \sigma(\bm{x}^\top \bm{\beta})_y - 1 \right) \right], \\
	&= \bm{x} \bm{x}^\top \sigma(\bm{x}^\top \bm{\beta})_y (1 - \sigma(\bm{x}^\top \bm{\beta})_y),
\end{align}
This expression captures the curvature information specific to the correct classification decision.

Next, for the case where either $i = y, j \neq y$ or $i \neq y, j = y$, the off-diagonal blocks take the form:
\begin{align}
	H_{yk} &= \frac{\partial}{\partial \bm{\beta}_k} \left[ \bm{x} \left( \sigma(\bm{x}^\top \bm{\beta})_y \right) \right], \\
	&= -\bm{x} \bm{x}^\top \sigma(\bm{x}^\top \bm{\beta})_y \sigma(\bm{x}^\top \bm{\beta})_k, \quad k \neq y
\end{align}
These terms quantify how parameter adjustments for incorrect classes affect the true class gradient.

Finally, for cases where $i,j \neq y$ (both non-true classes), the Hessian blocks exhibit a symmetric pattern:
\begin{equation}
	H_{ij} = \begin{cases}
		\bm{x} \bm{x}^\top \sigma(\bm{x}^\top \bm{\beta})_i (1 - \sigma(\bm{x}^\top \bm{\beta})_i), & i = j \\
		-\bm{x} \bm{x}^\top \sigma(\bm{x}^\top \bm{\beta})_i \sigma(\bm{x}^\top \bm{\beta})_j, & i \neq j
	\end{cases}
\end{equation}
revealing how incorrect classes interact with each other in parameter space.

The complete Hessian matrix admits an elegant Kronecker product representation. By defining $\bm{P}  = \text{diag}(\sigma(\bm{x}^\top \bm{\beta})) - \sigma(\bm{x}^\top \bm{\beta}) \sigma(\bm{x}^\top \bm{\beta})^\top \in \mathbb{R}^{K \times K}$ as the covariance matrix of softmax outputs, we obtain:
\begin{equation}
	H_{\bm{\beta}} = \nabla_{\bm{\beta}}^2 L(\bm{z}, \bm{\beta}) = \bm{x} \bm{x}^\top \otimes\bm{P},
\end{equation}
where $\otimes$ denotes the Kronecker product.

This compact formulation explicitly shows the Hessian's positive semi-definite nature, as both $\bm{x} \bm{x}^\top$ and the softmax covariance matrix $\bm{P}$ are themselves positive semi-definite. The Kronecker structure separates the input feature interactions (through $\bm{x} \bm{x}^\top$) from the class probability relationships (through $\bm{P}$), providing insight into the loss landscape's fundamental geometry.

The expected Hessian over $x$ is:
\begin{align}
	\mathbb{E}[H_{\bm{\beta}}] \nonumber \\
	\makebox[0pt][r]{${}={}$} \mathbb{E} &\left[ \bm{x} \bm{x}^\top \otimes \left( \text{diag}(\sigma(\bm{x}^\top \bm{\beta})) - \sigma(\bm{x}^\top \bm{\beta}) \sigma(\bm{x}^\top \bm{\beta})^\top \right) \right].
\end{align}

\subsubsection{IF for Cross Entropy Loss}
Based on the gradient and Hessian matrix of the cross-entropy loss, the complete mathematical expression of the influence function is as follows:
\begin{equation}
	\text{IF}(\bm{z}; \bm{\beta}) = -\mathbf{H}_{\bm{\beta}}^{-1} \cdot \text{vec}\left( \nabla_{\bm{\beta}} L(\mathbf{z}, \bm{\beta}) \right) \in \mathbb{R}^{dK \times 1},
\end{equation}
where $\text{vec}(\bm{\alpha})\in \mathbb{R}^{dK \times 1}$ is the matrix vectorization operation for
$\bm{\alpha} \in \mathbb{R}^{d \times K}$.

The gradient term is given by:
\begin{equation}
	\nabla_{\bm{\beta}} L(\bm{z}, \bm{\beta}) = \bm{x} \left( \sigma(\bm{x}^\top \bm{\beta}) - \bm{e}_y \right) \in \mathbb{R}^{d \times K},
\end{equation}
where $\bm{x} \in \mathbb{R}^d$ is the input feature vector, $\sigma(\bm{x}^\top \bm{\beta}) \in \mathbb{R}^K$ represents the model's predicted softmax probabilities, and $\bm{e}_y \in \mathbb{R}^K$ is the one-hot encoding of the true label.

The Hessian inverse term takes the form:
\begin{equation}
	\bm{H}_{\bm{\beta}}^{-1} = \left( \mathbb{E}\left[ \bm{x}\bm{x}^\top \otimes \bm{P} \right] \right)^{-1},
\end{equation}
\begin{equation}
	\bm{P} = \text{diag}(\sigma(\bm{x}^\top \bm{\beta})) - \sigma(\bm{x}^\top \bm{\beta})\sigma(\bm{x}^\top \bm{\beta})^\top,
\end{equation}
Here $\bm{P} \in \mathbb{R}^{K \times K}$ denotes the covariance matrix of softmax outputs, and $\otimes$ is the Kronecker product that couples the feature space ($d \times d$) with the class space ($K \times K$).

By using the submultiplicative property of spectral norm  and Kronecker product norm, we have the
upper Bound for $\|\text{IF}_{\text{CE}}\|_2$:
\begin{align}
	&\|\text{IF}_{\text{CE}}\|_2 = \|(\mathbb{E}[\bm{x}\bm{x}^\top \otimes \bm{P}])^{-1} (\bm{x} \otimes (\bm{\sigma} - \bm{e}_y))\|_2 \\
	&\leq \|(\mathbb{E}[\bm{x}\bm{x}^\top \otimes \bm{P}])^{-1}\|_2 \cdot \|\bm{x} \otimes (\bm{\sigma} - \bm{e}_y)\|_2 \quad   \\
	&= \|(\bm{\Sigma}_X \otimes \bm{P})^{-1}\|_2 \cdot \|\bm{x}\|_2 \cdot \|\bm{\sigma} - \bm{e}_y\|_2 \\
	&= \|\bm{\Sigma}_X^{-1} \otimes \bm{P}^{-1}\|_2 \cdot \|\bm{x}\|_2 \cdot \|\bm{\sigma} - \bm{e}_y\|_2 \\
	&= \|\bm{\Sigma}_X^{-1}\|_2 \cdot \|\bm{P}^{-1}\|_2 \cdot \|\bm{x}\|_2 \cdot \|\bm{\sigma} - \bm{e}_y\|_2 \quad  \\
	&= \frac{1}{\lambda_{\min}(\bm{\Sigma}_X)} \cdot \frac{1}{\lambda_{\min}(\bm{P})} \cdot \|\bm{x}\|_2 \cdot \|\bm{\sigma} - \bm{e}_y\|_2 \\
	&\leq \frac{\sqrt{2} \|\bm{x}\|_2}{\lambda_{\min}(\bm{\Sigma}_X) \lambda_{\min}(\bm{P})} \quad \text{(since $\|\bm{\sigma} - \bm{e}_y\|_2 \leq \sqrt{2}$)}.
\end{align}

By the Matrix-vector norm inequality, we have the lower Bound for $\|\text{IF}_{\text{CE}}\|_2$:
\begin{align}
	&\|\text{IF}_{\text{CE}}\|_2 = \|(\bm{\Sigma}_X \otimes \bm{P})^{-1} (\bm{x} \otimes (\bm{\sigma} - \bm{e}_y))\|_2 \\
	&\geq \frac{\|\bm{x} \otimes (\bm{\sigma} - \bm{e}_y)\|_2}{\|\bm{\Sigma}_X \otimes \bm{P}\|_2}  \\
	&= \frac{\|\bm{x}\|_2 \cdot \|\bm{\sigma} - \bm{e}_y\|_2}{\|\bm{\Sigma}_X\|_2 \cdot \|\bm{P}\|_2}  .
\end{align}

The norm of the influence function $\|\text{IF}_{\text{CE}}\|_2$ becomes large under several key conditions:

First, the influence scales linearly with the input feature norm $\|\bm{x}\|_2$, indicating that samples with larger feature magnitudes (high-leverage points) disproportionately affect the model parameters.

Second, both bounds depend critically on the spectrum of $\bm{\Sigma}_X$. The upper bound grows as $\lambda_{\min}(\bm{\Sigma}_X)^{-1}$, showing extreme sensitivity to collinear features ($\lambda_{\min} \approx 0$), while the lower bound shrinks with $\|\bm{\Sigma}_X\|_2$, revealing that concentrated data distributions (small eigenvalues) make all samples more influential. This highlights the importance of regularization for stable influence patterns.

Third, the term $\lambda_{\min}(\bm{P})^{-1}$ creates a prediction confidence paradox: samples where the model is overconfident ($\lambda_{\min}(\bm{P}) \to 0$) can have unbounded influence. This reveals a fundamental tension in softmax models - high confidence predictions may indicate vulnerable decision boundaries overly sensitive to specific training points, regardless of prediction accuracy.

\subsection{Influence Function Growth Rate Comparison}
\begin{theorem}
	\label{IFratio}
	Suppose that the covariance matrix $\bm{\Sigma}_X = \mathbb{E}[\bm{x}\bm{x}^\top]$ is positive definite with $\lambda_{\min}(\bm{\Sigma}_X) > 0$ and
	the expected Softmax matrix $\bm{P}$ is positive definite with $\lambda_{\min}(\bm{P}) > 0$.
	Assume the input features have finite second moments ($\mathbb{E}[\|\bm{x}\|_2^2] < \infty$) and the sample size exceeds the feature dimension ($n > d$).
	For non-degenerate predictions where the MSE residual is non-zero ($y - \bm{x}^\top\bm{\theta} \neq 0$) and the CE probability deviation is non-trivial ($\bm{\sigma} - \bm{e}_y \neq \bm{0}$), the influence function ratio $R = \|\mathrm{IF}_{\mathrm{CE}}\|_2/\|\mathrm{IF}_{\mathrm{MSE}}\|_2$ satisfies the two-sided bound:
	\begin{equation}
		\frac{\|\bm{\sigma} - \bm{e}_y\|_2}{\kappa_2(\bm{\Sigma}_X)\lambda_{\max}(\bm{P})|y - \bm{x}^\top\bm{\theta}|}
		\leq R \leq
		\frac{\sqrt{2}\kappa_2(\bm{\Sigma}_X)}{\lambda_{\min}(\bm{P})|y - \bm{x}^\top\bm{\theta}|}
	\end{equation}
	where $\kappa_2(\bm{\Sigma}_X) = \|\bm{\Sigma}_X\|_2\|\bm{\Sigma}_X^{-1}\|_2$ represents the spectral condition number.
\end{theorem}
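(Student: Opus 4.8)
The proof needs no new analytic input: it simply divides the scalar norm bounds on $\|\mathrm{IF}_{\mathrm{MSE}}\|_2$ and $\|\mathrm{IF}_{\mathrm{CE}}\|_2$ already established in the two preceding subsections. Recall that for a fixed sample $\bm{z}=(\bm{x},y)$ those subsections give
\[
\frac{\|\bm{x}\|_2\,|y-\bm{x}^\top\bm{\theta}|}{\|\bm{\Sigma}_X\|_2}
\le \|\mathrm{IF}_{\mathrm{MSE}}\|_2 \le
\|\bm{\Sigma}_X^{-1}\|_2\,\|\bm{x}\|_2\,|y-\bm{x}^\top\bm{\theta}|,
\]
\[
\frac{\|\bm{x}\|_2\,\|\bm{\sigma}-\bm{e}_y\|_2}{\|\bm{\Sigma}_X\|_2\,\|\bm{P}\|_2}
\le \|\mathrm{IF}_{\mathrm{CE}}\|_2 \le
\frac{\sqrt{2}\,\|\bm{x}\|_2}{\lambda_{\min}(\bm{\Sigma}_X)\,\lambda_{\min}(\bm{P})}.
\]
Under the non-degeneracy hypotheses ($y-\bm{x}^\top\bm{\theta}\neq 0$, $\bm{\sigma}-\bm{e}_y\neq\bm{0}$) and positive-definiteness of $\bm{\Sigma}_X$ and $\bm{P}$, all four quantities are strictly positive, so $R$ is well defined and the bounds may be divided termwise.

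First I would obtain the upper bound by dividing the CE upper bound by the MSE lower bound; the common factor $\|\bm{x}\|_2$ cancels, and since $\bm{\Sigma}_X$ is symmetric positive definite the surviving ratio $\|\bm{\Sigma}_X\|_2/\lambda_{\min}(\bm{\Sigma}_X)$ equals $\lambda_{\max}(\bm{\Sigma}_X)/\lambda_{\min}(\bm{\Sigma}_X)=\kappa_2(\bm{\Sigma}_X)$, producing $R\le \sqrt{2}\,\kappa_2(\bm{\Sigma}_X)/(\lambda_{\min}(\bm{P})\,|y-\bm{x}^\top\bm{\theta}|)$. Symmetrically, dividing the CE lower bound by the MSE upper bound and again cancelling $\|\bm{x}\|_2$ leaves $\|\bm{\sigma}-\bm{e}_y\|_2$ over $\|\bm{\Sigma}_X\|_2\,\|\bm{\Sigma}_X^{-1}\|_2\,\|\bm{P}\|_2\,|y-\bm{x}^\top\bm{\theta}|$; identifying $\|\bm{\Sigma}_X\|_2\|\bm{\Sigma}_X^{-1}\|_2=\kappa_2(\bm{\Sigma}_X)$ and $\|\bm{P}\|_2=\lambda_{\max}(\bm{P})$ yields the stated left-hand bound. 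This closes the two-sided estimate.

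\textbf{Main obstacle.} There is no deep obstacle here — the work is bookkeeping — but the step that warrants care is the cancellation of $\|\bm{x}\|_2$: it is legitimate precisely because numerator and denominator bounds are evaluated at the same observation $\bm{z}$, i.e. the theorem is a pointwise statement, not an expectation-level inequality (an averaged version would require controlling $\mathbb{E}\|\bm{x}\|_2$ against the spectrum of $\bm{\Sigma}_X$ and would not simplify so cleanly). I would also be explicit about where each hypothesis enters: positive-definiteness of $\bm{\Sigma}_X$ to write $\kappa_2(\bm{\Sigma}_X)=\lambda_{\max}(\bm{\Sigma}_X)/\lambda_{\min}(\bm{\Sigma}_X)$ and to keep every denominator nonzero; the finite-second-moment and $n>d$ assumptions to guarantee the expected Hessians $\bm{\Sigma}_X$ and $\mathbb{E}[\bm{x}\bm{x}^\top\otimes\bm{P}]$ are invertible, so that $\mathrm{IF}_{\mathrm{MSE}}$, $\mathrm{IF}_{\mathrm{CE}}$, and hence $R$, are defined in the first place.
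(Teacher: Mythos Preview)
Your proposal is correct and follows essentially the same route as the paper's proof: divide the CE upper bound by the MSE lower bound (and vice versa), cancel the common factor $\|\bm{x}\|_2$, and identify $\|\bm{\Sigma}_X\|_2/\lambda_{\min}(\bm{\Sigma}_X)=\|\bm{\Sigma}_X\|_2\|\bm{\Sigma}_X^{-1}\|_2=\kappa_2(\bm{\Sigma}_X)$ and $\|\bm{P}\|_2=\lambda_{\max}(\bm{P})$. Your explicit remark that the cancellation of $\|\bm{x}\|_2$ relies on the statement being pointwise in $\bm{z}$, and your tracking of where each hypothesis is actually used, are clarifications the paper's own proof leaves implicit.
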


\begin{proof}
	Using the above bounds, we derive the ratio $R = \frac{\|\text{IF}_{\text{CE}}\|_2}{\|\text{IF}_{\text{MSE}}\|_2}$:
	The upper bound is:
	\begin{align*}
		R &\leq \frac{\frac{\sqrt{2} \|\bm{x}\|_2}{\lambda_{\min}(\bm{\Sigma}_X) \lambda_{\min}(\bm{P})}}{\frac{\|\bm{x}\|_2 \cdot |y - \bm{x}^\top\bm{\theta}|}{\|\bm{\Sigma}_X\|_2}}
		= \frac{\sqrt{2} \|\bm{\Sigma}_X\|_2}{\lambda_{\min}(\bm{\Sigma}_X) \lambda_{\min}(\bm{P}) |y - \bm{x}^\top\bm{\theta}|} \\
		&= \frac{\sqrt{2} \kappa_2(\bm{\Sigma}_X)}{\lambda_{\min}(\bm{P}) |y - \bm{x}^\top\bm{\theta}|} \quad \text{(where $\kappa_2(\bm{\Sigma}_X) = \frac{\|\bm{\Sigma}_X\|_2}{\lambda_{\min}(\bm{\Sigma}_X)}$)}
	\end{align*}
	and the lower bound is:
	\begin{align*}
		R &\geq \frac{\frac{\|\bm{x}\|_2 \cdot \|\bm{\sigma} - \bm{e}_y\|_2}{\|\bm{\Sigma}_X\|_2 \cdot \|\bm{P}\|_2}}{\|\bm{\Sigma}_X^{-1}\|_2 \cdot \|\bm{x}\|_2 \cdot |y - \bm{x}^\top\bm{\theta}|} \\
		&= \frac{\|\bm{\sigma} - \bm{e}_y\|_2}{\|\bm{\Sigma}_X\|_2 \cdot \|\bm{\Sigma}_X^{-1}\|_2 \cdot \|\bm{P}\|_2 \cdot |y - \bm{x}^\top\bm{\theta}|} \\
		&= \frac{\|\bm{\sigma} - \bm{e}_y\|_2}{\kappa_2(\bm{\Sigma}_X) \lambda_{\max}(\bm{P}) |y - \bm{x}^\top\bm{\theta}|}.
	\end{align*}
	
	Thus we obtain the final combined bounds:
	\[
	\frac{\|\bm{\sigma} - \bm{e}_y\|_2}{\kappa_2(\bm{\Sigma}_X) \lambda_{\max}(\bm{P}) |y - \bm{x}^\top\bm{\theta}|} \leq R \leq \frac{\sqrt{2} \kappa_2(\bm{\Sigma}_X)}{\lambda_{\min}(\bm{P}) |y - \bm{x}^\top\bm{\theta}|}.
	\]
\end{proof}
The cross-entropy (CE) loss becomes more stable than mean squared error (MSE) when the ratio of influence functions satisfies $R \leq 1$. This stability condition holds when the data covariance matrix is well-conditioned and model predictions avoid extreme confidence. Mathematically, the critical threshold occurs when the spectral condition number $\kappa_2(\bm{\Sigma}_X)$ is bounded by:
\begin{equation}
	\label{stability}
	\kappa_2(\bm{\Sigma}_X) \leq \frac{\lambda_{\min}(\bm{P})|y - \bm{x}^\top\bm{\theta}|}{\sqrt{2}}.
\end{equation}

From Eq. (\ref{stability}), the stability of cross-entropy (CE) loss is primarily determined by two key mathematical quantities:

First, the condition number $\kappa_2(\bm{\Sigma}_X)$ of the feature covariance matrix characterizes the orthogonality of the input features. When $\kappa_2(\bm{\Sigma}_X) \approx 1$, the input features are approximately orthogonal (if the mean is zero and the variances are normalized, the covariance matrix is a scaled identity matrix).
The gradient descent optimization process is more stable because the curvature (reflected by the eigenvalues of the Hessian matrix) is similar across different directions, allowing a single learning rate to work well for all dimensions.
The sensitivity to input perturbations is reduced because the local geometry of the loss function is more uniform.

Second, the minimum eigenvalue $\lambda_{\min}(\bm{P})$ of the softmax covariance matrix fundamentally characterizes prediction confidence geometry - it vanishes when the model becomes overconfident in any class (If $\sigma(\bm{x}^T \bm{\beta})$ is a one-hot vector for example, $\sigma(\bm{x}^T \bm{\beta}) = \bm{e}_k$, a direct calculation shows that $\bm{P} = 0$, and thus $\lambda_{\min}(\bm{P}) = 0$), indicating degenerate curvature in the loss landscape. This spectral property directly controls influence function stability, with smaller $\lambda_{\min}(\bm{P})$ values amplifying sample influence.

\section{Algorithm Framework}

\subsection{The OCE-TS Architecture}
\begin{figure}[!ht]
	\centering
	\includegraphics[width=0.9\linewidth]{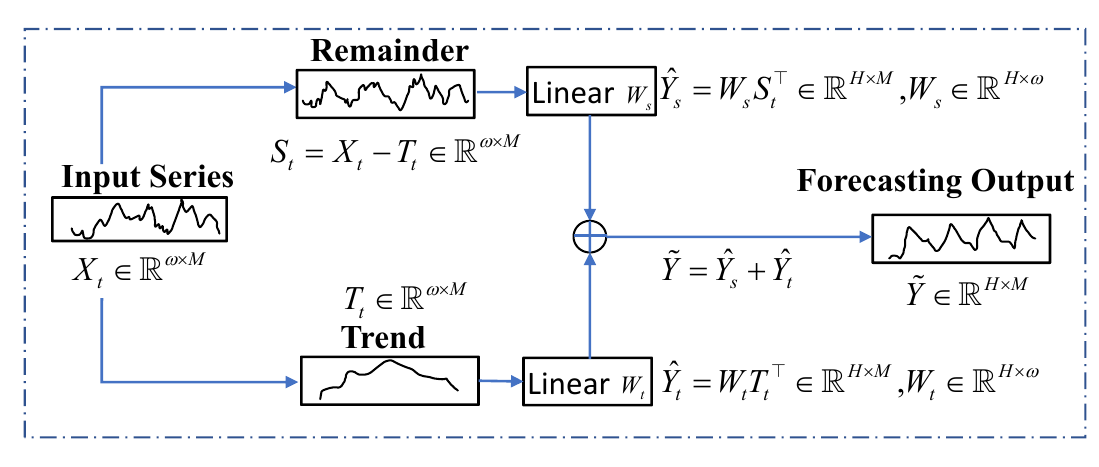}
	\caption{The framework of the DLinear model }
	\label{fig:dlinear_architecture}
\end{figure}

The OCE-TS algorithm is developed by modifying the the Decomposition-Linear (DLinear) Model. As shown in Figure~\ref{fig:dlinear_architecture}, the DLinear \cite{zeng2023transformers} framework employs a decomposition-based approach for time series forecasting by separating the input series into trend and residual components. The procedure of DLinear is outlined as follows:
Given an input lookback window $\boldsymbol{X}_t = \{\boldsymbol{x}_t, \boldsymbol{x}_{t-1}, \dots, \boldsymbol{x}_{t-w+1}\} \in \mathbb{R}^{w \times M}$ consisting of $w$ historical observations with $M$ features, the model first decomposes the series into a trend component $\boldsymbol{T}_t = \text{MA}_k(\boldsymbol{X}_t)$, obtained via moving average smoothing with window size $k$, and a residual component $\boldsymbol{S}_t = \boldsymbol{X}_t - \boldsymbol{T}_t$. The trend is then processed through weights $\boldsymbol{W}_t \in \mathbb{R}^{H \times w}$ to capture global patterns, yielding predictions $\hat{\boldsymbol{Y}}_t = \boldsymbol{W}_t \boldsymbol{T}_t^\top$, while the residuals are transformed via weights $\boldsymbol{W}_s \in \mathbb{R}^{H \times w}$ to model local fluctuations, producing $\hat{\boldsymbol{Y}}_s = \boldsymbol{W}_s \boldsymbol{S}_t^\top$.

The final $H$-step prediction $\tilde{\boldsymbol{Y}} \in \mathbb{R}^{H \times M}$ combines both components:
\begin{equation}
	\tilde{\boldsymbol{Y}} = \underbrace{\boldsymbol{W}_t \cdot \text{MA}_k(\boldsymbol{X}_t^\top)}_{\text{Trend}} + \underbrace{\boldsymbol{W}_s \cdot (\boldsymbol{X}_t^\top - \text{MA}_k(\boldsymbol{X}_t^\top))}_{\text{Residual}}.
\end{equation}

This dual-component design enables DLinear to simultaneously capture long-term trends and short-term variations while maintaining computational efficiency through linear operations.

At the output layer of the DLinear model, we introduce a softmax classification head to convert the predicted values $\tilde{\mathbf{Y}}$ into a probability distribution $\bm{q} \in \mathbb{R}^K$ over $K$ ordinal categories. For ground truth values $\mathbf{Y}$, we generate target probability distributions $\bm{p} \in \mathbb{R}^K$ using a Truncated Gaussian Distribution. The model parameters (including trend weights $\mathbf{W}_t$ and residual weights $\mathbf{W}_s$) are trained end-to-end by optimizing the OCE loss function, which minimizes the divergence between predicted probabilities $\bm{q}$ and target distributions $\bm{p}$.

\subsection{The OCE-TS Algorithm}
The framework of our proposed OCE-TS algorithm is presented in Algorithm~\ref{alg:oce-ts}.
\begin{algorithm}[!h]
	\caption{OCE-TS Framework}
	\label{alg:oce_ts_batch}
	\begin{algorithmic}[1]
		\Require $\mathcal{D} = \{\bm{X}_{t_i}, Y_{t_i+j}\}$: Training dataset
		\Require $K$: Number of bins
		\Require $[a,b]$: Target value range
		\Require $\sigma$: Standard deviation of truncated Gaussian
		\Require $w$: Input window size
		\Require $H$: Prediction horizon
		\Require $\eta$: Learning rate
		\Require $B$: Batch size
		
		\Ensure Trained model parameters $\theta^*$
		
		\State \textbf{1. Target-to-Probability Transformation}
		\For{each observation $Y_{t+j} \in \mathcal{D}$}
		\State Partition $[a,b]$ into $K$ bins $\mathcal{B}_k = [\ell_k, u_k)$ where $\ell_k = a + (k-1)\Delta$, $u_k = a + k\Delta$, $\Delta = \frac{b-a}{K}$
		\State Compute $p_k^{(j)} = \frac{1}{2Z}\left[\mathrm{erf}\left(\frac{u_k - Y_{t+j}}{\sigma\sqrt{2}}\right) - \mathrm{erf}\left(\frac{\ell_k - Y_{t+j}}{\sigma\sqrt{2}}\right)\right]$
		\State Store $\bm{p}^{(j)} \gets [p_1^{(j)},\dots,p_K^{(j)}]^\top$
		\EndFor
		
		\State \textbf{2. Batch Training}
		\Repeat
		\State Sample batch $\{(\bm{X}_{t_i}^{(b)}, \bm{Y}_{t_i+j}^{(b)})\}_{b=1}^B$ from $\mathcal{D}$
		\For{$b \gets 1$ \textbf{to} $B$}
		\State $\bm{T}_{t_i}^{(b)} \gets \frac{1}{l}\sum_{k=1}^l \bm{X}_{t_i-k}^{(b)}$ \Comment{MA with window size $l$}
		\State $\bm{S}_{t_i}^{(b)} \gets \bm{X}_{t_i}^{(b)} - \bm{T}_{t_i}^{(b)}$ \Comment{Residual}
		\State $\tilde{\bm{Y}}^{(b)} \gets \bm{W}_t(\bm{T}_{t_i}^{(b)})^\top + \bm{W}_s(\bm{S}_{t_i}^{(b)})^\top$ \Comment{Projection}
		\For{$j \gets 1$ \textbf{to} $H$}
		\State $\bm{q}^{(b,j)} \gets \mathrm{softmax}(\bm{W}_o \bm{\tilde{Y}}_{t_i+j}^{(b)} + \bm{b}_o)$
		\State $\mathcal{L}^{(b,j)} \gets -\sum_{k=1}^K p_k^{(b,j)} \log q_k^{(b,j)}$
		\EndFor
		\EndFor
		\State $\theta \gets \theta - \eta \nabla_\theta\left(\frac{1}{BH}\sum_{b=1}^B\sum_{j=1}^H \mathcal{L}^{(b,j)}\right)$
		\Until{convergence}
		
		\State \textbf{3. Probability-to-Value Reconstruction}
		\For{each test window $\bm{X}_t$}
		\For{$j \gets 1$ \textbf{to} $H$}
		\State Compute $\bm{\hat{Y}}_{t+j} \gets \sum_{k=1}^K v_k q_k^{(j)}$
		\EndFor
		\EndFor
	\end{algorithmic}
	\label{alg:oce-ts}
\end{algorithm}

\section{More Experimental Results}
\label{app:experimental results}
\subsection{ Dataset Descriptions}
\label{app:dataset}
Table~\ref{tab:dataset_stats} presents the detailed characteristics of the experimental datasets used in this study.
\begin{table}[!h]
	\centering
	\setlength{\tabcolsep}{6pt}
	\renewcommand{\arraystretch}{0.8}
	\begin{tabular}{l c c}
		\toprule
		\textbf{Dataset} & \textbf{Feature} & \textbf{Timesteps} \\
		\midrule
		ETTh1 & 7 & 17420 \\
		ETTh2 & 7 & 17420 \\
		ETTm1 & 7 & 69680 \\
		ETTm2 & 7 & 69680 \\
		Exchange & 8 & 7588 \\
		ILI & 7 & 966 \\
		Weather & 21 & 52696 \\
		\bottomrule
	\end{tabular}
	\caption{Dataset Statistics}
	\label{tab:dataset_stats}
\end{table}

\subsection{The Definition of Mean Absolute Error}
\label{app:MAE}
Mean Absolute Error (MAE) is a commonly used evaluation metric in time series forecasting, mathematically defined as:
\begin{equation}
	\text{MAE} = \frac{1}{H}\sum_{j=1}^H \|\boldsymbol{y}_{t+j} - \hat{\boldsymbol{y}}_{t+j}\|_1,
\end{equation}
where $\boldsymbol{y}_{t+j}$ and $\hat{\boldsymbol{y}}_{t+j}$ represent the ground truth and predicted vectors at time step $t+j$, respectively.

\subsection{Data Normalization}
\label{app:Model}

The model processes input time series normalized to the range \([-1, 1]\) or \([0, 1]\) during training and validation (denormalized during testing), with the support \([a, b]\) of the truncated Gaussian distribution matching the normalization range.

The normalization range is determined based on the data distribution: if all values are positive, normalization is performed to \([0, 1]\); otherwise, normalization is performed to \([-1, 1]\). In this experiment, the ETT series datasets are normalized to \([0, 1]\), while other datasets are normalized to \([-1, 1]\).

Let $\bm{x}$ be a time series sequence, with its minimum and maximum values denoted as $\bm{x}_{\min}$ and $\bm{x}_{\max}$ respectively. For normalization to the interval $[-1, 1]$, the minimum and maximum values are computed along the temporal dimension for each sample and feature. The normalization formula is:
\begin{equation}
	\label{eq:norm_neg1}
	\bm{x}_{\text{norm}} = 2 \cdot \frac{\bm{x} - \bm{x}_{\min}}{\bm{x}_{\max} - \bm{x}_{\min} + \varepsilon} - 1,
\end{equation}
where \(\epsilon\) is a small constant to prevent division by zero. The corresponding denormalization formula is:
\begin{equation}
	\label{eq:denorm_neg1}
	\bm{x} = \frac{\bm{x}_{\text{norm}} + 1}{2} \cdot \left( \bm{x}_{\max} - \bm{x}_{\min} + \varepsilon \right) + \bm{x}_{\min}.
\end{equation}

For normalization to the interval $[0, 1]$, the same minimum and maximum values $\bm{x}_{\min}$ and $\bm{x}_{\max}$ are computed along the temporal dimension for each sample and feature. The normalization is performed as:
\begin{equation}
	\label{eq:norm_01}
	\bm{x}_{\text{norm}} = \frac{\bm{x} - \bm{x}_{\min}}{\bm{x}_{\max} - \bm{x}_{\min} + \varepsilon}.
\end{equation}

The denormalization formula is:
\begin{equation}
	\label{eq:denorm_01}
	\bm{x} = \bm{x}_{\text{norm}} \times (\bm{x}_{\max} - \bm{x}_{\min} + \varepsilon) + \bm{x}_{\min}.
\end{equation}

The normalization operation for $\bm{X}_{t_i}^{(b)}$ and $\bm{Y}_{t_i+j}^{(b)}$ directly use the maximum and minimum values of the input sequence $\bm{X}_{t_i}^{(b)}$ and $\bm{Y}_{t_i+j}^{(b)}$, respectively. For denormalization, the predicted values $\bm{\hat{Y}}_{t+j}$ strictly employ the maximum and minimum values of the input sequence $\bm{X}_{t_i}^{(b)}$, thereby ensuring numerical stability during the training process.

\subsection{Distributions}
\label{app:Distribution}
This section introduces two alternative probability distributions, which are incorporated into the proposed framework to replace the truncated Gaussian distribution for validation and comparison purposes. We present the formal definitions of both distributions used in our analysis and describe the methodology for generating Figure~\ref{fig:dist-compare}.

\subsubsection{Student's t-distribution}

The Student's $t$-distribution with $\nu$ degrees of freedom has probability density function:
\begin{equation}
	f(y|\nu) = \frac{\Gamma\left(\frac{\nu+1}{2}\right)}{\sqrt{\nu \pi} \, \Gamma\left(\frac{\nu}{2}\right)} \left(1 + \frac{y^2}{\nu}\right)^{-\frac{\nu+1}{2}},
\end{equation}
where $\Gamma(\cdot)$ is the gamma function, $\nu > 0$ represents the degrees of freedom, and the distribution converges to the standard normal distribution as $\nu \to \infty$.

For the observation \( Y_{t+j} = y_c \), we model the predictive distribution using a truncated \( t \)-distribution with \(\nu=5\) degrees of freedom, centered on the interval \([a,b]\):
\begin{equation}\label{eq:T_pdf}
	p(y) =
	\begin{cases}
		\frac{1}{Z\sigma}f\left(\frac{y-y_c}{\sigma}\bigg|5\right), & x \in [a,b] \\
		0, & \text{otherwise}
	\end{cases},
\end{equation}
The choice of \(\nu=5\) provides moderate tail heaviness, allowing the model to better handle outliers while retaining a near-Gaussian shape.
Here, \( f(\cdot|\nu) \) denotes the PDF of a standard \( t \)-distribution with \(\nu=5\) degrees of freedom, and \(\sigma\) is the scale parameter.

The normalization constant \( Z \) ensures that the total probability integrates to one over the interval \([a,b]\). It is computed by the difference of cumulative distribution function (CDF) values:
\begin{equation}\label{eq:T_Z}
	Z = F_5\left(\frac{b-y_c}{\sigma}\right) - F_5\left(\frac{a-y_c}{\sigma}\right),
\end{equation}
where \( F_5(\cdot) \) is the CDF of the standard \( t \)-distribution with 5 degrees of freedom.

For discretized bins \([\ell_k, u_k]\) within \([a,b]\), the probability mass assigned to the \(k\)-th bin at horizon \(j\) is given by:
\begin{equation}\label{eq:T_pk}
	p_k^{(j)} = \frac{1}{Z} \left[ F_5\left(\frac{u_k - y_c}{\sigma}\right) - F_5\left(\frac{\ell_k - y_c}{\sigma}\right) \right].
\end{equation}

In evaluating the impact of different distributions on experimental performance, when the Gaussian distribution is replaced with the Student's $t$-distribution, the equations in the main text, namely Eqs.~(\ref{eq:G_pdf}), (\ref{eq:G_Z}), and (\ref{eq:bin_prob_calc}), are respectively replaced by Equations~(\ref{eq:T_pdf}), (\ref{eq:T_Z}), and (\ref{eq:T_pk}).

\subsubsection{Laplace Distribution}

The Laplace distribution (double exponential) with location $\mu$ and scale $\lambda > 0$ has the probability density function:
\begin{equation}
	f(y|\mu,\lambda) = \frac{1}{2\lambda}\exp\left(-\frac{|y-\mu|}{\lambda}\right),
\end{equation}
where $\mu \in \mathbb{R}$ is the location parameter and $\lambda$ controls the spread. The distribution is symmetric about $\mu$ and has heavier tails than the normal distribution. The scale parameter $\lambda$ relates to the standard deviation $\sigma$ through:
\begin{equation}
	\sigma = \sqrt{2}\lambda \quad \text{or equivalently} \quad \lambda = \frac{\sigma}{\sqrt{2}}.
\end{equation}

To ensure a fair comparison across distributional assumptions, we align the variance of the Laplace distribution with that of the Gaussian. Specifically, we set
\begin{equation}
	\lambda = \frac{\sigma}{\sqrt{2}} \approx 0.00707,
\end{equation}
given \(\sigma = 0.01\), so that both distributions share the same scale and yield consistent predictive behavior.

For the observation \( Y_{t+j} = y_c \), we model the predictive distribution using a truncated Laplace distribution with scale parameter \(\lambda\), centered on the interval \([a,b]\):
\begin{equation}\label{eq:L_pdf}
	p(y) =
	\begin{cases}
		\frac{1}{Z\lambda}f\left(\frac{y - y_c}{\lambda} \bigg| \mu=0 \right), &  \in [a,b] \\
		0, & \text{otherwise}
	\end{cases},
\end{equation}
Here, \( f(\cdot|\mu) \) denotes the probability density function (PDF) of a Laplace distribution with location parameter \(\mu=0\).

The normalization constant \( Z \) ensures the total probability integrates to one over \([a,b]\), computed by the difference of the Laplace cumulative distribution function (CDF) values:
\begin{equation}\label{eq:L_Z}
	Z = F_{\text{Laplace}}\left(\frac{b - y_c}{\lambda}\right) - F_{\text{Laplace}}\left(\frac{a - y_c}{\lambda}\right),
\end{equation}
where \( F_{\text{Laplace}}(\cdot) \) is the CDF of the standard Laplace distribution.

For discretized bins \([\ell_k, u_k]\) within \([a,b]\), the probability mass assigned to the \(k\)-th bin at horizon \(j\) is given by:
\begin{equation}\label{eq:L_pk}
	\small
	p_k^{(j)} = \frac{1}{Z} \left[ F_{\text{Laplace}}\left(\frac{u_k - y_c}{\lambda}\right) - F_{\text{Laplace}}\left(\frac{\ell_k - y_c}{\lambda}\right) \right].
\end{equation}

To evaluate the impact of different distributions on experimental performance, when the Gaussian distribution is replaced by the Laplace distribution, the equations in the main text, namely Equations~(\ref{eq:G_pdf}), (\ref{eq:G_Z}), and (\ref{eq:bin_prob_calc}), are respectively replaced with Equations~(\ref{eq:L_pdf}), (\ref{eq:L_Z}), and (\ref{eq:L_pk}).

\begin{figure*}[h]
	\centering
	
	\begin{subfigure}[b]{0.23\linewidth}
		\centering
		\includegraphics[width=\linewidth]{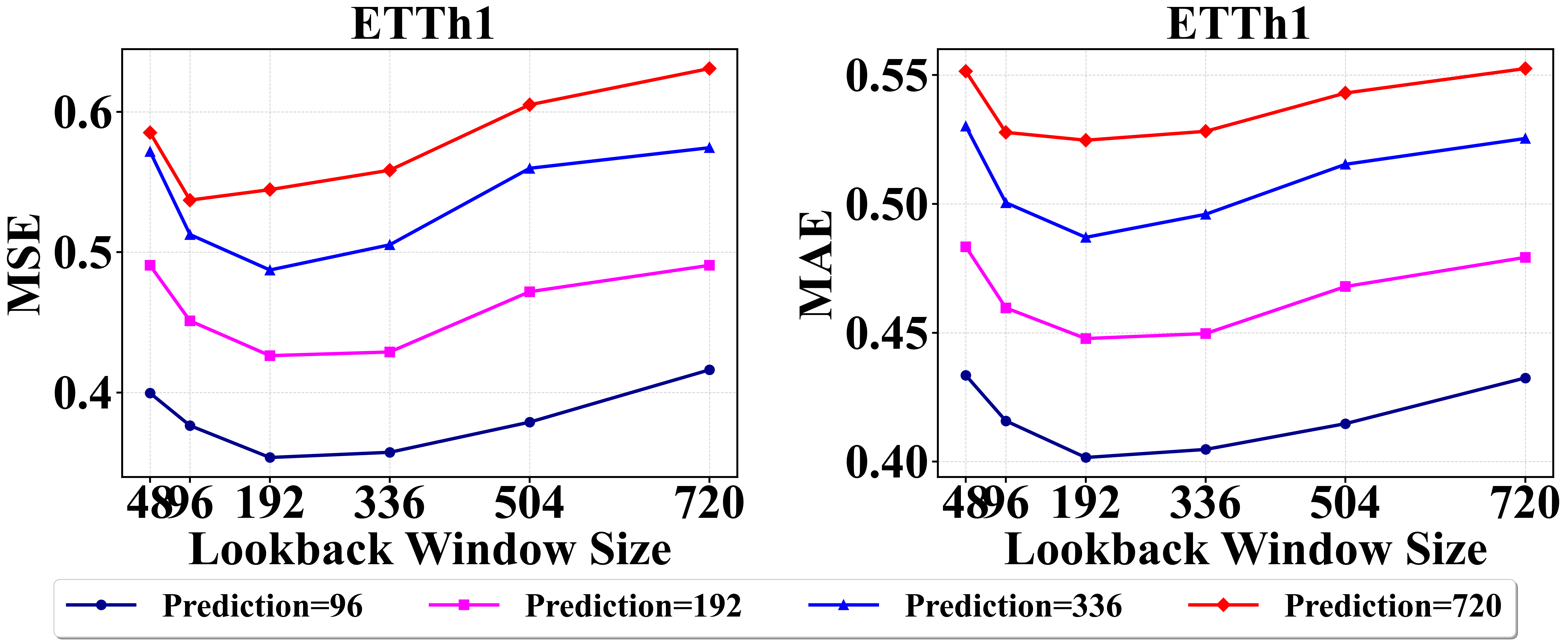}
		\caption{ETTh1}
	\end{subfigure}
	\hfill
	\begin{subfigure}[b]{0.23\linewidth}
		\centering
		\includegraphics[width=\linewidth]{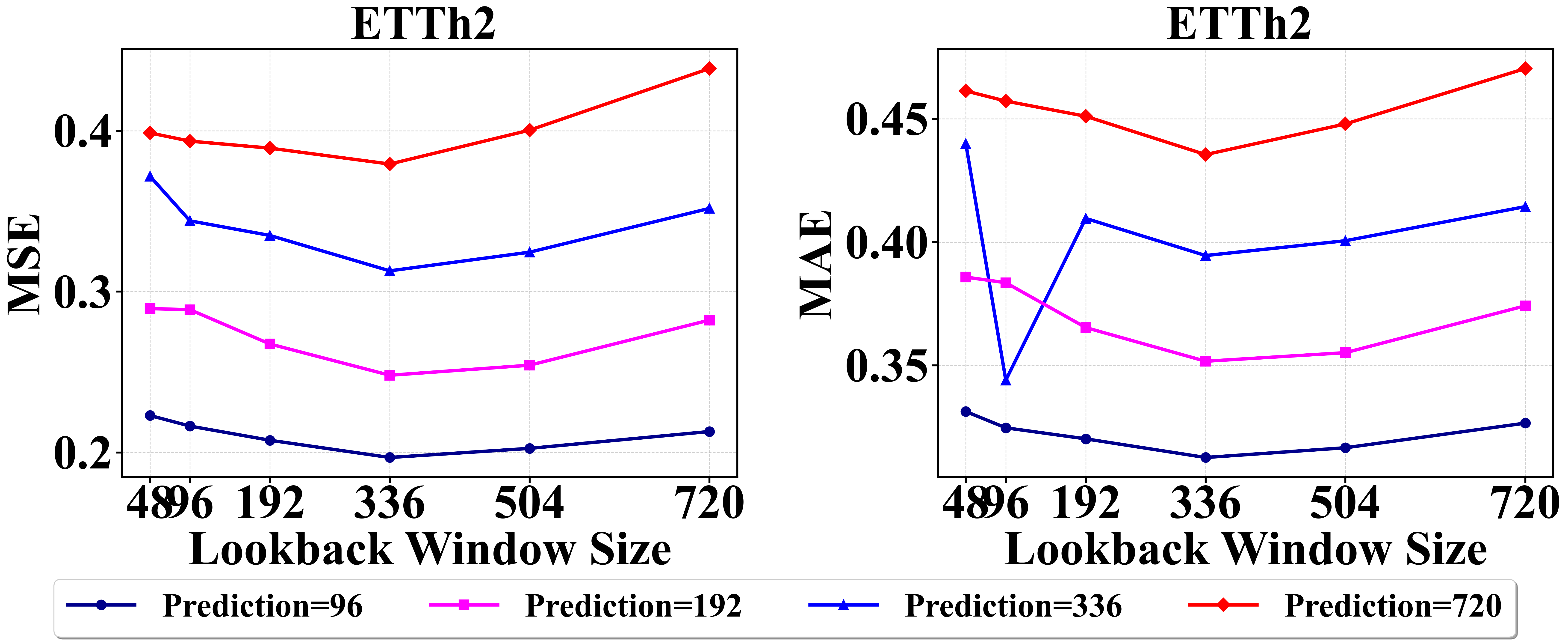}
		\caption{ETTh2}
	\end{subfigure}
	\hfill
	\begin{subfigure}[b]{0.23\linewidth}
		\centering
		\includegraphics[width=\linewidth]{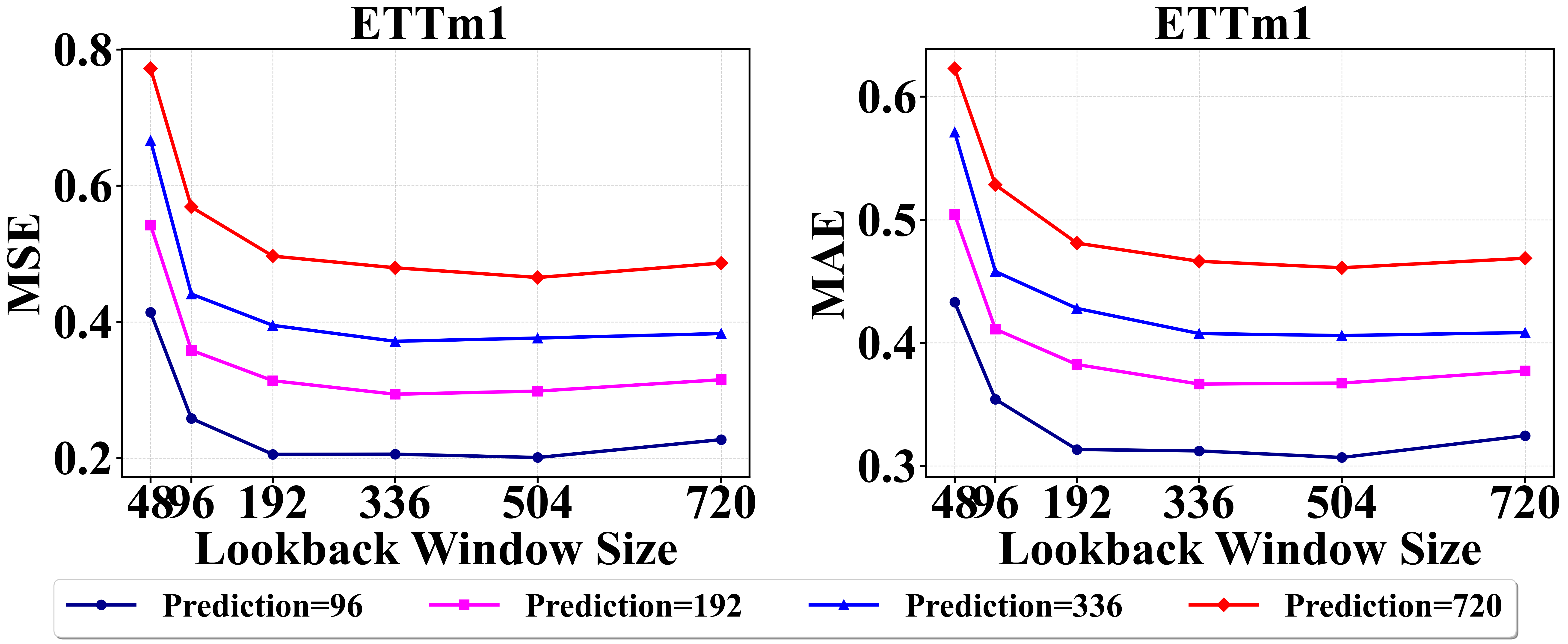}
		\caption{ETTm1}
	\end{subfigure}
	\hfill
	\begin{subfigure}[b]{0.23\linewidth}
		\centering
		\includegraphics[width=\linewidth]{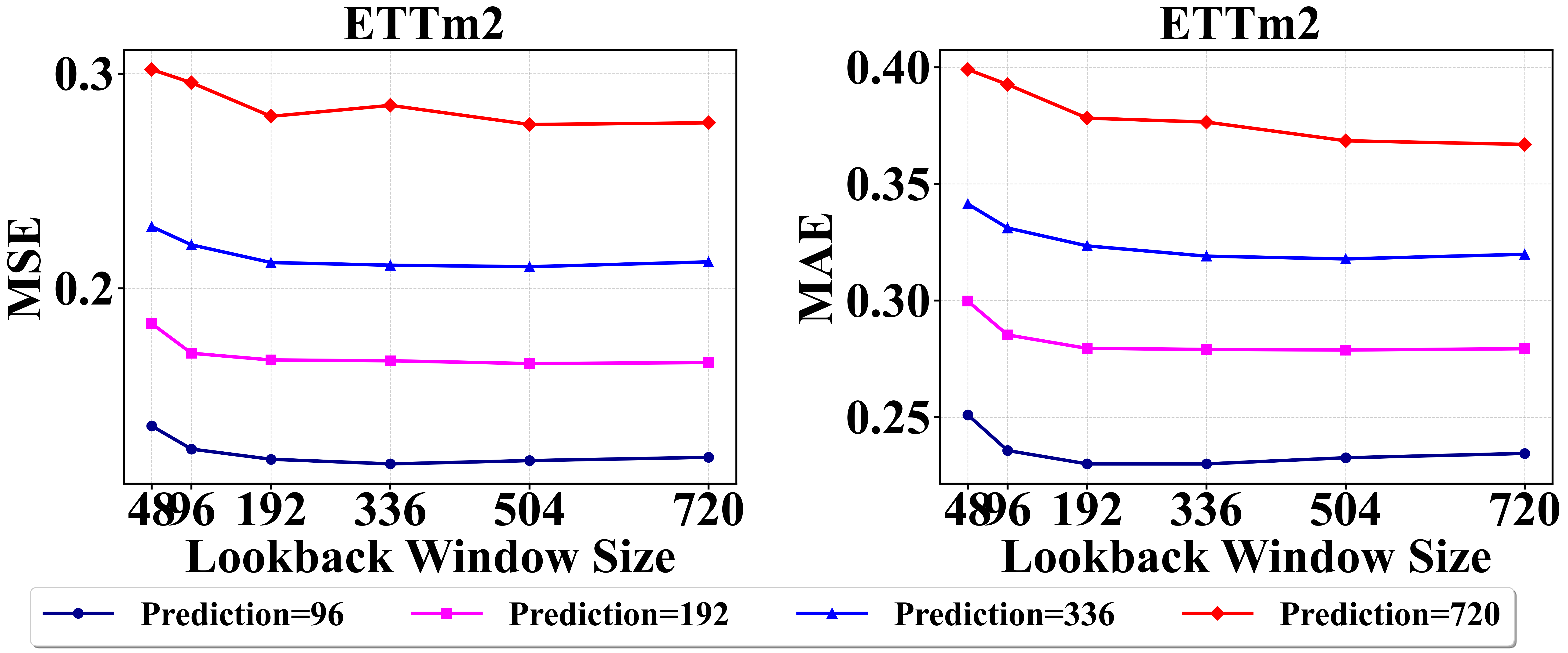}
		\caption{ETTm2}
	\end{subfigure}
	
	\vspace{0.8em}
	
	\hspace*{\fill}
	\begin{subfigure}[b]{0.3\linewidth}
		\centering
		\includegraphics[width=\linewidth]{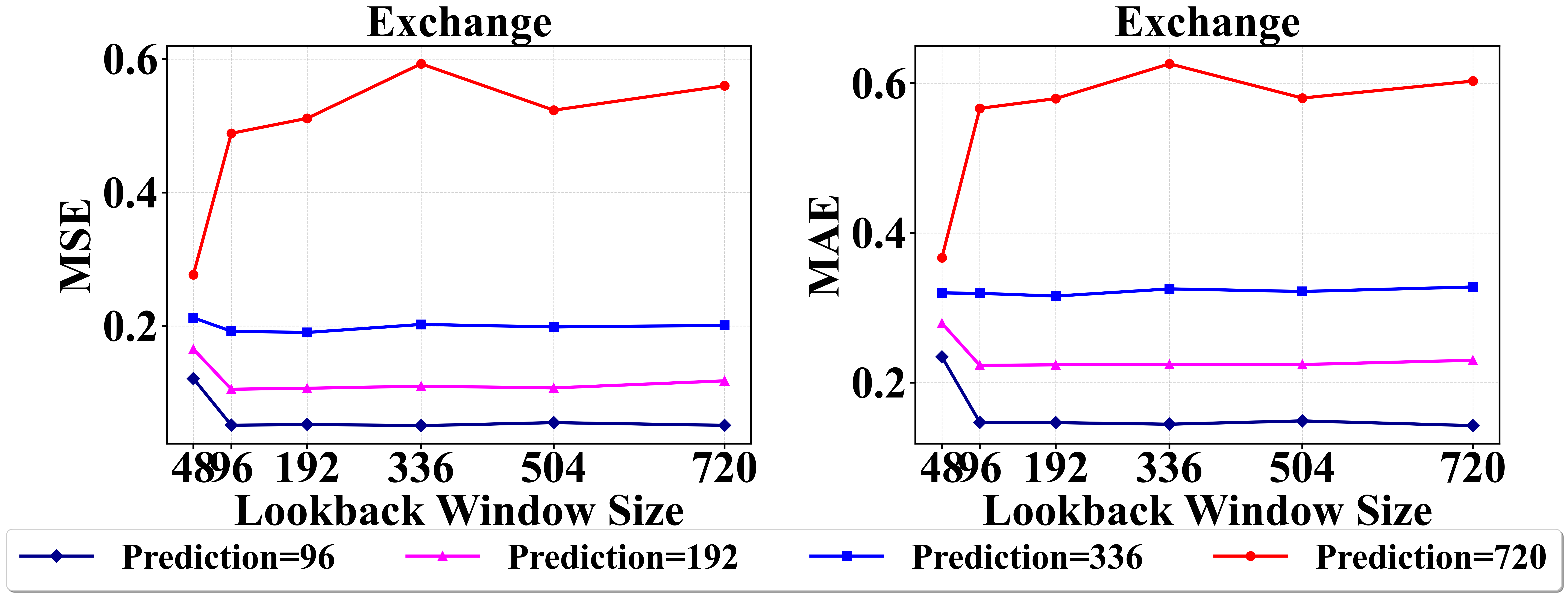}
		\caption{Exchange}
	\end{subfigure}
	\hfill
	\begin{subfigure}[b]{0.3\linewidth}
		\centering
		\includegraphics[width=\linewidth]{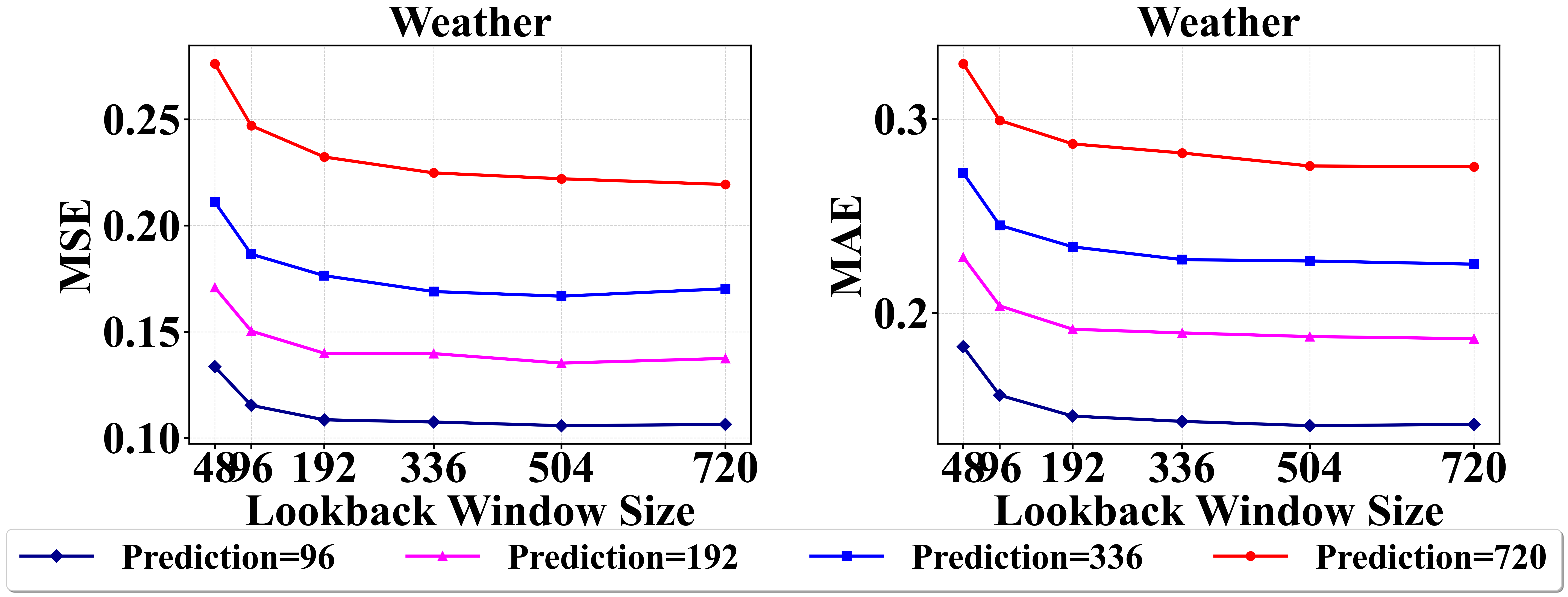}
		\caption{Weather}
	\end{subfigure}
	\hfill
	\begin{subfigure}[b]{0.3\linewidth}
		\centering
		\includegraphics[width=\linewidth]{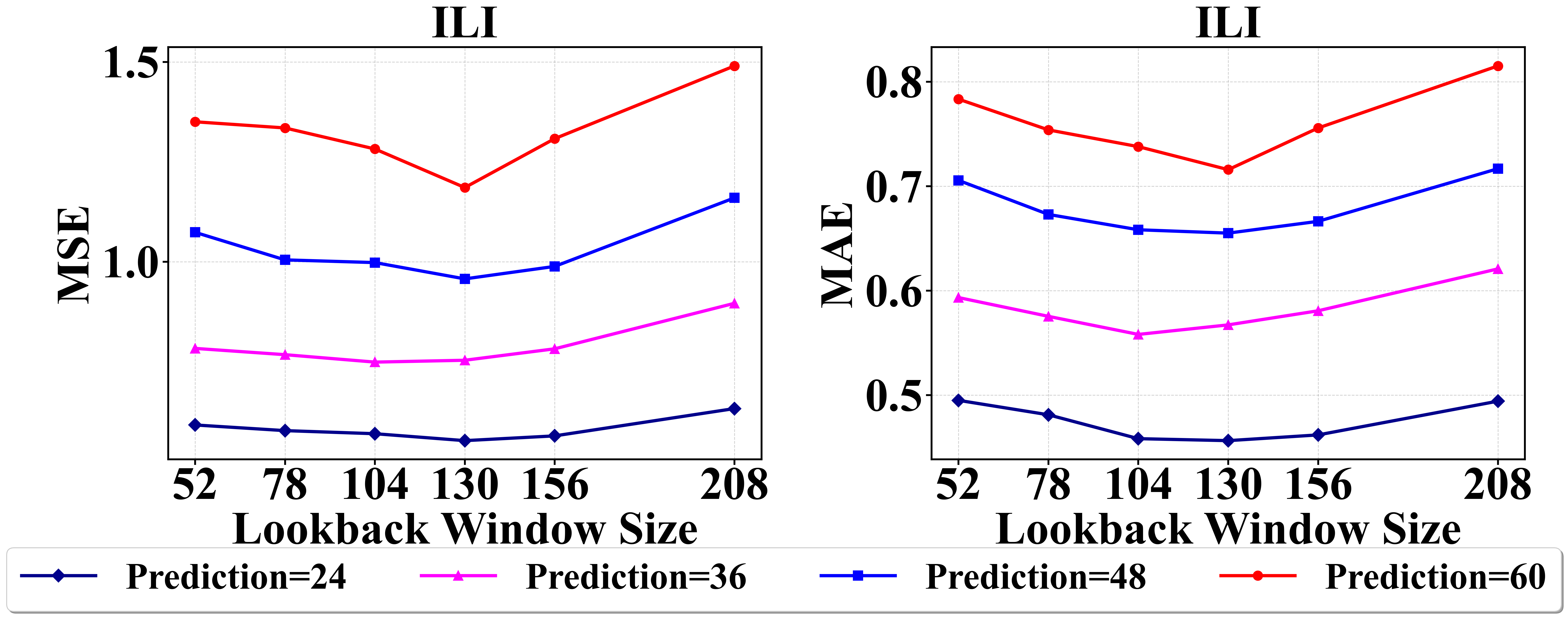}
		\caption{ILI}
	\end{subfigure}
	\hspace*{\fill}
	
	\caption{MSE and MAE results on seven datasets: ETTh1, ETTh2, ETTm1, ETTm2, Exchange, Weather, and ILI.}
	\label{fig:lookback-all}
\end{figure*}

\begin{figure*}[!ht]
	\centering
	\begin{subfigure}[t]{0.23\textwidth}
		\centering
		\includegraphics[width=\linewidth]{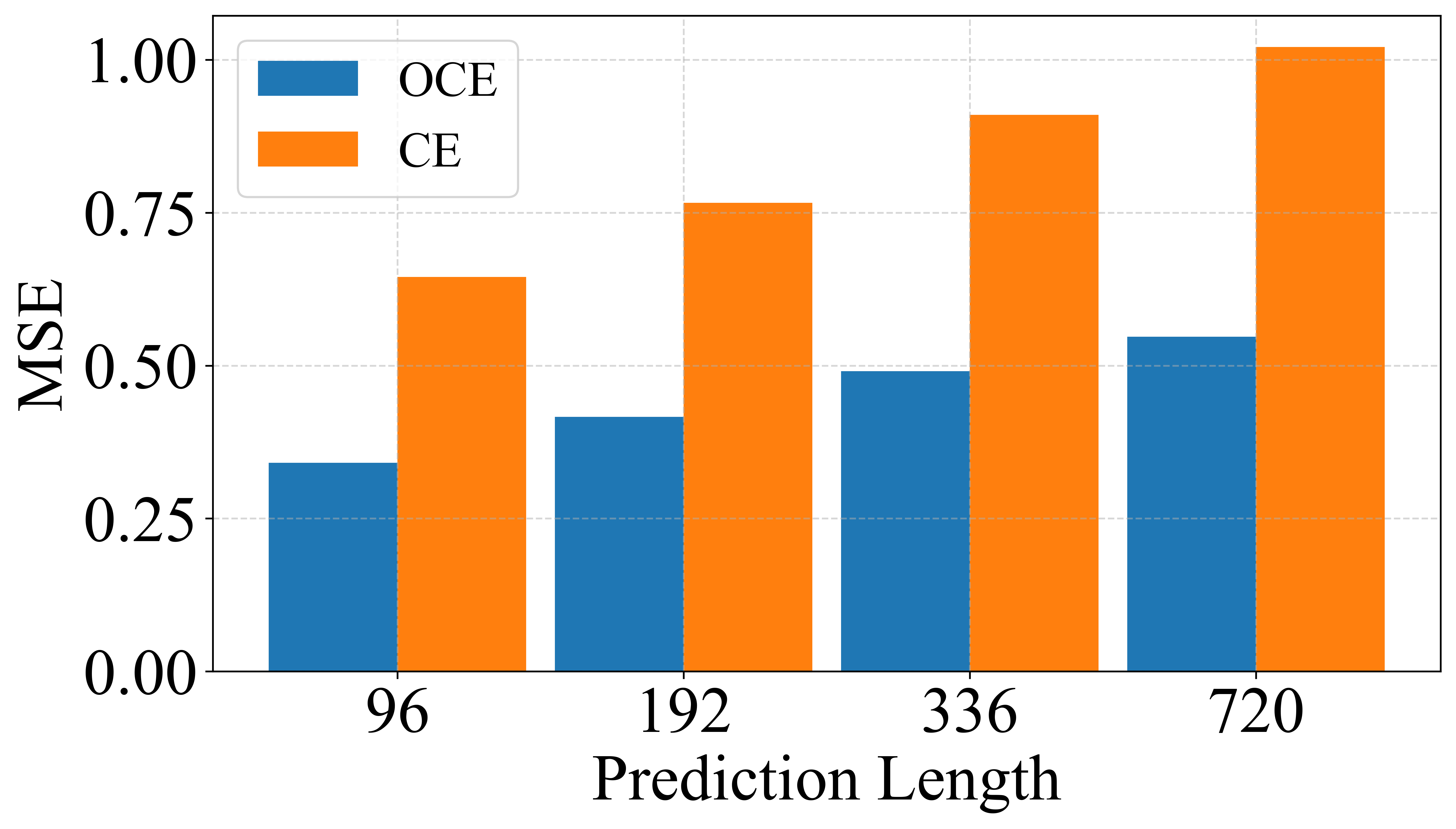}
		\caption{MSE (ETTh1)}
		\label{fig:etth1_mse}
	\end{subfigure}
	\hfill
	\begin{subfigure}[t]{0.23\textwidth}
		\centering
		\includegraphics[width=\linewidth]{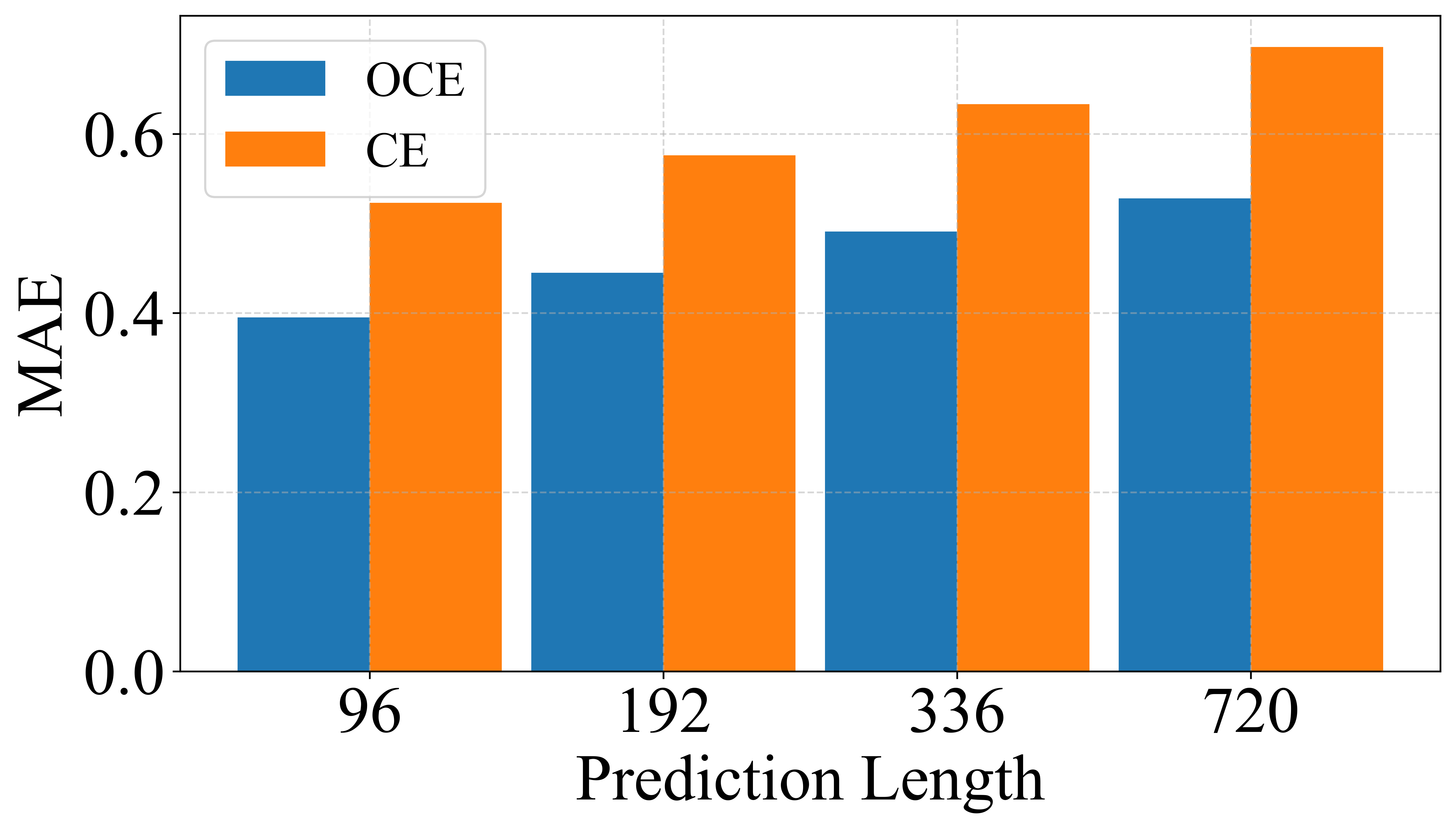}
		\caption{MAE (ETTh1)}
		\label{fig:etth1_mae}
	\end{subfigure}
	\hfill
	\begin{subfigure}[t]{0.23\textwidth}
		\centering
		\includegraphics[width=\linewidth]{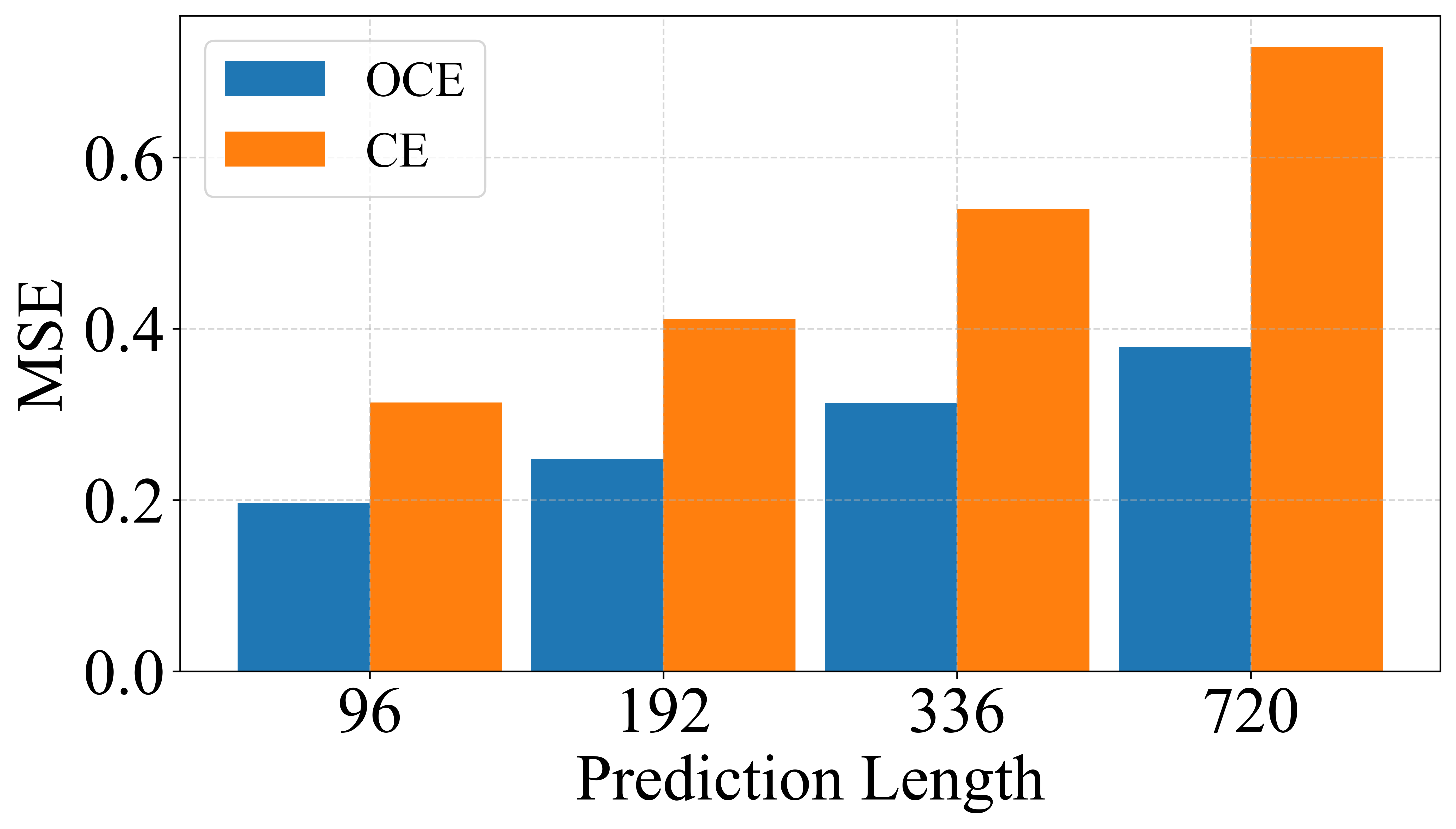}
		\caption{MSE (ETTh2)}
		\label{fig:etth2_mse}
	\end{subfigure}
	\hfill
	\begin{subfigure}[t]{0.23\textwidth}
		\centering
		\includegraphics[width=\linewidth]{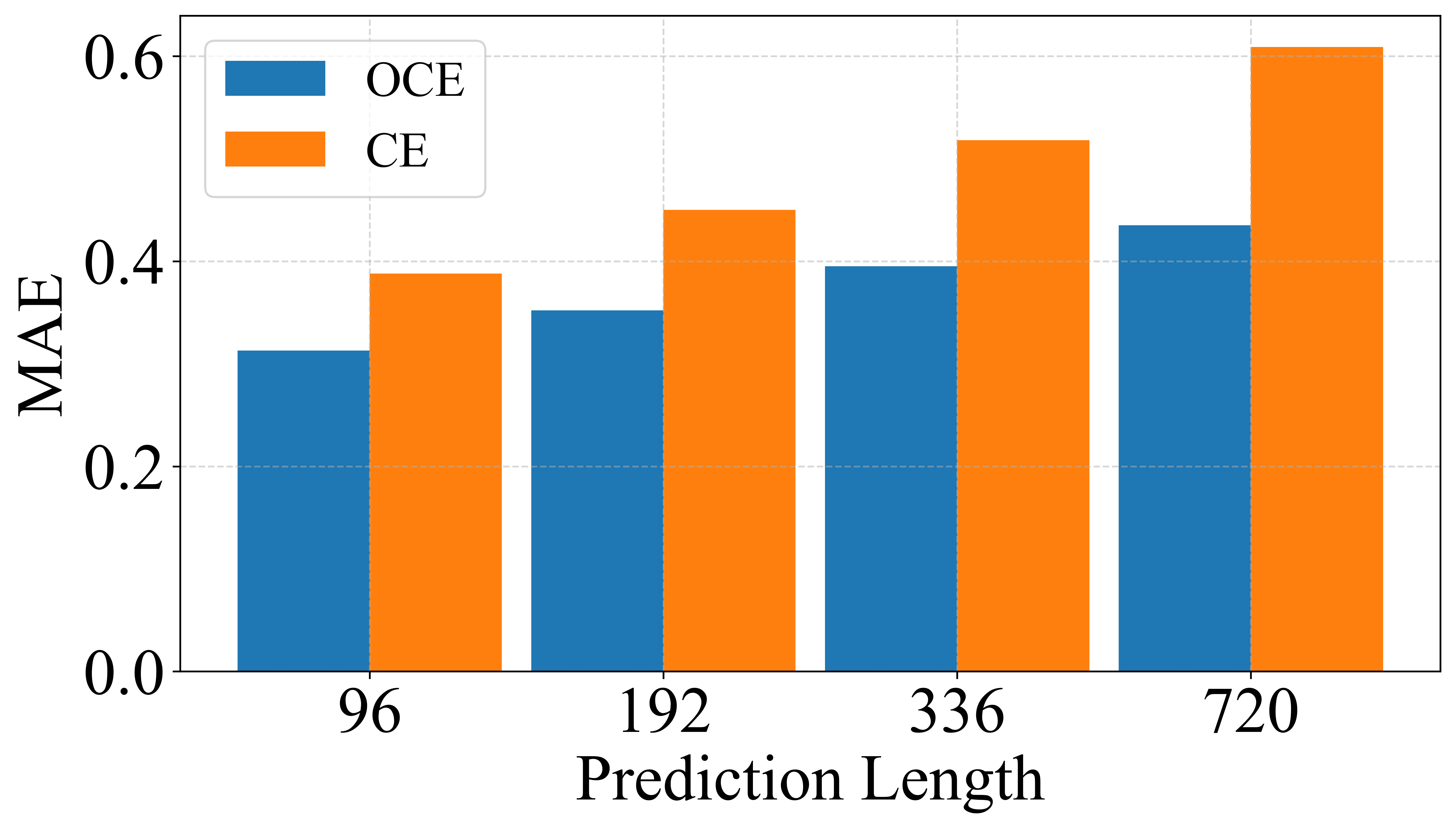}
		\caption{MAE (ETTh2)}
		\label{fig:etth2_mae}
	\end{subfigure}
	
	\vspace{0.5em}
	
	\begin{subfigure}[t]{0.23\textwidth}
		\centering
		\includegraphics[width=\linewidth]{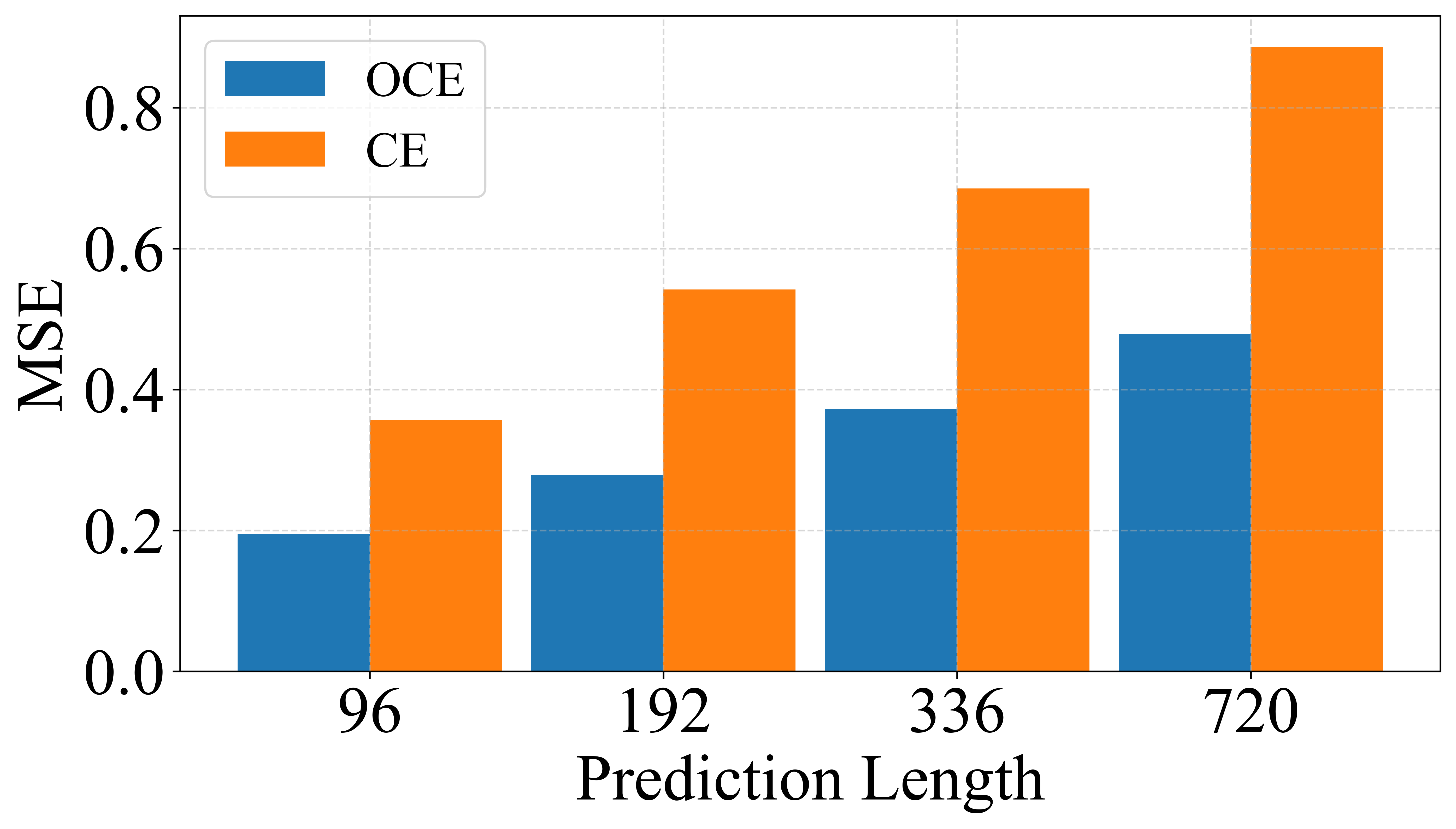}
		\caption{MSE (ETTm1)}
		\label{fig:ettm1_mse}
	\end{subfigure}
	\hfill
	\begin{subfigure}[t]{0.23\textwidth}
		\centering
		\includegraphics[width=\linewidth]{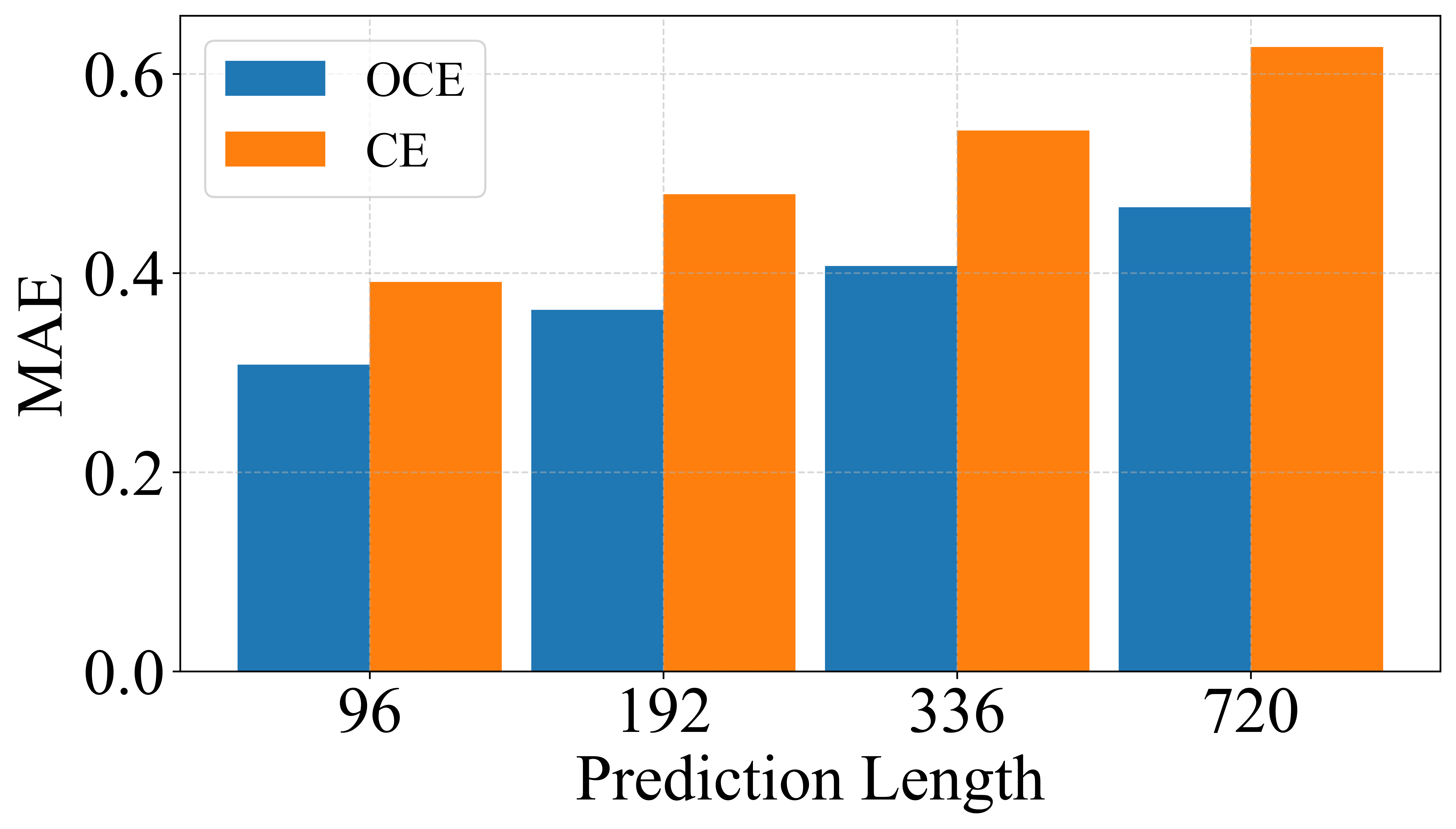}
		\caption{MAE (ETTm1)}
		\label{fig:ettm1_mae}
	\end{subfigure}
	\hfill
	\begin{subfigure}[t]{0.23\textwidth}
		\centering
		\includegraphics[width=\linewidth]{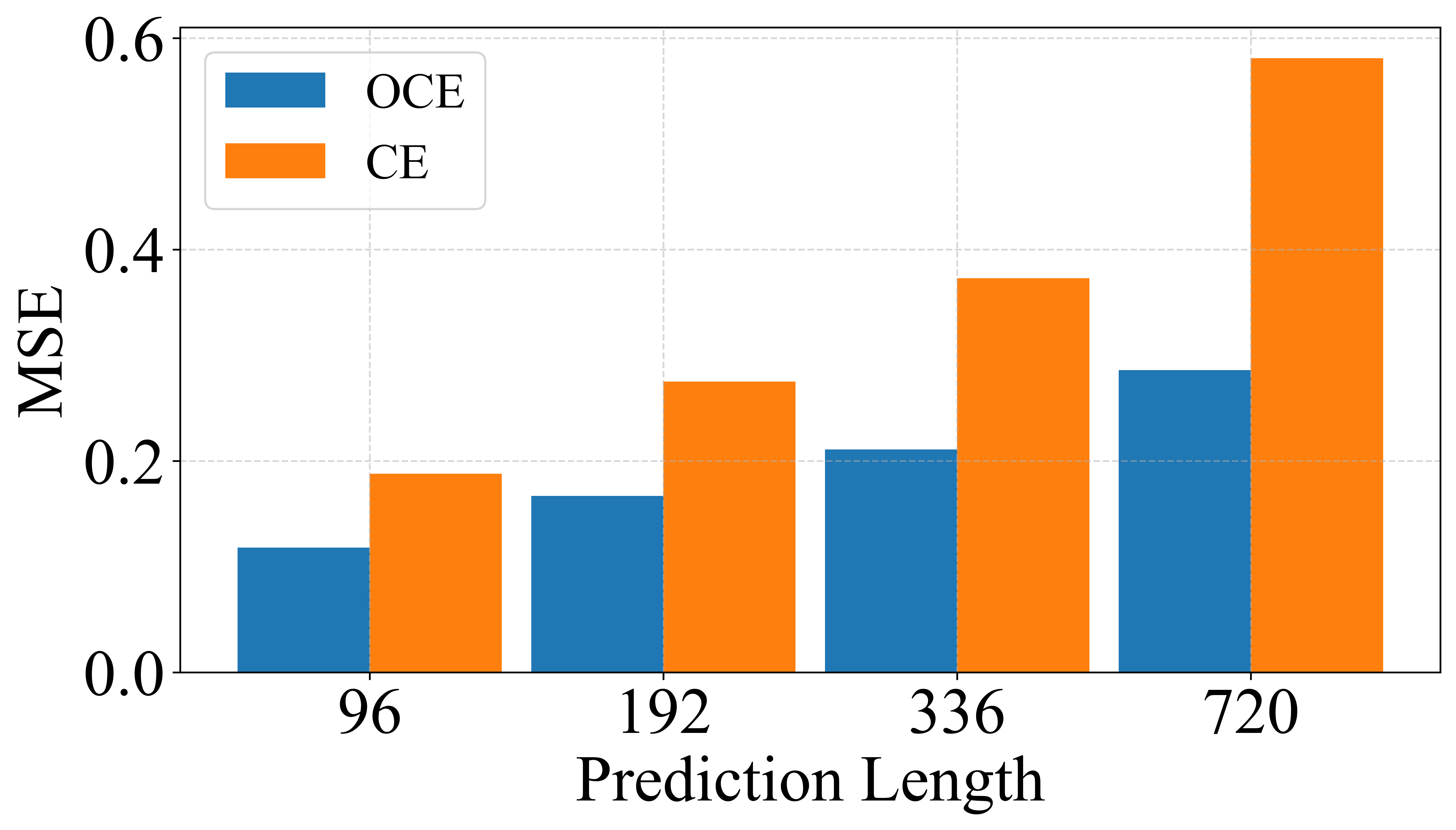}
		\caption{MSE (ETTm2)}
		\label{fig:ettm2_mse}
	\end{subfigure}
	\hfill
	\begin{subfigure}[t]{0.23\textwidth}
		\centering
		\includegraphics[width=\linewidth]{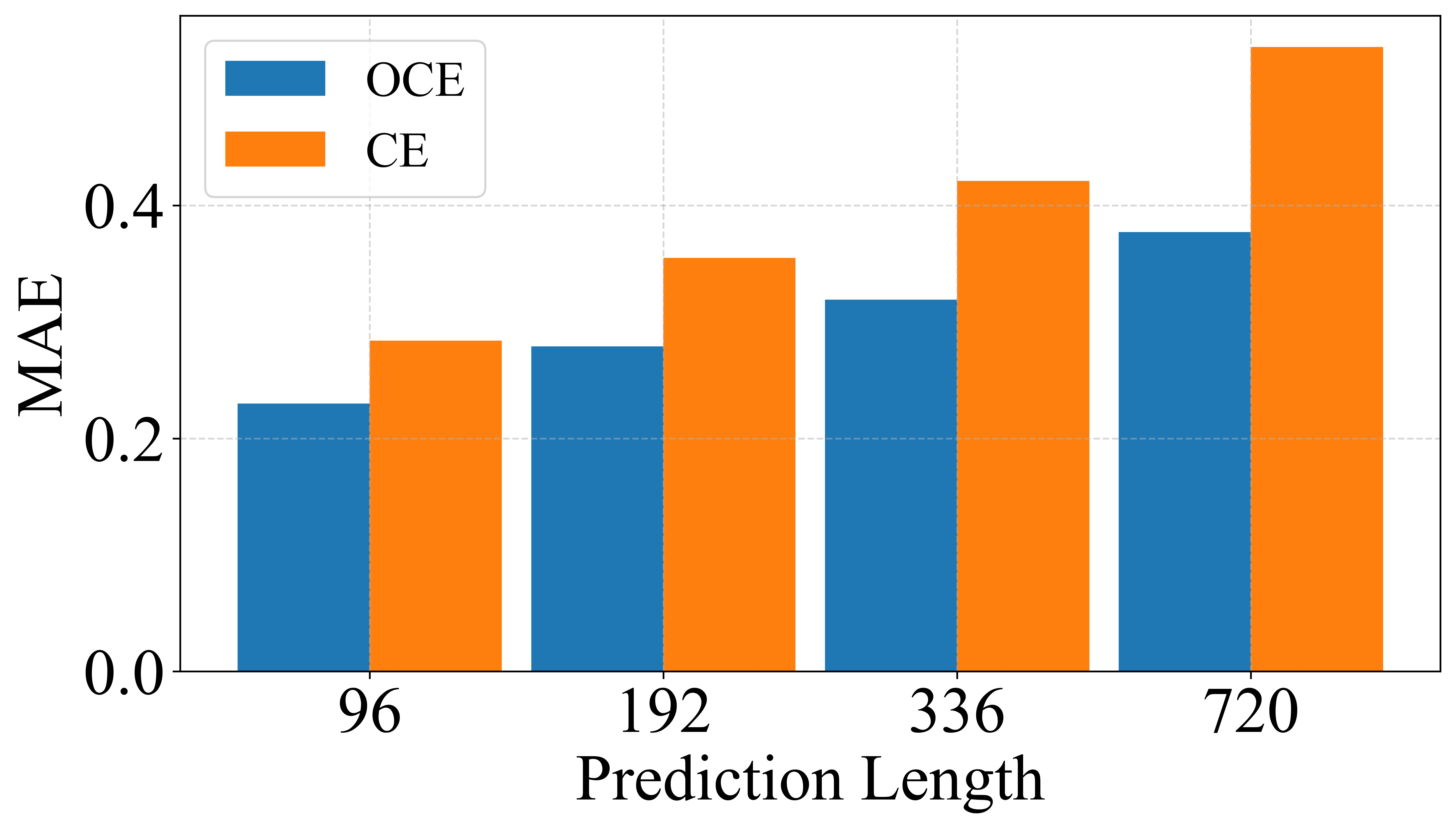}
		\caption{MAE (ETTm2)}
		\label{fig:ettm2_mae}
	\end{subfigure}
	
	\vspace{0.5em}
	
	\begin{subfigure}[t]{0.23\textwidth}
		\centering
		\includegraphics[width=\linewidth]{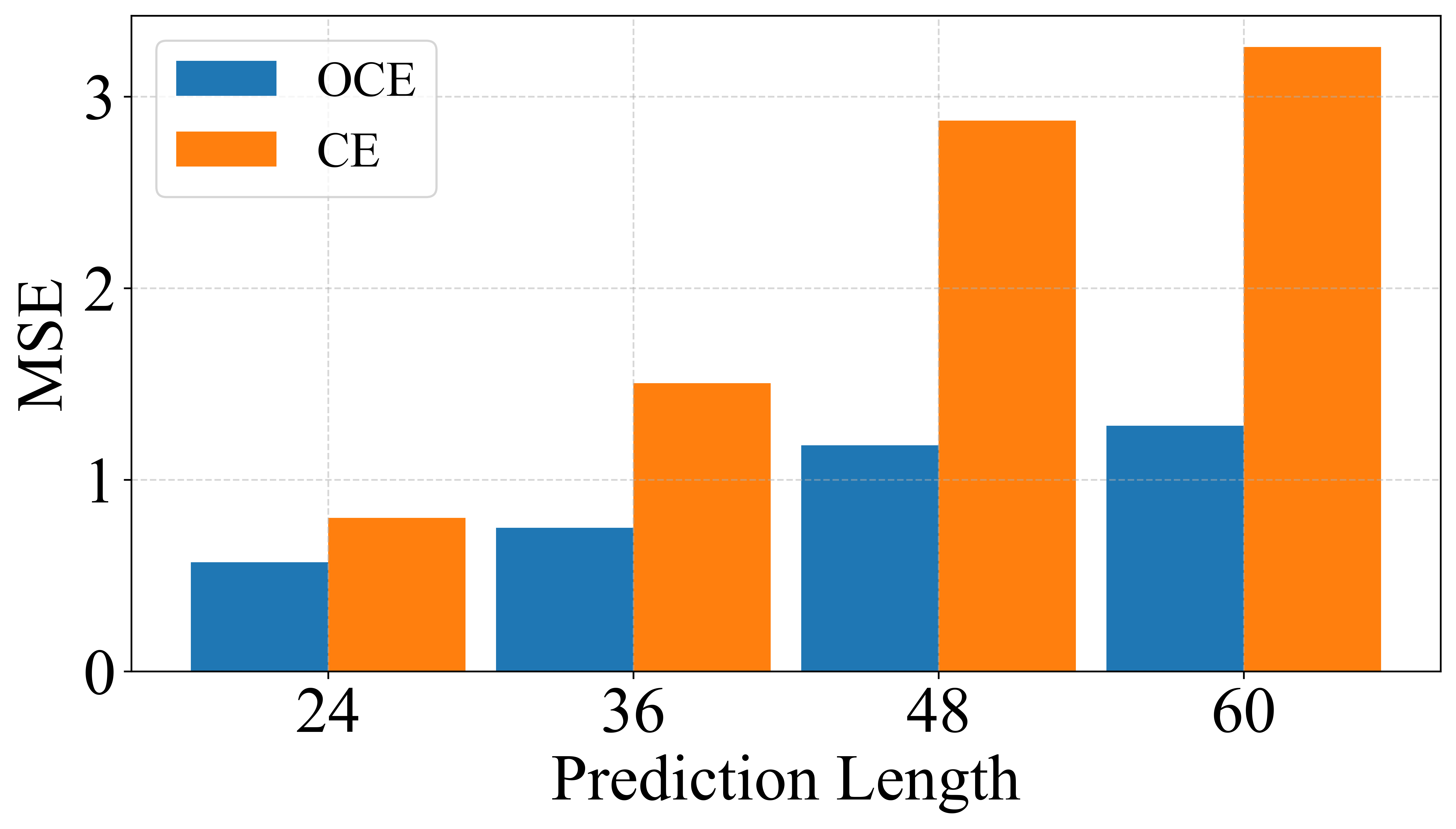}
		\caption{MSE (ILI)}
		\label{fig:ili_mse}
	\end{subfigure}
	\hfill
	\begin{subfigure}[t]{0.23\textwidth}
		\centering
		\includegraphics[width=\linewidth]{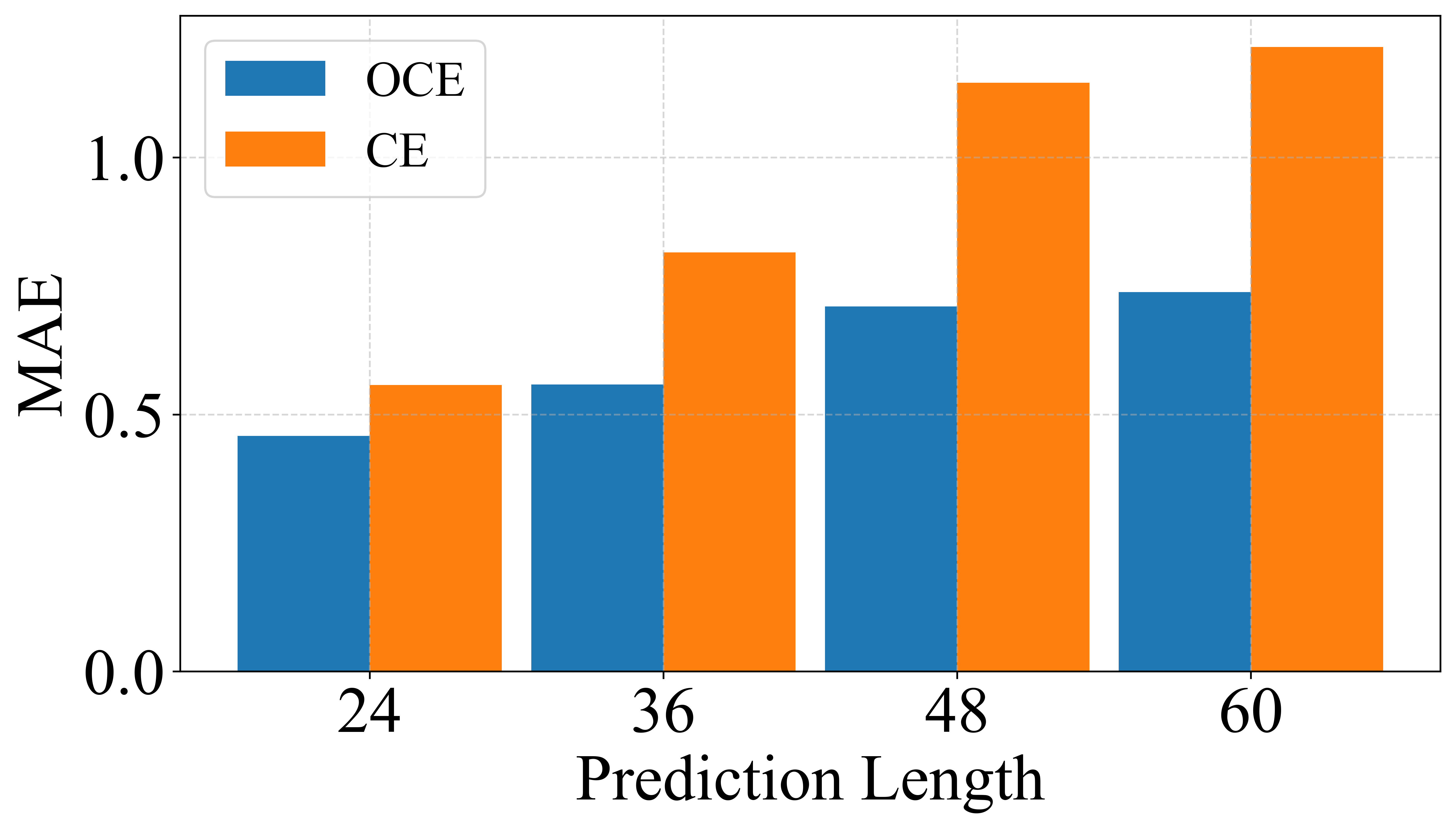}
		\caption{MAE (ILI)}
		\label{fig:ili_mae}
	\end{subfigure}
	\hfill
	\begin{subfigure}[t]{0.23\textwidth}
		\centering
		\includegraphics[width=\linewidth]{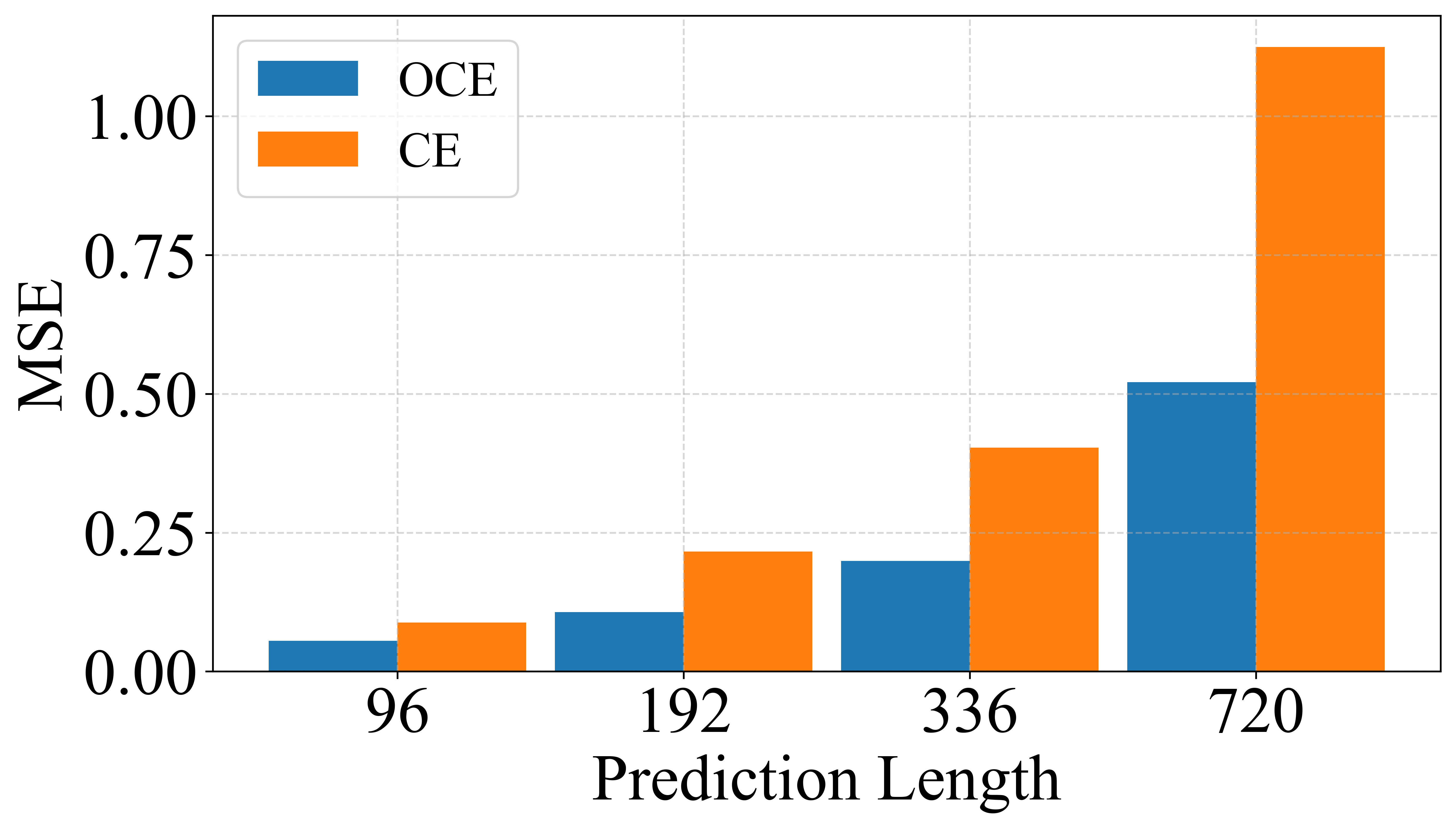}
		\caption{MSE (Exchange)}
		\label{fig:exchange_mse}
	\end{subfigure}
	\hfill
	\begin{subfigure}[t]{0.23\textwidth}
		\centering
		\includegraphics[width=\linewidth]{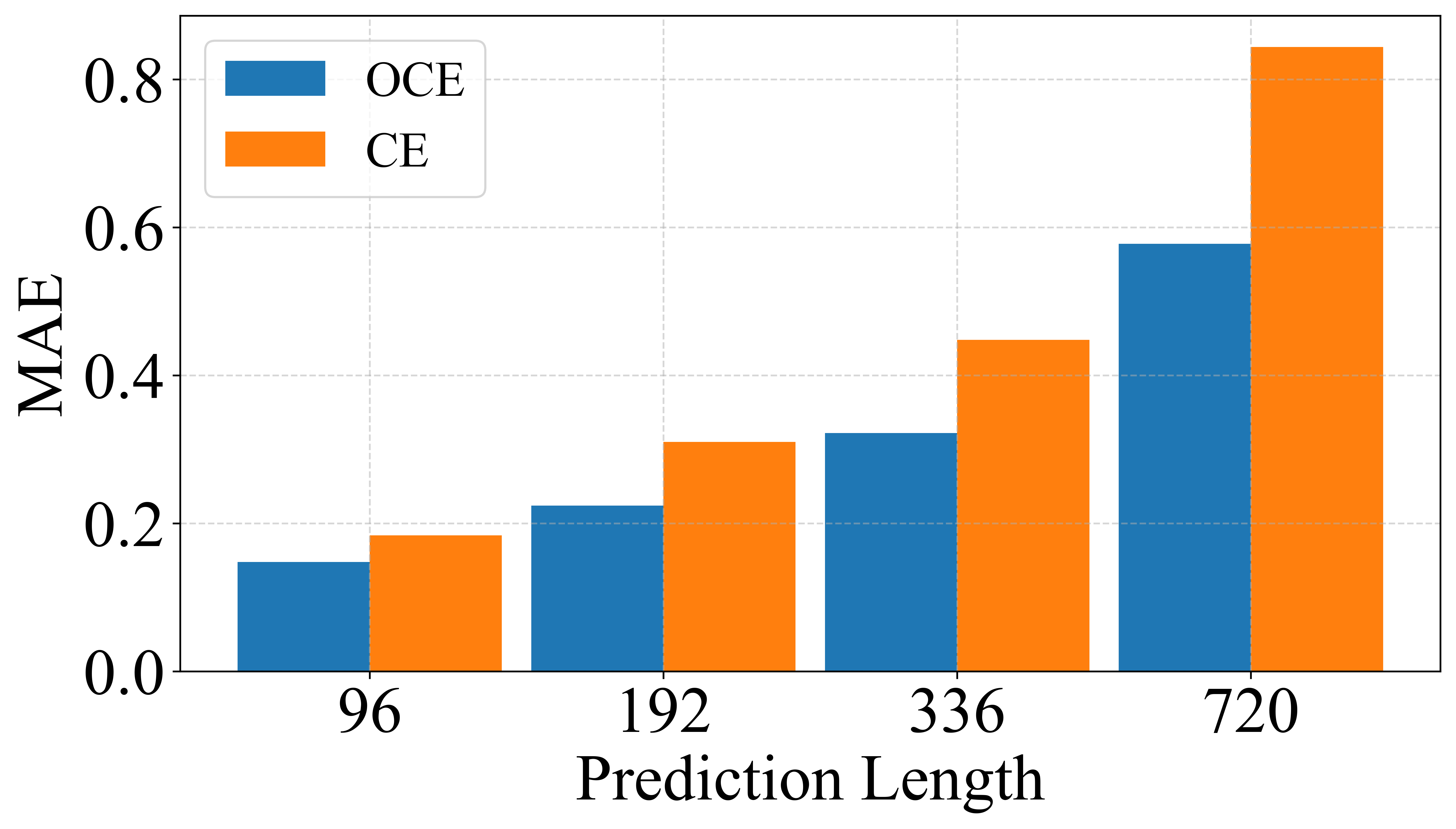}
		\caption{MAE (Exchange)}
		\label{fig:exchange_mae}
	\end{subfigure}
	
	\caption{Performance Comparison Between CE and OCE Loss Functions Across Six Datasets. }
	\label{fig:ce-oce-comparison-full}
\end{figure*}

\begin{table*}[!ht]
	\centering
	\scriptsize
	\setlength{\tabcolsep}{2.5pt}
	\renewcommand{\arraystretch}{1.02}
	\begin{tabular}{l|c|cccc|cccc|cccc|cccc|cccc}
		\toprule
		\multirow{2}{*}{\textbf{Dataset}} & \multirow{2}{*}{\textbf{Pred}} & \multicolumn{4}{c}{\textbf{-3dB}} & \multicolumn{4}{c}{\textbf{0dB}} & \multicolumn{4}{c}{\textbf{3dB}} & \multicolumn{4}{c}{\textbf{10dB}} & \multicolumn{4}{c}{\textbf{20dB}} \\
		\cmidrule(lr){3-6} \cmidrule(lr){7-10} \cmidrule(lr){11-14} \cmidrule(lr){15-18} \cmidrule(lr){19-22}
		& & \multicolumn{2}{c}{DLinear} & \multicolumn{2}{c}{Ours} & \multicolumn{2}{c}{DLinear} & \multicolumn{2}{c}{Ours} & \multicolumn{2}{c}{DLinear} & \multicolumn{2}{c}{Ours} & \multicolumn{2}{c}{DLinear} & \multicolumn{2}{c}{Ours} & \multicolumn{2}{c}{DLinear} & \multicolumn{2}{c}{Ours} \\
		& & MSE & MAE & MSE & MAE & MSE & MAE & MSE & MAE & MSE & MAE & MSE & MAE & MSE & MAE & MSE & MAE & MSE & MAE & MSE & MAE \\
		\midrule
		\multirow{4}{*}{\textbf{ETTm2}} & 96 & 0.186 & 0.284 & \textbf{0.117} & \textbf{0.229} & 0.182 & 0.280 & \textbf{0.116} & \textbf{0.228} & 0.176 & 0.274 & \textbf{0.116} & \textbf{0.227} & 0.172 & 0.269 & \textbf{0.118} & \textbf{0.229} & 0.172 & 0.268 & \textbf{0.118} & \textbf{0.230} \\
		& 192 & 0.255 & 0.338 & \textbf{0.163} & \textbf{0.278} & 0.246 & 0.328 & \textbf{0.162} & \textbf{0.277} & 0.241 & 0.323 & \textbf{0.162} & \textbf{0.276} & 0.235 & 0.317 & \textbf{0.163} & \textbf{0.277} & 0.235 & 0.316 & \textbf{0.164} & \textbf{0.277} \\
		& 336 & 0.285 & 0.345 & \textbf{0.207} & \textbf{0.318} & 0.280 & 0.340 & \textbf{0.206} & \textbf{0.317} & 0.277 & 0.337 & \textbf{0.206} & \textbf{0.317} & 0.276 & 0.336 & \textbf{0.207} & \textbf{0.317} & 0.277 & 0.337 & \textbf{0.209} & \textbf{0.318} \\
		& 720 & 0.440 & 0.453 & \textbf{0.275} & \textbf{0.373} & 0.431 & 0.446 & \textbf{0.274} & \textbf{0.372} & 0.425 & 0.442 & \textbf{0.273} & \textbf{0.371} & 0.420 & 0.438 & \textbf{0.277} & \textbf{0.372} & 0.420 & 0.437 & \textbf{0.280} & \textbf{0.374} \\
		\midrule
		\multirow{4}{*}{\textbf{Weather}} & 96 & 0.186 & 0.253 & \textbf{0.109} & \textbf{0.149} & 0.181 & 0.247 & \textbf{0.109} & \textbf{0.147} & 0.178 & 0.242 & \textbf{0.108} & \textbf{0.145} & 0.176 & 0.237 & \textbf{0.107} & \textbf{0.145} & 0.175 & 0.236 & \textbf{0.107} & \textbf{0.144} \\
		& 192 & 0.224 & 0.285 & \textbf{0.139} & \textbf{0.193} & 0.226 & 0.291 & \textbf{0.138} & \textbf{0.191} & 0.224 & 0.288 & \textbf{0.137} & \textbf{0.190} & 0.221 & 0.284 & \textbf{0.136} & \textbf{0.189} & 0.221 & 0.283 & \textbf{0.136} & \textbf{0.189} \\
		& 336 & 0.272 & 0.330 & \textbf{0.173} & \textbf{0.232} & 0.269 & 0.326 & \textbf{0.171} & \textbf{0.230} & 0.267 & 0.324 & \textbf{0.170} & \textbf{0.229} & 0.266 & 0.321 & \textbf{0.168} & \textbf{0.228} & 0.266 & 0.321 & \textbf{0.169} & \textbf{0.228} \\
		& 720 & 0.342 & 0.388 & \textbf{0.227} & \textbf{0.283} & 0.339 & 0.384 & \textbf{0.225} & \textbf{0.281} & 0.338 & 0.382 & \textbf{0.224} & \textbf{0.280} & 0.337 & 0.381 & \textbf{0.223} & \textbf{0.279} & 0.338 & 0.381 & \textbf{0.224} & \textbf{0.280} \\
		\bottomrule
	\end{tabular}
	\caption{MSE and MAE Comparison of DLinear and OCE-TS Under Different Noise Levels}
	\label{Table_noise}		
\end{table*}

\subsection{Lookback Window Sizes}
\label{app:Lookback}
Figure~\ref{fig:lookback-all} presents the prediction performance across seven different datasets under varying lookback window sizes.
For the ILI dataset, prediction lengths are set as $\{24, 36, 48, 60\}$ and lookback window sizes are selected from $\{52, 78, 104, 130, 156, 208\}$. For the remaining datasets, prediction lengths are chosen from $\{96, 192, 336, 720\}$, with lookback windows drawn from $\{48, 96, 192, 336, 504, 720\}$.

The experimental results demonstrate that the impact of different lookback window sizes on prediction performance exhibits significant dataset dependency. For most datasets including ETT and Weather, increasing the lookback window effectively reduces prediction errors (MSE / MAE), indicating that longer historical data helps capture long-term dependencies, though with diminishing returns - for instance, the ETTh series shows limited improvement when the window exceeds 336. In contrast, more volatile datasets like Exchange and period-sensitive datasets like ILI show greater sensitivity to window size, requiring case-specific optimization. Furthermore, short-term predictions (e.g., 96 / 192) generally outperform long-term forecasts (e.g., 336 / 720). While expanding the lookback window typically enhances performance, it necessitates balancing computational costs against diminishing returns. We recommend using medium-sized windows (192 / 336) as a baseline for specific data characteristics, followed by experimental fine-tuning. These findings provide guidance for balancing historical information utilization in time series forecasting.

\subsection{Parameter Sensitivity}
\label{app:Parameter}
The OCE-TS method incorporates two critical hyperparameters: the number of discrete bins (\textit{bin}) and standard deviation ($\sigma$). Our comprehensive parameter sensitivity analysis examines the complete parameter space defined by $\textit{bin} \in \{30,45,75,100\}$ and $\sigma \in \{0.001,0.01,0.1,1\}$, with all other experimental configurations held constant relative to the baseline model. To precisely evaluate individual parameter effects, we implement a controlled variable approach consisting of two analytical phases: (1) performance impact assessment of \textit{bin} variations with fixed $\sigma=0.01$, followed by (2) regulatory effect analysis of $\sigma$ adjustments with fixed $\textit{bin}=100$. This systematic parameter investigation is conducted on the ETT time series dataset collection, employing both MSE and MAE as primary evaluation metrics.

\begin{figure}[!ht]
	\centering
	\includegraphics[width=\linewidth]{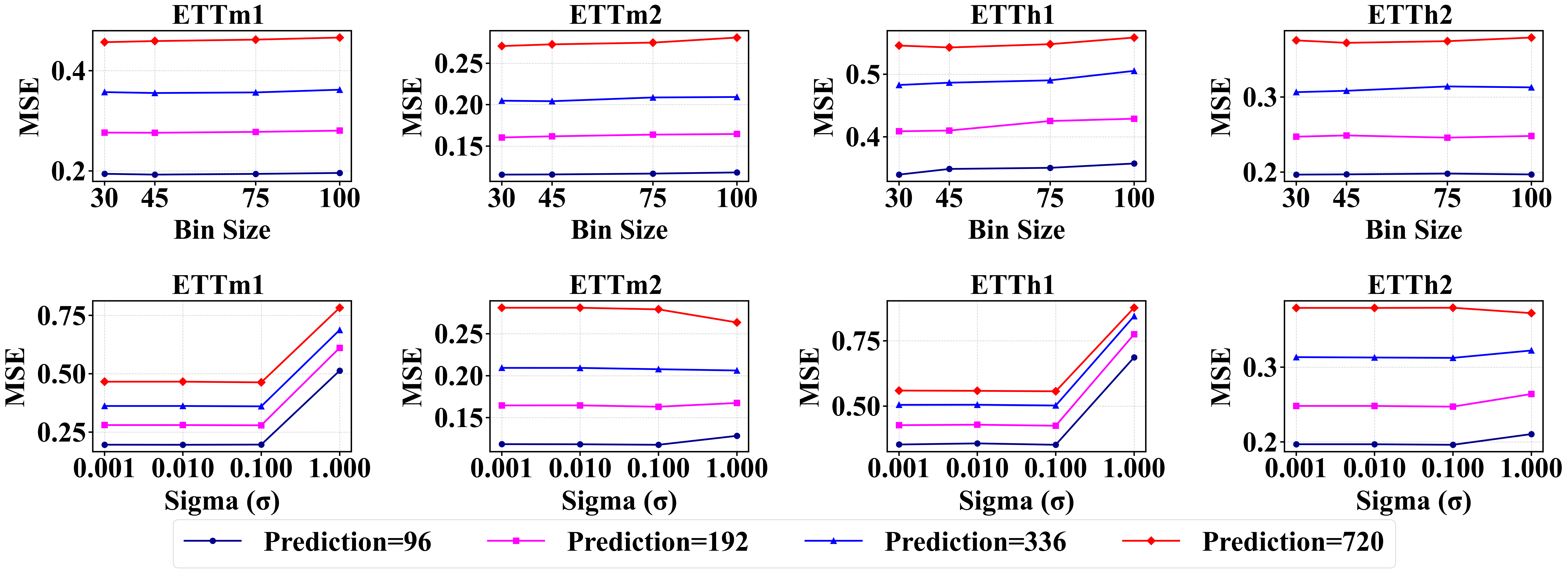}
	\caption{Comparison of MSE performance across different bin sizes (top row) and sigma values (bottom row) for four datasets. The plots show results for prediction lengths 96, 192, 336, and 720.}
	\label{fig:results}
\end{figure}

Experimental results demonstrate that the adjustment of bin size has relatively limited impact on model performance, whereas the value of standard deviation ($\sigma$) significantly affects the prediction results. The default parameter setting of $\text{bin}=75$ and $\sigma=0.01$ achieves consistently strong performance across most datasets and prediction lengths. It is recommended to first fix $\sigma=0.01$ and tune $\text{bin}$ within the range $[75, 100]$. For the ETTh series or long-term forecasting tasks, increasing $\text{bin}$ to 100 may yield better results. If performance fluctuations are observed, further fine-tuning $\sigma$ within the range $[0.005, 0.05]$ is suggested to enhance stability and robustness.

\subsection{Comparison of Loss Functions}
\label{app:Loss}
To provide a more comprehensive comparison between the CE and OCE loss functions, Figure~\ref{fig:ce-oce-comparison-full} presents the complete visualization results across six benchmark datasets under both MSE and MAE metrics.

The experimental results demonstrate that the proposed OCE loss function (marked in blue) significantly outperforms the baseline CE loss function (marked in yellow) on both MSE and MAE metrics. This performance advantage shows consistent improvement across all benchmark datasets, validating the superiority of the OCE loss function for time series forecasting tasks.

\subsection{Model Performance under Different SNRs}
\label{app:SNR}
To assess model robustness under noisy conditions, we conduct experiments by injecting additive Gaussian white noise with variance $\sigma^2 = P_{\text{signal}}/10^{\text{SNR}_{\text{dB}}/10}$ into inputs, evaluating performance across five SNR levels: \SI{-3}{dB}, \SI{0}{dB}, \SI{3}{dB}, \SI{10}{dB}, and \SI{20}{dB}. Here, $P_{\text{signal}}$ is the average signal power, computed as the signal’s mean squared value, used to set the noise variance $\sigma^2$ for the target SNR. The comparative analysis between our method and DLinear is performed on both ETTm2 and Weather datasets, with comprehensive results across multiple prediction lengths summarized in Table~\ref{Table_noise} (MSE / MAE metrics).

As the noise intensity increases (i.e., lower SNR), the performance of DLinear degrades significantly. In contrast, our method maintains consistently lower MSE and MAE across all noise levels and forecast horizons, demonstrating stronger noise robustness.

\subsection{Significance Analysis}  \label{app:siga}
The Nemenyi post-hoc test determines whether two algorithms differ significantly by comparing their average rank difference to the critical distance:
\begin{equation}
	CD = q_\alpha \sqrt{\frac{k(k+1)}{6N}},
\end{equation}
where $q_\alpha$ is the critical value from the Studentized range distribution at significance level $\alpha$, $k$ is the number of compared algorithms, and $N$ is the number of datasets. The performance difference is considered statistically significant when the observed rank difference exceeds $CD$.

The experiments compared OCE-TS with five baseline methods including Autoformer, Dlinear, and TimeBridge across seven benchmark datasets, using both MAE and MSE as evaluation metrics. We set the significance level at 0.05 for this analysis.


\end{document}